\pgfplotsset{width=10cm,compat=1.9}
\renewcommand{\P}{\mathbb{P}}
\newcommand{\N}{\mathbb{N}}
\newcommand{\E}{\mathbb{E}}
\newcommand{\R}{\mathbb{R}}
\newcommand{\calF}{\mathcal{F}}
\newcommand{\calX}{\mathcal{X}}
\newcommand{\calY}{\mathcal{Y}}
\newcommand{\calG}{\mathcal{G}}
\newcommand{\calI}{\mathcal{I}}
\newcommand{\calZ}{\mathcal{Z}}
\newcommand{\calA}{\mathcal{A}}
\newcommand{\calN}{\mathcal{N}}
\newcommand{\calW}{\mathcal{W}}
\newcommand{\calD}{\mathcal{D}}
\newcommand{\calU}{\mathcal{U}}
\newcommand{\Cal}
{\mathrm{Cal}}
\newcommand{\range}{\mathrm{range}}
\newcommand{\wh}[1]{\widehat{#1}}
\newcommand{\wt}[1]{\widetilde{#1}}
\newcommand{\wb}[1]{\overline{#1}}
\newcommand{\err}{\mathrm{err}}
\newcommand{\corr}{\mathrm{Corr}}
\newcommand{\ATE}{\mathrm{CATE}}
\newcommand{\QTE}{\mathrm{QUT}}
\newcommand{\LATE}{\mathrm{LATE}}
\newcommand{\ACD}{\mathrm{ACD}}
\newcommand{\rate}{\mathrm{rate}}
\algrenewcommand\algorithmicrequire{\textbf{Input:}}
\algrenewcommand\algorithmicensure{\textbf{Output:}}
\newtheorem{theorem}{Theorem}
\numberwithin{theorem}{section}
\newtheorem{lemma}[theorem]{Lemma}
\newtheorem{informal}{Informal Theorem}
\newtheorem{prop}[theorem]{Proposition}
\newtheorem{corollary}[theorem]{Corollary}
\newtheorem{ass}{Assumption}
\theoremstyle{definition}
\newtheorem{definition}[theorem]{Definition}
\newtheorem{example}[theorem]{Example}
\theoremstyle{remark}
\newcommand{\cj}[1]{\textcolor{orange}{[CJ: #1]}}
\crefname{prop}{Proposition}{Propositions}
\crefname{rmk}{Remark}{Remarks}
\crefname{cor}{Corollary}{Corollaries}
\crefname{claim}{Claim}{Claims}
\crefname{lemma}{Lemma}{Lemmata}
\crefname{example}{Example}{Examples}
\crefname{corollary}{Corollary}{Corollaries}
\title{Orthogonal Causal Calibration}
\author[1]{Justin Whitehouse}
\author[3]{Christopher Jung} 
\author[1]{Vasilis Syrgkanis} 
\author[2]{Bryan Wilder}
\author[2]{Zhiwei Steven Wu}
\affil[1]{Stanford University}
\affil[2]{Carnegie Mellon University}
\affil[3]{Meta Research}
\date{\today}
\begin{document}
\maketitle
\begin{abstract}

    Estimates of heterogeneous treatment effects such as conditional average treatment effects (CATEs) and conditional quantile treatment effects (CQTEs) play an important role in real-world decision making. Given this importance, one should ensure these estimates are calibrated. 
    While there is a rich literature on calibrating estimators of non-causal parameters, very few methods have been derived for calibrating estimators of causal parameters, or more generally estimators of quantities involving nuisance parameters. 

    In this work, we develop general algorithms for reducing the task of causal calibration to that of calibrating a standard (non-causal) predictive model.
    Throughout, we study a notion of calibration defined with respect to an arbitrary, nuisance-dependent loss $\ell$, under which we say an estimator $\theta$ is calibrated if its predictions cannot be changed on any level set to decrease loss.
    For losses $\ell$ satisfying a condition called \textit{universal orthogonality}, we present a simple algorithm that transforms partially-observed data into generalized pseudo-outcomes and applies any off-the-shelf calibration procedure.
    For losses $\ell$ satisfying a weaker assumption called \textit{conditional orthogonality}, we provide a similar sample splitting algorithm the performs empirical risk minimization over an appropriately defined class of functions. 
    Convergence of both algorithms follows from a generic, two term upper bound of the calibration error of any model: one term that measures the error in estimating unknown nuisance parameters and another that measures calibration error in a hypothetical world where the learned nuisances are true. We demonstrate the practical applicability of our results in experiments on both observational and synthetic data.
    Our results are exceedingly general, showing that essentially any existing calibration algorithm can be used in causal settings, with additional loss only arising from errors in nuisance estimation.

    % In this work, we provide a general framework for calibrating predictors involving nuisance estimation. We consider a notion of calibration defined with respect to an arbitrary, nuisance-dependent loss $\ell$, under which we say an estimator $\theta$ is calibrated if its predictions cannot be changed on any level set to decrease loss.
    % We prove generic upper bounds on the calibration error of any causal parameter estimate $\theta$ with respect to any loss $\ell$ using a concept called \textit{Neyman Orthogonality}.
    % Our bounds involve two decoupled terms --- one measuring the error in estimating the unknown nuisance parameters, and the other representing the calibration error in a hypothetical world where the learned nuisance estimates were true. 
    % We use our bound to analyze the convergence of two sample splitting algorithms for causal calibration. One algorithm, which applies to \textit{universally orthogonalizable} loss functions, transforms the data into generalized pseudo-outcomes and applies an off-the-shelf calibration procedure. The other algorithm, which applies to \textit{conditionally orthogonalizable} loss functions, extends the classical uniform mass binning algorithm to include nuisance estimation.
    % Our results are exceedingly general, showing that essentially any existing calibration algorithm can be used in causal settings, with additional loss only arising from errors in nuisance estimation.

\end{abstract}

\thispagestyle{empty} \setcounter{page}{0}
\clearpage
\tableofcontents
\thispagestyle{empty} \setcounter{page}{0}
 \clearpage

\section{Introduction}
\label{sec:intro}

Estimates of heterogeneous causal effects such as conditional average treatment effects (CATEs), conditional average causal derivatives (CACDs), and conditional quantile treatment effects (CQTEs) play a pervasive role in understanding various statistical, scientific, and economic problems. 
% For instance, neural causal models have been used to estimate counterfactual glycemic responses to interventions for diabetes patients~\citep{zou2024hybrid}. Likewise, localized de-biased methods have been used to estimate treatment effects of 401(k) eligibility and participation on an individual’s net financial assets~\citep{chernozhukov2004effects, kallus2019localized, chernozhukov2024applied}. 
Due to the partially-observed nature of causal data, estimating causal quantities is naturally more difficult than estimating non-causal ones. Despite this, a vast literature has developed focused on estimating causal effects using both classical statistical methods \citep{kennedy2022semiparametric, kennedy2023semiparametric, nie2021quasi, abrevaya2015estimating,froolich2010estimation} and modern machine learning (ML) approaches \citep{foster2023orthogonal, chernozhukov2022riesznet, chernozhukov2018double, semenova2021debiased, fan2022estimation,heckman2005structural}. In this work, we focus on an complementary and relatively understudied direction --- developing algorithms for \textit{calibrating} estimates of heterogeneous causal effects.

Calibration is a notion of model consistency that has been extensively studied both in theory and practice by the general ML community~\citep{guo2017calibration, minderer2021revisiting, lakshminarayanan2017simple, malinin2018predictive, thulasidasan2019mixup, hendrycks2019augmix, ovadia2019can, hebert2018multicalibration, dawid1985calibration, dawid1982well, foster1998asymptotic}. Classically, a model $\wh{\theta}(X)$ predicting $Y$ is said to be calibrated if, when it makes a given prediction, the observed outcome on average is said prediction, i.e.\ if
\[
\E[Y \mid \wh{\theta}(X)=v]= v \qquad \text{for any } v \in \mathrm{range}(\wh{\theta}).
\]
There are deep connections between calibrated predictions and optimal downstream decision-making~\citep {noarov2023high, noarov2023scope}. Namely, for many utility functions, when a decision maker is equipped with calibrated predictions, the optimal mapping from prediction to decision simply treats those predictions as if they were the ground truth. 
Given that models predicting heterogeneous causal effects or counterfactual quantities are leveraged in decision-making in domains such as medicine~\citep{feng2012generalized, kent2018personalized}, advertising and e-commerce~\citep{hitsch2024heterogeneous, gao2024causal}, and public policy~\citep{wilder2024learning}, it is essential that these estimates are calibrated. In non-causal settings, there exists a plethora of simple algorithms for performing calibration such of algorithms such as histogram binning~\citep{gupta2021distribution}, isotonic calibration~\citep{barlow1972isotonic, van2023causal}
, linear calibration, and Platt scaling~\citep{platt1999probabilistic, niculescu2005predicting}.
These algorithms are all univariate regression procedures, regressing an observed outcome $Y$ onto a model prediction $\wh{\theta}(X)$ over an appropriately-defined function class. However, given the partially-observed nature of causal problems, it is non-obvious how to extend these algorithms for calibrating estimates of general causal effects.

To illustrate the importance and difficulty of calibrating causal models, we look at the example of a doctor using model predictions to aid in prescribing medicine to prevent heart attacks. The doctor may have access to observations of the form $Z = (X, A, Y)$, where $X \in \calX$ represent covariates, $A \in \{0, 1\}$ is a binary treatment, and $Y = AY(1) + (1 - A)Y(0) \in \R$ represents an observed outcome. For instance, a doctor may use a patient's age, height, and blood pressure (covariates $X$) to decide whether or not to give medicine (treatment $A$) to a patient in order to prevent a heart attack (outcome $Y$). To aid in their decision making, the doctor may want to employ a model $\theta(X)$ predicting the \textit{conditional average treatment effect (CATE)}, which is defined as
\[
\theta_{\ATE}(x) := \E[Y(1) - Y(0) \mid X = x]
\]
In this example, the CATE measures the difference in probability of a heart attack under treatment and control. Naturally, a doctor may decide to prescribe medication to a patient if $\theta(X) < 0$, i.e.\ if the model predicts the probability of a heart attack will go down under treatment. If the model is miscalibrated---for instance if we have
\[
\E[Y(1) - Y(0) \mid \theta(X) = -0.15] > 0
\]
the doctor may actually harm patients by prescribing medication! If $\theta(X)$ were calibrated, such a risk would not a occur. However, the doctor cannot run one of the aforementioned calibration algorithms because they only ever observe the outcome under treatment $Y(1)$ or control $Y(0)$, never the target individual treatment effect $Y(1) - Y(0)$.

% However, due to the partially-observed nature of causal data, one cannot directly apply the aforementioned calibration algorithms.
% For instance, in the above example the doctor only observes the outcome under treatment $Y(1)$ or control $Y(0)$, never the target individual treatment effect $Y(1) - Y(0)$.
To work around this, one typically needs to estimate \textit{nuisance parameters} --- functions of the underlying data-generating distribution that are used to ``de-bias'' partial observations (either $Y(1)$ or $Y(0)$ in the previous example) into \textit{pseudo-outcomes}: quantities that, on average, look like the target treatment effect (here $Y(1) - Y(0)$). For the CATE, these nuisance functions are the \textit{propensity score} $\pi_0(x) := \P(A = 1 \mid X = x)$ and the \textit{expected outcome mapping} $\mu_0(a, x) := \E[Y \mid X = x, A = a]$, and pseudo-outcomes are of the form~\citep{kennedy2023towards}
\[
\chi_{\ATE}((\mu, \pi); W) := \mu(1, X) - \mu(0, X) + \left(\frac{A}{\pi(X)} - \frac{1 - A}{1 - \pi(X)}\right)(Y - \mu(A, X)).
\]
Naturally, the nuisances and pseudo-outcomes will naturally be different for other heterogeneous causal effects and can typically be derived by the statistician.

What is known about causal calibration? In the setting of the above CATE example, if we assume \textit{conditional ignorability}\footnote{This is the condition that the potential outcomes $Y(0)$ and $Y(1)$ are conditionally independent of the treatment $A$ given covariates $X$, i.e.\ $Y(0), Y(1) \perp A \mid X$}, \citet{van2023causal} show that by estimating the propensity score and expected outcome mappings, one can perform isotonic regression of the doubly-robust pseudo-outcomes $\chi_{\ATE}$ onto model predictions $\theta(X)$ to ensure asymptotic calibration.
However, this result is limited in application, and doesn't extend to other heterogeneous effects of interest like conditional average causal derivatives, conditional local average treatment effects, or even CATEs in the presence of unobserved confounding (in which case one may need instrumental variables).
Furthermore, the result of \citet{van2023causal} don't allow a learner to calibrate using other algorithms like Platt scaling, histogram binning, or even simple linear regression.
While one could derive individual results for calibrating an estimate of ``parameter A'' under ``algorithm B'', this would likely lead to a repeated reinventing of the wheel. 
The question considered in this paper is thus as follows: can one construct a framework that allows a statistician to calibrate estimates of \textit{general heterogeneous causal effects} using an \textit{arbitrary, off-the-shelf calibration algorithms}?

\subsection{Our Contributions}

In this paper, we reduce the problem of \textit{causal calibration}, or calibrating estimates of heterogeneous causal effects, to the well-studied problem of calibrating non-causal predictive models. We assume the scientist is interested in calibrating an estimate $\theta(X)$ of some heterogeneous causal effect $\theta_0(X)$ that can be specified as the conditional minimizer of a loss function, i.e.\ 
$\theta_0(x) \in \arg\min_{\nu}\E[\ell(\nu, g_0; W) \mid X = x]$.
Here, $\ell(\nu, g; w)$ involves a nuisance component, and we assume that there exists a true, unknown nuisance parameter $g_0$. We say a $\theta$ is perfectly calibrated with respect to $\ell$ if $\E[\partial \ell(\theta(X), g_0; Z) \mid \theta(X)] = 0$,\footnote{Here, $\partial \ell(\theta(x), g; z)$ denotes the partial derivative of $\ell$ with respect to its first argument $\theta(x)$, i.e.\ the quantity is defined by $\frac{\partial}{\partial \nu}\ell(\nu, g; z) \vert_{\nu = \theta(x)}$.} and that $\theta$ is approximately calibrated if the $L^2$ error
\[
\Cal(\theta, g) = \E\left(\E[\partial\ell(\theta(X), g; Z) \mid \theta(X)]^2\right)^{1/2}
\]
is small when $g$ is set as the true nuisance $g_0$. In words, $\theta$ is calibrated if the predictions of $\theta$ are ``unimprovable'' with respect to current level sets, a condition similar to the one outlined by \citet{noarov2023scope} and the concept of swap regret \citep{blum2007external, foster1998asymptotic}.

As a concrete example, $\theta_{\ATE}(X)$ is the conditional minimizer of the \textit{doubly-robust loss} $\ell_{\ATE}(\nu, (\mu, \pi); z)$ given by $\ell_{\ATE}(\nu, (\mu, \pi); z) := \frac{1}{2}\Big(\nu - \chi_{\ATE}((\mu, \pi); z)\Big)^2$,
where $\pi_0$, $\mu_0$, and $\chi_{\ATE}$ are as defined above. Perfect calibration for a CATE estimate $\theta$ becomes $\E[Y(1) - Y(0) \mid \theta(X)] = \theta$, and approximate $L^2$ calibration becomes $\Cal(\theta, (\mu_0, \pi_0)) := \E\left(\left(\theta(X) - \E[Y(1) - Y(0) \mid \theta(X)]\right)^2\right) \approx 0$.

Our reduction and algorithms are based around a robustness condition on the underlying loss $\ell$ called \textit{Neyman Orthogonality}. We say a loss $\ell$ is Neyman orthogonal if
\begin{equation}
\label{eq:neyman_ref}   
D_g\E[\partial \ell(\theta_0(X), g_0; Z) \mid X](g - g_0) = 0,
\end{equation}
i.e.\ if $\ell$ is insensitive to small estimation errors in the nuisance parameters $g_0$ and loss minimizing parameter $\theta_0$.\footnote{Here, $D_g$ denotes a Gateaux derivative.}
In our work, we consider two mild variants of Neyman orthogonality. The first is \textit{universal orthogonality} \citep{foster2023orthogonal}, a stronger notion of orthogonality in which $\theta_0(x)$ in Equation~\eqref{eq:neyman_ref} can be replaced by \textit{any} estimate $\theta(x)$. Examples of causal effects that minimize such a loss are CATEs, conditional average causal derivatives (CACD), and conditional local average treatment effects (CLATEs). Another condition we consider is called \textit{conditional orthogonality}, in which instead of conditioning of covariates $X$ in Equation~\eqref{eq:neyman_ref}, one conditions on a post-processing $\varphi(X)$ of covariates instead.  Conditional orthogonality is a natural condition for calibration tasks, as we ultimately care about assessing the quality of an estimator $\theta(X)$ conditional on its own predictions. An example causal parameter that can be specified as the minimizer of a conditionally orthogonal loss is the conditional quantile under treatment (CQUT). Our specific contributions, broken down by assumption on the loss, are as follows:
\begin{enumerate}
\item In Section~\ref{sec:universal}, we study calibration with respect to \textit{universally orthogonal} loss functions.   
We present a sample splitting algorithm (Algorithm~\ref{alg:sample-split-universal}) that uses black-box ML algorithms to estimate unknown nuisance functions on the first half of the data, transforms the second half into de-biased ``pseudo-outcomes'', and then applies any off-the-shelf calibration algorithm on the second half of the data treating pseudo-outcomes as true labels. We provide high-probability, finite-sample guarantees for our algorithm that only depend on convergence rates for the black-box nuisance estimation and calibration algorithms. We additionally provide a cross-calibration algorithm (Algorithm~\ref{alg:cross-cal-universal}) that makes more efficient use of the data and show that our calibration algorithms do not significantly increase the risk of a model (Appendix~\ref{app:loss_red}).
\item In Section~\ref{sec:conditional}, we study the calibration of models predicting effects that minimize a \textit{conditionally orthogonal} loss function.
In this more general setting, we describe a similar sample splitting algorithm that uses half of the data lo learn nuisances and then runs an off-the-shelf algorithm for ``generalized' calibration on the second half the data. We provide examples of such calibration algorithms in Appendix~\ref{app:gen_cal}, which are motivated by extending algorithms like isotonic calibration and linear calibration to general losses. We also prove finite-sample guarantees for this algorithm and construct a similar cross-calibration algorithm (Algorithm~\ref{alg:cross-cal-cond}).
\item Lastly Section~\ref{sec:experiments}, we empirically evaluate the performance of the aforementioned algorithms on a mix of observational and semi-synthetic data. In one experiment, we use observational data alongside Algorithm~\ref{alg:cross-cal-universal} to show how one can calibrate models predicting the CATE of 401(k) eligibility and CLATE of 401(k) participation on an individual's net total financial assets. Likewise, in another experiment on synthetically generated data, we show how Algorithm~\ref{alg:cross-cal-cond} can be used to both decrease $L^2$ calibration error and average loss for models predicting CQUTs.
\end{enumerate}

The design of our algorithms is inspired by a generic upper bound on the calibration error of any estimator $\theta(X)$. We show that the  $L^2$ calibration error of any estimator can be bounded above by two, decoupled terms: one involving nuisance estimation error and another representing calibration error under the orthogonalized loss evaluated at the learned nuisances.
\begin{informal}
    \label{thm:inf}
    Suppose $\theta : \calX \rightarrow \R$ is some estimator, $\ell$ is some base loss, and $\wt{\ell}$ is the corresponding \textit{orthogonalized loss}. Let $g_0$ denote the true, unknown nuisance functions, and $g$ arbitrary nuisance estimates. We have
    \[
    \underbrace{\Cal(\theta, g_0)}_{\text{$L^2$ calibration error under $\ell(\theta(x), g_0; w)$}} \lesssim \underbrace{\err(g, g_0)}_{\text{nuisance estimation error}} \qquad\quad + \underbrace{\Cal(\theta, g)}_{\text{$L^2$ calibration error under $\ell(\theta(x), g; w)$}}
    \]
\end{informal}
\noindent We view Informal Theorem~\ref{thm:inf} as a ``change of measure'' or ``change of nuisance'' result, allowing the learner to essentially pretend our learned nuisances represent reality while only paying a small error for misestimation. In many settings where there are two nuisances functions (e.g.\ $g = (\eta, \zeta)$ and $g_0 = (\eta_0, \zeta_0)$), we will have $\err(g, g_0) = O(\|(\eta - \eta_0)\cdot(\zeta - \zeta_0)\|)$, and thus the error will be small if we estimate at least one nuisance function sufficiently well. This in particular is the case for the aforementioned CATE, where $g_0 = (\mu_0, \pi_0)$, as noted above. More broadly, we will have $\err(g, g_0) = O(\|g - g_0\|^2)$. While simple in appearance, the above bound depends on deep connections between Neyman orthogonality and calibration error. We prove a bound of the above form in each of the main sections of the paper. This decoupled bound naturally suggest using some fraction of the data to learn nuisance parameters, thus minimizing the first term, and then using fresh data with the learned nuisances to perform calibration, thus minimizing the second term.

Given that our framework and results are quite general, we go through concrete examples to help with building intuition. 
%We consider four practically relevant examples of parameters to calibrate: conditional average treatment effects, conditional average causal derivatives, conditional local average treatment effects, and conditional quantile treatment effects. 
In fact, we show that the work of \citet{van2023causal} can be seen as a special instantiation of our framework.

\subsection{Related Work}
\paragraph{Calibration:} Important to our work is the vast literature on calibration. Calibration was considered first in the context producing calibrated probabilities, both in the online \citep{dawid1982well, foster1998asymptotic} and i.i.d.\ \citep{platt1999probabilistic, zadrozny2001obtaining} settings, but has since been considered in other contexts such as distribution calibration \citep{song2019distribution}, threshold calibration~\citep{sahoo2021reliable, vovk2002line}, and parity calibration~\citep{chung2023parity}. Calibration is typically orthogonal to model training, and usually occurs as a simple post-processing routine. Some well-known algorithms for post-hoc calibration include Platt scaling \citep{platt1999probabilistic, gupta2023online}, histogram binning \citep{zadrozny2001obtaining, gupta2021distribution}, and isotonic regression \citep{zadrozny2002transforming, barlow1972isotonic}. 
Many of these algorithms simultaneously offer strong theoretical guarantees (see \citet{gupta2022post} for an overview) and strong empirical performance when applied to practically-relevant ML models \citep{guo2017calibration}. 
We view our work as complementary to existing, non-causal results on calibration. Our two-step algorithm allows a practitioner to directly apply any of the above listed algorithms, inheriting existing error guarantees so long as nuisance estimation is efficiently performed.

%Noarov and Roth \citep{Noarov023} characterize which distributional properties can be calibrated --- they prove that a property can be calibrated if and only if a property is elicitable\jwcomment{Does this mean ``can be specified as a conditional minimizer of a loss''?}\cj{I think elicitiable means that there is an }. Their algorithm to calibrate some given property involves iteratively fixing the predictor until the predictor becomes calibrated, and they show that each iterative step of fixing the predictor always strictly decreases property's associated loss, which is often referred to as a scoring function in the elicitation literature. In their work, they show that this associated loss for a property can be taken to be just the antiderivative of the actual property of interest. We use their work as an inspiration and define a notion of calibration with respect to any loss function, and we provide a simple algorithm that enjoys finite-sample calibration error guarantees
%\jwcomment{Do we use this work as an inspiration? Maybe we can mention the authors define a general notion of calibration analogously.}

\paragraph{Double/debiased Machine Learning:} In our work, we also draw heavily from the literature on double/debiased machine learning \citep{chernozhukov2018double, chernozhukov2022debiased, chernozhukov2022automatic}. Methods relating to double machine learning aim to eschew classical non-parametric assumptions (e.g.\ Donsker properties) on nuisance functions, often through simple sample splitting schemes~\citep{hasminskii1979nonparametric, kennedy2022semiparametric, bickel1993efficient}. In particular, if target population parameters are estimated using a Neyman orthogonal loss function \citep{neyman1979calpha, foster2023orthogonal}, then these works show that empirical estimates of the population parameters converge rapidly to either a population or conditional loss minimizer.

Of the various works related to double/debiased machine learning, we draw most heavily on ideas from the framework of orthogonal statistical learning \citep{foster2023orthogonal}. In their work, \citet{foster2023orthogonal} develop a simple two-step framework for statistical learning in the presence of nuisance estimation. In particular, they show that when the underlying loss is Neyman orthogonal, then the excess risk can be bounded by two decoupled error terms: error from estimating nuisances and the error incurring from applying a learning algorithm with a fixed nuisance. 
Following its introduction, the orthogonal statistical learning framework has found applications in tasks such as the design of causal random forests \citep{oprescu2019orthogonal} and causal model ensembling via Q-aggregation \citep{lan2023causal}. In this work, we show that central ideas from orthogonal statistical learning are naturally applicable to the problem of calibrating estimators of causal parameters.  

Lastly, our work can be seen as a significant generalization of existing results on the calibration of causal parameters. Primarily, we compare our results to the work of \citet{van2023causal}. In their work, the authors construct a sample-splitting scheme for calibrating estimates of conditional average treatment effects (CATEs). The specific algorithm leveraged by the authors uses one half of the data to estimate nuisance parameters, namely propensities and expected outcomes under control/treatment. After nuisances are learned, the algorithm transforms the second half of the data into pseudo-observations and runs isotonic regression as a calibration procedure. Our results are applicable to estimates of any causal parameter that can be specified as the population minimizer of a loss function, not just CATEs. Additionally, our generic procedure allows the scientist to plug in any black-box method for calibration, not just a specific algorithm such as isotonic regression. Likewise, our work is also significantly more general than the work of \citet{leng2024calibration}, who provide a maximum-likelihood based approach for performing \textit{linear calibration}, a weaker notion of calibration, of CATE estimates. %Lastly, we mention a recent work due to \citet{kern2024multi}, which considers the problem of ensuring CATE estimates are multi-accurate.

\section{Calibration of Causal Effects}
\label{sec:calibration}

We are interested in calibrating some estimator/ML model $\theta \in \Theta \subset \{f : \calX \rightarrow \R\}$ that is predicting some heterogeneous causal effect $\theta_0(x)$. We assume $\theta_0$ is specified as the conditional minimizer of some loss $\ell(\theta(x), g; z)$, i.e.\
\[
\theta_0(x) \in \arg\min_{\nu \in \R}\E[\ell(\nu, g_0; Z) \mid X = x].
\]
We assume $\ell: \R \times \calG \times \calZ \rightarrow \R$ is some generic loss function involving nuisance.  We assume $\calZ$ is some space containing observations, and write $Z$ as a prototypical random element from this space, and $P_Z$ as the distribution on $\calW$ from which $W$ is drawn. In the above, $\calG$ is some space of nuisance functions, which are of the form $g : \calW \rightarrow \R^d$. We assume that there is some true nuisance parameter $g_0 \in \calG$, but that this parameter is unknown to the learner and must be estimated.  We generally assume $\calG$ is a subset of $L^2(P_W) := L^2(\calW, P_W)$, and so as a norm we can consider the $\|g\|_{L^2(P_W)} := \int_{\calW} \|g(w)\|^2 P_W(dw)$, where $\|\cdot\|$ denotes the standard Euclidean norm on $\R^d$. Given a function $T : \calG \rightarrow \R$ and functions $f, g \in \calG$, we let $D_g T(f)(g)$ and $D_g^2 T(f)(g, g)$ denote respectively the first and second Gateaux derivatives of $T$ at $f$ in the direction $g$.

We typically have $Z = (X, A, Y)$, where $X \in \calX$ represents covariates, $A \in \calA$ represents treatment, and $Y \in \R$ represents an outcome in an experiment. More generally, we assume the nested structure $\calX \subset \calW \subset \calZ$, where $\calX$ intuitively represents the space of covariates or features, $\calW$ represents an extended set of parameters on which the true nuisance parameter may also depend (e.g.\ instrumental variables, whether or not an individual actually accepted a treatment), and $\calZ$ may contain additional observable information (e.g.\ outcome $Y$ under the given treatment). We write the marginal distributions of $X$ and $W$ respectively as $P_X$ and $P_W$. We typically write $\ell(\theta, g; w)$ instead of $\ell(\theta(x), g; w)$ for succinctness, and we let $\partial \ell(\theta, g; w) := \frac{\partial}{\partial \theta(x)}\ell(\theta(x), g; w)$ be the partial derivative of $\ell$ with respect to it's first argument, $\theta(x)$.

In our work, we consider a general definition of calibration error that holds for any loss involving a nuisance component. This general notion of calibration, which is similar to notions that have been considered in \citet{noarov2023scope, gopalan2024swap, foster1999regret}, and \citet{globus2023multicalibration}, implies an estimator cannot be ``improved'' on any level set of its prediction.

\begin{definition}
\label{def:cal}
Let $\theta : \calX \rightarrow \R$ be an estimator, $\ell : \R \times \calG \times \calZ \rightarrow \R$ a nuisance-dependent loss function, and $g \in \calG$ a fixed nuisance parameter. We define the $L^2$ calibration error of $\theta$ with respect to $\ell$ and $g$ to be
\begin{align*}
\Cal(\theta, g) &:= 
\E\left(\E\left[\partial \ell(\theta, g; Z) \mid \theta(X)\right]^2\right)^{1/2} \\
&= \left(\int_{\calX} \E[\partial \ell(\theta, g; Z) \mid \theta(X) = \theta(x)]^2 P_X(dx)\right)^{1/2}.
\end{align*}
We say $\theta$ is perfectly calibrated if $\Cal(\theta, g_0) = 0$, where $g_0$ is the true, unknown nuisance parameter.
\end{definition}

In the special case where the loss $\ell(\nu, y)$ is the squared loss and doesn't involve a nuisance component, we recover a more classical notion of calibration error.

\begin{definition}[Classical calibration error]
\label{def:classic}
Let $\theta : \calX \rightarrow \R$ be a fixed estimator, and let $(X, Y) \sim P$, where $P$ is some arbitrary distribution on $\calX \times \R$. The \textit{classical $L^2$ calibration error} is defined by
\[
\Cal(\theta; P) := \int_{\calX}\left(\theta(x) - \E_{P_{Y}}[Y \mid \theta(X) = \theta(x)]\right)^2 P_X(dx),
\]
where we make dependence on the underlying distribution $P$ clear for convenience. 

\end{definition}

We will leverage the classical $L^2$ calibration error in the sequel when reasoning about the convergence of our sample splitting algorithm. The key for now is that Definition~\ref{def:cal} generalizes the above definition to arbitrary losses involving a nuisance component.

We are always interested in controlling $\Cal(\theta, g_0)$. In words, $\theta$ is calibrated if, on the level set $\{x \in \calX : \theta(x) = \nu\}$, there is no constant value $\omega \in \R$ we can switch the prediction $\theta(x)$ to to obtain lower loss. We can gleam further semantic meaning from looking at several additional examples below.

\begin{example}
\label{eg:cal}
Below, we almost always assume observations are of the form $Z = (X, A, Y)$, where $X$ are covariates, $A \in \{0, 1\}$ or $[0, 1]$ indicates treatment, and $Y = \sum_{a}Y(a)\mathbbm{1}_{A = a}$ indicates an outcome. We assume conditional ignorability of the treatment given covariates. The one exception is for conditional local average treatment effects, when assigned treatment may be ignored by the subject. 
%\swcomment{can we start with CATE and keep it short? it would be good to have a formal treatment here}
\begin{enumerate}
    \item{\textbf{Conditional Average Treatment Effect:}} Recalling from earlier that the CATE $\theta_{\ATE}(x) := \E[Y(1) - Y(0) \mid X = x]$ can be specified as the conditional minimizer of the doubly-robust loss $\ell_{\ATE}$ given by $\ell_{\ATE}(\nu, (\mu, \pi); z) := \frac{1}{2}\left(\nu - \chi_{\ATE}((\mu, \pi); z)\right)^2$ with $\chi_\ATE$ given by
    \begin{equation}
    \label{eq:pseudo_CATE}
    \chi_{\ATE}((\mu, \pi); z) := \mu(1, x) - \mu(0, x) + \left(\frac{a}{\pi(x)} - \frac{1 - a}{1 - \pi(x)}\right)(y - \mu(a, x)).
    \end{equation}
    Here, the true nuisances are $\pi_0(x) := \P(A = 1 \mid X = x)$ and $\mu_0(a, x) := \E[Y \mid X=x, A = a]$. It is clear perfect calibration becomes $\E[Y(1) - Y(0) \mid \theta(X)] = \theta(X)$.  
    \item{\textbf{Conditional Average Causal Derivative:}} In a setting where treatments are not binary, but rather continuous (e.g.\ $A \in [0, 1]$), it no longer makes sense to consider treatment effects as a difference. Instead, we can consider the \textit{conditional average causal derivative}, which is defined by
    \[
    \theta_{\ACD}(x) := \E[\partial_a Y(A) \mid X]
    \]
    $\theta_{\ACD}$ is in fact the conditional minimizer of the loss $\ell_{\ACD}$ given by
    \[
    \ell_{\ACD}(\theta, g; z) = \frac{1}{2}(\theta(x) - \chi_{\ACD}(g; Z))^2 \qquad \chi_{\ACD}(g; Z) = \partial_a \mu(a, x) + \frac{\partial_a \pi(a \mid x)}{\pi(a \mid x)}(y - \mu(a , x))
    \]
    where $g_0 = (\mu_0, \pi_0)$, $\mu_0(a, x) = \E[Y \mid X = x, A = a]$ is again the expected outcome mapping, $\pi_0(a \mid x)$ is the \textit{density} of the treatment given covariates (we could alternatively directly estimate the nuisance $\zeta_0(a, x) = \frac{\partial_a \pi_0(a \mid x)}{\pi_0(a \mid x)}$ --- more on this later). Naturally, $\theta$ is perfectly calibrated with respect to $\ell_{\ACD}$ if 
    \[
    \theta(X) = \E\left(\partial_a \E[Y(A) \mid X] \mid \theta(X)\right).
    \]

    \item{\textbf{Conditional Local Average Treatment Effect:}} In settings with non-compliance, the prescribed treatment $D \in \{0, 1\}$ given to an individual may not be equivalent to the received treatment $A \in \{0, 1\}$. Formally, we have $Z = (X, D, A, Y)$, where we assume $A = A(1) D + A(0)(1 - D)$, and $Y = A Y(1) + (1 - A) Y(0)$, where $A(d), Y(a)$ represent potential outcomes for treatment assignment $a, d \in \{0, 1\}$. We also assume monotonicty, i.e.\ that $D(1) \geq D(0)$, and that the propensity for the recommended treatment $\pi_0(x) := \P(D = 1 \mid X = x)$ is known. The parameter of interest here is
    \[
    \theta_{\LATE}(x) := \E[Y(1) - Y(0) \mid A(1) > A(0), X = x],
    \]
    which is identified (following standard computations, see \citet{lan2023causal}) as 
    \begin{equation}
    \label{eq:late_ident}
    \theta_{\LATE}(x) = \frac{\E[Y \mid A = 1, X = x] - \E[Y \mid A = 0, X = x]}{\E[D \mid A = 1, X = x ] - \E[D \mid A = 0, X = x]} =: \frac{p_0(x)}{q_0(x)}.
    \end{equation}
    It follows that $\theta_{\LATE}$ conditionally minimizes the following somewhat complicated loss:
    \begin{align}
    \ell_{\LATE}(\theta, g; z) = \frac{1}{2}\left(\theta(x) - \chi_{\LATE}(g; z)\right)^2\qquad \text{where} \nonumber\\ 
    \chi_{\LATE}(g; z) := \frac{p(x)}{q(x)} + \frac{a(d - \pi_0(x))}{q(x)}\left(\frac{p(x)}{q(x)} - \frac{y(d- \pi_0(x))}{a(d - \pi_0(x))}\right).\label{eq:pseudo_LATE}
    \end{align}
    Calibration with respect to $\ell_{\LATE}$  becomes
    \[
    \theta(X) = \E\left[Y(1) - Y(0) \mid D(1) > D(0), \theta(X)\right].
    \]

    \item{\textbf{Conditional Quantile Under Treatment:}} Lastly, we consider the \textit{conditional $Q$th quantile under treatment}, which, assume $Y(1)$ admits a conditional Lebesgue density, is specified as $\theta_{\QTE}(x) = F^{-1}_{Y(1)}(Q \mid X =x)$\footnote{$F_{Y(1)}(\cdot \mid X = x)$ here denotes the conditional CDF of $Y(1)$ given covariates $X$}. More generally, the $Q$th quantile under treatment is specified as
    \[
    \theta_{\QTE}(x) \in \arg\min_{\nu \in \R}\E\left[\ell_{\QTE}(\nu, p_0; Z) \mid X = x\right].
    \]
    In the above, $\ell_{\QTE} : \R \times \calG \times \calZ \rightarrow \R$ denotes the $Q$-\textit{pinball loss}, which is defined as
    \[
    \ell_{\QTE}(\theta, p; z) := a p(x) (y - \theta(x))\left(Q - \mathbbm{1}_{y \leq \theta(x)}\right),
    \]
    where the true, unknown nuisance is the inverse propensity score $p_0(x) := \frac{1}{\pi_0(x)}$, where $\pi_0(x) := \P(A = 1 \mid X = x)$. A direct computation yields that calibration under $\ell_{\QTE}$ becomes
    \[
    \P\left(Y(1) \leq \theta(X) \mid \theta(X)\right) = Q.
    \]
\end{enumerate}
\end{example}

The first three losses considered above are ``easy'' to calibrate with respect to, as  $\ell_{\ATE}, \ell_{\ACD},$ and $\ell_{\LATE}$ are \textit{universally orthogonal} losses, a concept discussed in Section~\ref{sec:universal}. The pinball loss, on the other hand, is the quintessential example of a ``hard'' loss to calibrate with respect to. Handling more complicated losses like this is the subject of Section~\ref{sec:conditional}.

\section{Calibration for Universally Orthogonal Losses}
\label{sec:universal}

In this section, we develop algorithms for calibrating estimates $\theta$ of causal effects $\theta_0$ that are specified as conditional minimizers of \textit{universally orthogonal} losses $\ell$. Universal orthogonality, first introduced in \citet{foster2023orthogonal}, can viewed as a robustness property of losses that are ``close'' to squared losses. 
Heuristically, a loss is universally orthogonal if it is insensitive to small errors in estimating the nuisance functions \textit{regardless} of the current estimate on the conditional loss minimizer, i.e.\ Equation~\eqref{eq:neyman_ref} holds when $\theta_0$ is replaced by any function $\theta : \calX \rightarrow \R$. We formalize this in the following definition.

\begin{definition}[Universal Orthogonality]
\label{def:universal}
Let $\ell : \R \times \calG \times \calZ \rightarrow \R$ be a loss involving nuisance, and let $g_0$ denote the true nuisance parameter associated with $\ell$. We say $\ell$ is universally orthogonal, if for any $\theta : \calX \rightarrow \R$ and $g \in \calG$, we have
\[
D_{g}\E\left[\partial \ell(\theta, g_0; Z) \mid X\right](g - g_0) = 0.
\]
\end{definition}

By direct computation, one can verify that $\ell_{\ATE}, \ell_{\ACD},$ and $\ell_{\LATE}$ all satisfy Definition~\ref{def:universal}, whereas $\ell_{\QTE}$ does not --- this aligns with the idea of universally orthogonal losses behaving like squared loss. The following example gives a general class of losses that are universally orthogonal. 

\begin{example}
\label{eg:universal:gen_score}
    
Assume that we have a vector of nuisances $g_0 = (\eta_0, \zeta_0)$ for some $\eta_0, \zeta_0 : \calW \rightarrow \R^k$. Further, assume there is some function $m(\eta; z)$ such that\footnote{Here, $\zeta_0$ can be thought of a the (conditional) Riesz representer of the linear functional $D_\eta\E[m(\eta; Z) \mid X]$.}
\[
D_\eta\E[m(\eta; Z) \mid X](\eta - \eta_0) = \E[\langle \zeta_0(W), (\eta - \eta_0)(W)\rangle \mid X] \qquad \text{for any } g - g_0 \in \calG.
\]
Then, any loss $\ell(\theta, g; z)$ that obeys the score equation
\begin{equation}
\label{eq:universal:gen_score}
\partial\ell(\theta, g; z) = \theta(x) - \underbrace{m(\eta; z) + \corr(g; z)}_{=:\chi(g; z)}
\end{equation}
with $\E[\corr(g; Z) \mid X] = \E[\langle \zeta(W), (\eta_0 - \eta)(W)\rangle \mid X]$ will satisfy Definition~\ref{def:universal}. We think of $\chi(g; z) := m(\eta; z) - \corr(g; z)$ as representing a generalized pseudo-outcome that, on average, looks like the desired heterogeneous causal effect. In fact, any loss of the above form is equivalent to a loss that looks like $\ell(\nu, g; z) = \frac{1}{2}\left(\nu - \chi(g; z)\right)^2$. 

All universally orthogonal losses we have seen up until this point satisfy can be written in terms of pseudo-outcomes in the canonical form above. We note the particular settings of $\eta_0, \zeta_0, m,$ and $\corr$ for each of these losses in Table~\ref{tab:pseudo}. Typically, we assume the statistician will estimate $\eta_0$ and $\zeta_0$ by plugging in estimates for each unknown constituent function, e.g.\ by plugging in estimates for $\pi_0$ and $\mu_0$ in the CATE example.

\begin{table}[h]
    \centering
    \begin{tabular}{c|c|c|c|c}
    Loss & $\eta_0$ & $\zeta_0$ & $m(\eta; z)$ & $\corr(g; z)$ \\ 
    \hline 
    $\ell_{\ATE}$ & $\mu_0(a, x)$& $\left(\frac{a}{\pi_0(x)} - \frac{1 - a}{1 - \pi_0(x)}\right)$& 
    $\eta(1, x) - \eta(0, x)$
    & $\zeta(a, x) \cdot (y - \eta(a, x))$\\
    $\ell_{\ACD}$ & $\mu_0(a, x)$ & $\frac{\partial_a \pi_0(a \mid x)}{\pi_0(a \mid x)}$  & $\partial_a \mu(a, x)$ & $\zeta(a, x) \cdot (y - \eta(a, x))$ \\
    $\ell_{\LATE}$ & $\frac{p_0(x)}{q_0(x)}$ & $\frac{a (d - \pi_0(x))}{q_0(x)}$ & $\eta(x)$ & $\zeta(a, d, x)\cdot \left(\frac{y(d - \pi_0(x))}{a(d - \pi_0(x))} - \eta(x)\right)$  \\ 
    \end{tabular}
    \caption{Canonical representations of universally orthogonal losses in terms of base nuisances $\eta_0$, Riesz representers $\zeta_0$, and pseudo-outcome components $m(\eta; z)$ and $\corr(g; z)$.}
    \label{tab:pseudo}
\end{table}

% As a concrete example, if we rewrite $\ell_{\ATE}(\theta, (\mu, \pi); z)$ as 
% \[
% \ell_{\ATE}(\theta, (\mu, \zeta); z) := \frac{1}{2}\Big(\theta(x) - \chi((\mu, \zeta); z)\Big)^2\;\; \text{with}\;\; \chi((\mu, \zeta); z) = \underbrace{\mu(1, x) - \mu(0, x)}_{m(\mu; z)} + \underbrace{\zeta(a, x)(y - \mu(a, x))}_{\corr((\mu, \zeta); z)}
% \]
% with $\zeta_0 = \left(\frac{a}{\pi_0(x)} - \frac{1 - a}{1 - \pi_0(x)}\right)$, then $\ell_{\ATE}$ can be seen to satisfy Equation~\eqref{eq:univ:gen_form}. Similar reformulations can be done for $\ell_{\ACD}$ and $\ell_{\LATE}$, but we defer these for later. The quantity $m(\eta; z) + \corr(g; z)$ can be thought of as a generalized pseudo-outcome that, on average, yields the target heterogeneous causal effect.

\end{example}

Universal orthogonality allows to bound $\Cal(\theta, g_0)$ above by two decoupled terms.  The first, denoted $\err(g, g_0; \theta)$, intuitively measures the distance between some fixed nuisance estimate $g \in \calG$ and the unknown, true nuisance parameters $g_0 \in \calG$. This error is second order in nature, generally depending on the squared norm of the nuisance estimation error. The second term is $\Cal(\theta, g)$, and represents the calibration error of $\theta$ in a reality where the learned nuisances $g$ were actually the true nuisances. 
As mentioned in the introduction, we view our bound as a ``change of nuisance'' result (akin to change of measure), allowing the learner to pay an upfront price for nuisance misestimation and then subsequently reason about the calibration error under potentially incorrect, learned nuisances. We prove the following in Appendix~\ref{app:decomp}.

\begin{restatable}{theorem}{thmuniversal}
\label{thm:universal}
Let $\ell : \R \times \calG \times \calZ \rightarrow \R$ be universally orthogonal, per Definition~\ref{def:universal}. Let $g_0 \in \calG$ denote the true nuisance parameters associated with $\ell$. Suppose $D^2_g\E[\partial \ell(\theta, f; Z) \mid X](g - g_0, g - g_0)$ exists for any $g, f \in \calG$. Then, for any $g \in \calG$ and $\theta : \calX \rightarrow \R$, we have
\[
\Cal(\theta, g_0) \leq \frac{1}{2}\err(g, g_0; \theta) + \Cal(\theta, g),
\]
where $\err(g, h; \theta) := \sup_{f \in [g, h]}\sqrt{\E\left(\left\{D_g^2\E\left[\partial \ell(\theta, f; Z) \mid X\right](h - g, h - g)\right\}^2\right)}$.\footnote{For $f, h \in \calG$, we let the interval $[f, h] := \{\lambda(w) f(w) + (1 - \lambda(w))h(w) : \lambda(w) \in [0, 1]\}$.}

\end{restatable}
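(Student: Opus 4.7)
The plan is to use a second-order directional Taylor expansion of the functional $g \mapsto \E[\partial \ell(\theta, g; Z) \mid X]$ around the true nuisance $g_0$, use universal orthogonality to annihilate the linear term, and then pass to $L^2$ norms. The whole argument is essentially a ``change of nuisance'' identity plus two contractions.

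\textbf{Step 1: Set up the functional and Taylor expand.} Fix $x \in \calX$ and define the scalar function $F_x(h) := \E[\partial \ell(\theta, h; Z) \mid X = x]$. For $t \in [0,1]$ let $\phi(t) := F_x(g_0 + t(g - g_0))$. Under the standing hypothesis that $D_g^2 \E[\partial\ell(\theta, f; Z) \mid X](g-g_0, g-g_0)$ exists for every $f \in \calG$, the map $\phi$ is twice differentiable with $\phi'(t) = D_g F_x(g_0 + t(g-g_0))(g - g_0)$ and $\phi''(t) = D_g^2 F_x(g_0 + t(g-g_0))(g-g_0, g-g_0)$. The integral form of Taylor's theorem then gives
\[
F_x(g) \;=\; F_x(g_0) \;+\; D_g F_x(g_0)(g - g_0) \;+\; \int_0^1 (1-t)\, D_g^2 F_x\bigl(g_0 + t(g-g_0)\bigr)(g-g_0, g-g_0)\, dt.
\]

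\textbf{Step 2: Kill the linear term using universal orthogonality.} By Definition~\ref{def:universal}, $D_g F_x(g_0)(g - g_0) = 0$ for every $x$, so, after rearranging,
\[
F_x(g_0) \;=\; F_x(g) \;-\; \int_0^1 (1-t)\, D_g^2 F_x\bigl(g_0 + t(g-g_0)\bigr)(g-g_0, g-g_0)\, dt.
\]
Note crucially that for each $t \in [0,1]$, the function $g_0 + t(g - g_0)$ lies pointwise in the segment $[g, g_0]$ appearing in the definition of $\err$.

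\textbf{Step 3: Apply $\E[\cdot \mid \theta(X)]$ and take $L^2$ norms.} Using the tower property, $\E[\partial \ell(\theta, g_0; Z) \mid \theta(X)] = \E[F_X(g_0) \mid \theta(X)]$ and similarly for $g$. Conditioning the displayed identity on $\theta(X)$, taking $L^2(P_X)$ norms, and applying the triangle inequality yields
\[
\Cal(\theta, g_0) \;\le\; \Cal(\theta, g) \;+\; \left\| \E\!\left[\int_0^1 (1-t)\, D_g^2 F_X\bigl(g_0 + t(g-g_0)\bigr)(g-g_0,g-g_0)\, dt \,\Big|\, \theta(X) \right]\right\|_{L^2(P_X)}.
\]

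\textbf{Step 4: Contract and bound the remainder.} Because conditional expectation is an $L^2$-contraction, the outer $\E[\cdot \mid \theta(X)]$ can be dropped inside the norm. Minkowski's inequality then pulls the norm inside the time integral, and for each $t \in [0,1]$ the function $g_0 + t(g-g_0)$ belongs to $[g, g_0]$, so we may upper bound its norm by $\err(g, g_0; \theta)$. This gives
\[
\Cal(\theta, g_0) \;\le\; \Cal(\theta, g) \;+\; \err(g, g_0; \theta) \int_0^1 (1 - t)\, dt \;=\; \Cal(\theta, g) + \tfrac{1}{2}\err(g, g_0; \theta),
\]
which is precisely the claimed bound.

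The only genuinely delicate piece is justifying the directional Taylor expansion rigorously at the level of the functional $F_x$, which requires that the Gateaux second derivative is continuous (or at least integrable) along the segment $\{g_0 + t(g - g_0) : t \in [0,1]\}$; the assumption that $D_g^2 \E[\partial \ell(\theta, f; Z) \mid X]$ exists for all $f \in \calG$ suffices for the path $\phi(t)$ to be $C^2$ and hence for the integral-remainder form to apply. Everything else---tower property, Jensen/contraction of conditional expectations, Minkowski, and absorbing the interior point of $[g, g_0]$ into the sup defining $\err$---is routine.
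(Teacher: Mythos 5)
Your proof is correct and follows essentially the same route as the paper's: a second-order Taylor expansion of $g \mapsto \E[\partial\ell(\theta,g;Z)\mid X]$ around $g_0$, universal orthogonality to kill the first-order term, tower property plus Jensen (contraction of conditional expectation in $L^2$) to pass from conditioning on $X$ to conditioning on $\theta(X)$, and absorbing the intermediate point into the supremum defining $\err$. The only real differences are cosmetic but arguably to your advantage: you use the integral form of the Taylor remainder and bound via the triangle inequality in $L^2$, whereas the paper uses the Lagrange form, expands $\Cal(\theta,g_0)^2$, applies Cauchy--Schwarz to both cross terms, and then divides through by $\Cal(\theta,g_0)$ — a step that implicitly assumes $\Cal(\theta,g_0) > 0$ (harmless, since the inequality is trivial otherwise, but your version sidesteps it). The integral remainder also makes the regularity assumption cleaner, since it requires integrability of $t \mapsto D_g^2 F_x(g_0+t(g-g_0))(g-g_0,g-g_0)$ rather than exactness of a Lagrange point that could in principle vary with $x$.
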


While the expression defining $\err(g, g_0; \theta)$ looks unpalatable, when $g_0 = (\eta_0, \zeta_0)$ (as in Example~\ref{eg:universal:gen_score}), this term can quite generally be bounded above by the cross-error in nuisance estimation, i.e.\ the quantity  $\|\langle \eta - \eta_0, \zeta - \zeta_0\rangle\|_{L^2(P_W)}$. The formal statement of this fact is presented in Proposition~\ref{prop:cross_error} below. More broadly, if the conditional Hessian satisfies $|D_g^2 \E[\partial \ell(\theta, f; Z) \mid X](\Delta, \Delta)|  \leq C \|\Delta\|_{L^2(P_W)}$ for any $\Delta \in \calG - \calG$ and $f \in \calG$, then we simply have $\err(g, g_0; \theta) \leq C\|g - g_0\|^2_{L^2(P_W)}$. These sorts of second order bound appears in other works on causal estimation~\citep{foster2023orthogonal, van2023causal, lan2023causal, chernozhukov2018double}.

We view the above result below as a major generalization of the main result (Theorem 1) of \citet{van2023causal}, which shows a similar bound for measuring the calibration error of estimates of conditional average treatment effects when calibration is performed according to isotonic regression.  Our result, which more explicitly leverages the concept of Neyman orthogonality, can be used to recover that of \citet{van2023causal} as a special case, including the error rate in nuisance estimation. 

\begin{restatable}{prop}{propcrosserror}
\label{prop:cross_error}
Suppose the loss $\ell$ satisfies the score condition outlined in Equation~\eqref{eq:universal:gen_score} and suppose $m(\eta; z)$ is linear in $\eta$. Then, we have
\[
\err(g, g_0; \theta) \leq 2\|\langle \eta - \eta_0, \zeta - \zeta_0\rangle\|_{L^2(P_W)},
\]
where $g_0 = (\eta_0, \zeta_0)$ represent the true, unknown nuisance parameters, and $g = (\eta, \zeta)$ represent arbitrary, fixed nuisance estimates. If instead one has $|D_g^2\E[\partial \ell(\theta, f; Z)(\Delta, \Delta)| \leq C \|\Delta\|_{L^2(P_W)}^2$ for any $\Delta$ and $\theta$, the one has
\[
\err(g, g_0; \theta) \leq C\|g - g_0\|_{L^2(P_W)}^2.
\]
\end{restatable}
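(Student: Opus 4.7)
The plan is to exploit the explicit score form $\partial \ell(\theta, g; z) = \theta(x) - m(\eta; z) - \corr(g; z)$ to compute the second Gateaux derivative $D_g^2 \E[\partial \ell(\theta, f; Z) \mid X](g - g_0, g - g_0)$ in closed form, and then translate the resulting pointwise-in-$X$ identity into an $L^2(P_W)$ bound via conditional Jensen. The second claim of the proposition follows essentially directly from unpacking the definition of $\err$.

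Setting $f = (\eta', \zeta')$, $g - g_0 = (h, k) := (\eta - \eta_0,\, \zeta - \zeta_0)$, and using that $\E[\corr(g; Z) \mid X] = \E[\langle \zeta(W), (\eta_0 - \eta)(W)\rangle \mid X]$, I would write
\[
\E[\partial \ell(\theta, f + t(h,k); Z) \mid X] = \theta(X) - \E[m(\eta' + th; Z) \mid X] - \E[\langle \zeta' + tk,\, \eta_0 - \eta' - th\rangle \mid X].
\]
Linearity of $m(\cdot; z)$ in $\eta$ makes the first conditional expectation affine in $t$, so only the bilinear inner-product term contributes a quadratic part. Expanding that term, the coefficient of $t^2$ is $\E[\langle k, h\rangle \mid X]$, and since $D_g^2 T(f)(h, h)$ equals twice the quadratic coefficient of $t \mapsto T(f + th)$, this yields
\[
D_g^2\,\E[\partial \ell(\theta, f; Z) \mid X](g - g_0,\, g - g_0) = 2\,\E[\langle \zeta - \zeta_0,\, \eta - \eta_0\rangle \mid X].
\]
Crucially, the right side does not depend on the anchor $f$, so the supremum over $f \in [g, g_0]$ in the definition of $\err(g, g_0; \theta)$ is vacuous.

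Squaring, taking the outer expectation over $X$, applying conditional Jensen $\E[\langle k, h\rangle \mid X]^2 \leq \E[\langle k, h\rangle^2 \mid X]$, and using the tower rule together with the inclusion $\calX \subseteq \calW$ then gives
\[
\err(g, g_0; \theta)^2 \leq 4\,\E\!\left[\langle h, k\rangle^2\right] = 4\,\|\langle \eta - \eta_0,\, \zeta - \zeta_0\rangle\|_{L^2(P_W)}^2,
\]
which proves the first bound after taking square roots. For the second assertion, apply the uniform Hessian hypothesis with $\Delta = g - g_0$ inside the definition of $\err$: since the bound is deterministic and independent of both $X$ and $f$, one obtains $\err(g, g_0; \theta) \leq C\|g - g_0\|_{L^2(P_W)}^2$ immediately.

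The main obstacle is the Gateaux derivative bookkeeping, particularly keeping the signs inside $\corr$ straight and verifying that the $m$-term contributes nothing quadratic (this is exactly where linearity of $m(\cdot; z)$ in $\eta$ is used). Once the clean identity $2\,\E[\langle k, h\rangle \mid X]$ is secured, the remainder is a direct application of Jensen's inequality and the definition of the $L^2(P_W)$ norm on $\calG$.
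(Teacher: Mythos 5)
Your proposal is correct and follows essentially the same route as the paper: write the score in the canonical form $\partial\ell(\theta,g;z)=\theta(x)-m(\eta;z)-\corr(g;z)$, note that linearity of $m$ in $\eta$ kills the $m$-term's second Gateaux derivative, compute the bilinear $\corr$-term's contribution to obtain $D_g^2\E[\partial\ell(\theta,f;Z)\mid X](g-g_0,g-g_0)=\pm 2\,\E[\langle\eta-\eta_0,\zeta-\zeta_0\rangle\mid X]$ (independent of the anchor $f$, so the supremum is vacuous), and finish with conditional Jensen plus the tower rule. The only stylistic difference is that you parameterize the Taylor expansion in $t$ and read off the quadratic coefficient, whereas the paper writes out the explicit $2\times 2$ off-diagonal Hessian of the bilinear form; the two are equivalent, and your explicit remark that the Hessian is anchor-independent is a cleaner way to dispense with the supremum over $[g,g_0]$. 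The treatment of the second claim also matches the paper.
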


\subsection{Sample Splitting Algorithm}

Throughout the remainder of this section, we make the following assumption on the the loss $\ell$. Any loss of the form presented in Equation~\eqref{eq:universal:gen_score} naturally satisfies the following assumption, and thus our example is applicable to $\ell_{\ATE}, \ell_{\ACD},$ and $\ell_{\LATE}$ described earlier.

\begin{ass}[Linear Score]
\label{ass:gen_universal}
We assume there is some function $\chi: \calG \times \calZ \rightarrow \R$ such that the loss $\ell : \R \times \calG \times \calZ \rightarrow \R$ satisfies
\[
\E[\partial \ell(\theta, g; Z) \mid X] = \theta(X) - \E[\chi(g; Z) \mid X]
\]
for any $g \in \calG$ and $\theta : \calX \rightarrow \R$. 
\end{ass}

The largely theoretic bound presented in Theorem~\ref{thm:universal} suggests a natural algorithm for performing causal calibration. First, a learner should use some fraction (say half) of the data to produce nuisance estimate $\wh{g}$ using any black-box algorithm. Then, the learner should transform second half of the data into generalized pseudo-outcomes $\chi(\wh{g}; Z)$ using the learned nuisances. Finally, they should apply some off-the-shelf calibration algorithm to the transformed data points. We formalize this in Algorithm~\ref{alg:sample-split-universal}.

\begin{algorithm}[ht]
   \caption{Sample-Splitting for Universally Orthogonal Losses}
   \label{alg:sample-split-universal}
\begin{algorithmic}[1]
   \Require Samples $Z_1, \dots, Z_{2n} \sim P_Z$, nuisance alg.\ $\calA_1$, calibration alg.\ $\calA_2$, estimator $\theta$.
   \State Estimate nuisances: $\wh{g} \gets \calA_1(Z_{1:n})$.
   \State Compute pseudo-outcomes: $\chi_{n + 1}, \dots, \chi_{2n} := \chi(\wh{g}; Z_{n + 1}), \dots, \chi(\wh{g}; Z_{2n})$.
   \State Run calibration: $\wh{\theta} \gets \calA_2(\theta, (X_m, \chi_m)_{m = n + 1}^{2n})$
   \Ensure Calibrated estimator $\wh{\theta}$.
\end{algorithmic}
\end{algorithm}

Algorithm~\ref{alg:sample-split-universal} generalizes the main algorithm in \citet{van2023causal} (Algorithm 1) to allow for the calibration of an estimate of any heterogeneous causal effect using any off-the-shelf nuisance estimation and calibration algorithms. To make efficient use of the data, we recommend instead running the cross calibration procedure outlined in Algorithm~\ref{alg:cross-cal-universal}. We do not provide a convergence analysis for this algorithm.

\begin{algorithm}[ht]
   \caption{Cross Calibration for Universally Orthogonal Losses}
   \label{alg:cross-cal-universal}
\begin{algorithmic}[1]
   \Require Samples $\calD = Z_1, \dots, Z_{n} \sim P_Z$, nuisance alg.\ $\calA_1$, calibration alg.\ $\calA_2$, estimator $\theta$.
   \State Split $\calI := [n]$ into $K$ equally-sized folds $\calI_1, \dots, \calI_K$.
   \State Define $\calD_k := \{Z_i : i \in \calI_k\}$ for all $k \in [K]$.
   \For{$k \in 1, \dots, K$}
   \State Estimate nuisances: $\wh{g}^{(-k)} \gets \calA_1(\calD\setminus \calD_k)$.
   \State Compute pseudo-outcomes $\chi_i := \chi(\wh{g}^{(-k)}; Z_i)$ for $i \in \calI_k$. 
   \EndFor
   \State Run calibration: $\wh{\theta} \gets \calA_2(\theta, (X_m, \chi_m)_{m = 1}^{n})$
   \Ensure Calibrated estimator $\wh{\theta}$.
\end{algorithmic}
\end{algorithm}

We now prove convergence guarantees for Algorithm~\ref{alg:sample-split-universal}. We start by enumerating several assumptions needed to guarantee convergence.

\begin{ass}
\label{ass:alg_univ}
Let $\calA_1 : \calZ^\ast \rightarrow \calG$ be a nuisance estimation algorithm taking in an arbitrary number of points, and let $\calA_2 : \Theta \times (\calX \times \R)^\ast \rightarrow \Theta$ be a calibration algorithm taking some initial estimator and an arbitrary number of covariate/label pairs. We assume
\begin{enumerate}
    \item For any distribution $P_Z$ on $\calZ$, $Z_1, \dots, Z_n \sim P$ i.i.d., and failure probability $\delta_1 \in (0, 1)$, we have
    \[
    \err(\wh{g}, g_0; \theta) \leq \rate_1(n, \delta_1; P_Z),
    \]
    where $\wh{g} \gets \calA_1(Z_{1:n})$ and $\rate_1$ is some rate function.
    \item For any distribution $Q$ on $\calX \times \R$, $(X_1, Y_1), \dots, (X_n, Y_n) \sim Q$ i.i.d., initial estimator $\theta : \calX \rightarrow \R$, and failure probability $\delta_2 \in (0, 1)$, we have
    \[
    \Cal(\wh{\theta}; Q) \leq \rate_2(n, \delta_2; Q),
    \]
    where $\wh{\theta} \gets \calA_2(\theta, \{(X_i, Y_i)\}_{i=1}^n)$ and $\rate_2$ is some rate function.
    % \item Let $P^\chi$ denote the (random) distribution  of pseudo-outcomes $\chi(\wh{g}; Z)$ where $Z \sim P_Z$. Then, there is some function $\rate_2(n, \delta)$ such that
    % \[
    % \rate_2(n, \delta; P_X \otimes P^\chi) \leq \rate_2(n, \delta).
    % \]
    % \item With probability one over the draws $Z_1, \dots, Z_n \sim_{\text{i.i.d.}} P_Z$ and $\wh{g} \gets \calA_1(Z_{1:n}, \delta_2)$, the pseudo outcomes satisfy $\chi(\wh{g}; Z_i) \in [0, 1]$.
\end{enumerate}
\end{ass}

We briefly parse the above assumptions. For the first assumption, whenever $g_0 = (\eta_0, \zeta_0)$ and $\err(\wh{g}, g_0; \theta) \lesssim \|(\eta - \eta_0)(\zeta - \zeta_0)\|_{L^2(P_W)}$ or $\err(\wh{g}, g_0; \theta) \lesssim \|\wh{g} - g_0\|_{L^2(P_W)}^2$, one can directly apply ML, non-parametric, or semi-parametric methods to estimate the unknown nuisances. For instance, if the nuisances are assumed to assumed to satisfy Holder continuity assumptions or are assumed to belong to a ball in a reproducing kernel Hilbert space, one can apply classical kernel smoothing methods or kernel ridge regression respectively to estimate the unknown parameters~\citep{van2023causal, tsy2009est, wainwright2019high} to obtain optimal rates. 
Further, many well-known calibration algorithms satisfy the second assumption, often in a manner that doesn't depend on the underlying distribution $Q$. For instance, results in \citet{gupta2021distribution} on $L^\infty$ calibration error bounds directly imply that if $\calA_2$ is taken to be uniform mass/histogram binning, then the rate function $\rate_2$ can be taken as $\rate_2(n, \delta; Q) = O\left(\|Y\|_\infty\sqrt{\frac{B\log(B/\delta)}{n}}\right)$, where $B$ denotes the number of bins/buckets, $\|Y\|_\infty$ denotes the essential supremum of $Y$ (which will be finite so long as nuisances and observations are bounded) and the unknown constant is independent of $n$, $Q$, and $\delta$. Likewise, the convergence in probability results for isotonic calibration proven by \citet{van2023causal} can naturally be extended to high-probability guarantees using standard concentration of measure results.
% The final assumption is simply saying that the pseudo-outcomes are almost surely bounded. If all observed outcomes $Y$ and nuisances $g_0, \wh{g}$ are assumed to be bounded, then simple rescaling can be used to ensure the assumption is satisfied.

\begin{restatable}{theorem}{thmalguniversal}
\label{thm:alg_universal}
Suppose $\ell : \R \times \calG \times \calZ \rightarrow \R$ is an arbitrary, universally orthogonal loss satisfying Assumption~\ref{ass:gen_universal}. Let $\calA_1, \calA_2,$ and $\chi$ satisfy Assumption~\ref{ass:alg_univ}. Then, with probability at least $1 - \delta_1 - \delta_2$, the output $\wh{\theta}$ of Algorithm~\ref{alg:sample-split-universal} run on a calibration dataset of $2n$ i.i.d.\ data points $Z_{1}, \dots, Z_{2n} \sim P$ satisfies
\[
\Cal(\wh{\theta}, g_0) \leq \frac{1}{2}\rate_1(n, \delta_1; P_Z) + \rate_2(n, \delta_2; P^\chi_{\wh{g}}),
\]
where $P^\chi_{\wh{g}}$ denotes the (random) distribution of $(X, \chi(\wh{g}; Z))$ where $Z \sim P_Z$ and again $X \subset Z$.
\end{restatable}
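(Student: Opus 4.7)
The plan is to combine the deterministic decomposition in \Cref{thm:universal} with the two rate guarantees in \Cref{ass:alg_univ}, using the independence of the two sample folds to handle the fact that the learned nuisance $\wh g$ is random but governs the distribution from which the calibration fold is drawn. First, I would apply \Cref{thm:universal} with $\theta = \wh\theta$ and $g = \wh g$ to get the pointwise decomposition
\[
\Cal(\wh\theta, g_0) \;\le\; \tfrac12\,\err(\wh g, g_0; \wh\theta) \;+\; \Cal(\wh\theta, \wh g),
\]
reducing the theorem to controlling the two summands separately.

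The first summand is immediate: since $\wh g = \calA_1(Z_{1:n})$ is produced on an i.i.d.\ sample from $P_Z$, \Cref{ass:alg_univ}(1) gives $\err(\wh g, g_0; \wh\theta) \le \rate_1(n,\delta_1;P_Z)$ on an event $\calE_1$ with $\P(\calE_1) \ge 1-\delta_1$. For the second summand, \Cref{ass:gen_universal} combined with the tower property yields
\[
\E[\partial\ell(\wh\theta,\wh g;Z)\mid \wh\theta(X)] \;=\; \wh\theta(X) \;-\; \E[\chi(\wh g;Z)\mid \wh\theta(X)],
\]
so the quantity $\Cal(\wh\theta,\wh g)$ from \Cref{def:cal} reduces to the classical $L^2$ calibration error of $\wh\theta$ with respect to the pseudo-outcome distribution $P^\chi_{\wh g}$ of $(X,\chi(\wh g;Z))$ in the sense of \Cref{def:classic}.

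The key step is to invoke the calibration rate \emph{conditionally} on the first fold. Conditional on $Z_{1:n}$, the learned nuisance $\wh g$ is deterministic and the calibration pairs $(X_m,\chi(\wh g;Z_m))_{m=n+1}^{2n}$ are i.i.d.\ draws from the now-fixed distribution $P^\chi_{\wh g}$. Applying \Cref{ass:alg_univ}(2) conditionally with $Q = P^\chi_{\wh g}$ then gives $\Cal(\wh\theta;P^\chi_{\wh g}) \le \rate_2(n,\delta_2;P^\chi_{\wh g})$ with conditional probability at least $1-\delta_2$; integrating over the distribution of $Z_{1:n}$ promotes this to an unconditional event $\calE_2$ with $\P(\calE_2)\ge 1-\delta_2$. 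A union bound yields $\P(\calE_1\cap\calE_2)\ge 1-\delta_1-\delta_2$, and on this intersection substituting the two rate bounds into the decomposition produces the claimed inequality.

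The core algebra is routine once \Cref{thm:universal} and \Cref{ass:gen_universal} are in hand. The main obstacle, and the reason sample splitting is used in the first place, is the conditional application of the calibration rate: the term $\rate_2(n,\delta_2;P^\chi_{\wh g})$ must be treated as a data-dependent random quantity since it is evaluated at the random distribution $P^\chi_{\wh g}$, and the argument succeeds only because the nuisance fold and the calibration fold are probabilistically independent, allowing the conditional calibration sample to be treated as genuinely i.i.d.\ draws from the fixed distribution $P^\chi_{\wh g}$.
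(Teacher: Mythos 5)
Your proposal follows essentially the same route as the paper's proof: invoke Theorem~\ref{thm:universal} pointwise, use Assumption~\ref{ass:gen_universal} to identify $\Cal(\wh\theta,\wh g)$ with the classical calibration error under $P^\chi_{\wh g}$, apply the rate guarantee for $\calA_2$ conditionally on the first fold via the tower rule, and union-bound with the nuisance-estimation rate. The key structural point you highlight — that the conditional application of $\rate_2$ succeeds because the two folds are independent — is exactly the mechanism the paper uses.

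One small point you gloss over that the paper makes explicit: you write the first-fold event as $\calE_1 = \{\err(\wh g, g_0; \wh\theta) \le \rate_1(n,\delta_1;P_Z)\}$, but $\wh\theta$ depends on the second fold, so as written this event is not measurable with respect to $Z_{1:n}$ alone, and Assumption~\ref{ass:alg_univ}(1) — which is stated for a nuisance learned from a single fold — does not directly deliver $\P(\calE_1)\ge 1-\delta_1$ unless the bound is uniform in $\theta$. The paper resolves this cleanly by observing first that under Assumption~\ref{ass:gen_universal} (the linear score), $\err(g,g_0;\theta)$ in fact does not depend on $\theta$ at all (the Hessian $D_g^2\E[\partial\ell(\theta,f;Z)\mid X]$ has no $\theta$-dependence when the score is $\theta(X)-\E[\chi(g;Z)\mid X]$), so one may drop the argument and work with $\err(\wh g,g_0)$, which is $Z_{1:n}$-measurable. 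You should insert this observation before invoking the rate for $\calA_1$; with it, the rest of your argument is correct and matches the paper's.
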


We prove the above theorem in Appendix~\ref{app:alg}. The above result can be thought of as an analogue of Theorem 1 of \citet{foster2023orthogonal}, which shows a similar bound on excess parameter risk, and also a generalization of Theorem 1 of \citet{van2023causal}, which shows a similar bound when isotonic regression is used to calibrate CATE estimates. 

We note that while we state our bounds and assumptions in terms of high-probability guarantees, we could have equivalently assumed, $\err(\wh{g}, g_0; \theta) = O_\P(\rate_1(n; P_Z))$ and $\Cal(\wh{\theta}; Q) = O_\P(\rate_2(n))$ for appropriately chosen rate functions $\rate_1$ and $\rate_2$\footnote{Here, $\rate_2(n)$ would have to be a function satisfying independent of the distribution of pseudo-outcomes to obtain convergence in probability guarantees. Such functions exists for isotonic calibration and histogram binning}. This, for instance, would be useful if one wanted to apply the results on the convergence of isotonic regression due to \citet{van2023causal}, who show $\Cal(\theta; P) = O_\P\left(n^{-1/3}\right)$. 

Lastly, we note that it is desirable for calibration algorithms to possess a ``do no harm'' guarantee, which ensures that the risk of the calibrated parameter $\wh{\theta}$ is not much larger than the risk of original parameter. We present such a guarantee in Theorem~\ref{thm:risk_bd} in Appendix~\ref{app:loss_red}, which follows using standard risk bounding techniques due to \citet{foster2023orthogonal}.

\begin{comment}
\subsection{Constructing Universally Orthogonal Losses}
While it may not be the case that some base loss $\wt{\ell}(v, \eta; z)$ is already universally orthogonal, it is sometimes possible to transforms 

This is accomplished by

One can check (and we show this in Appendix~\ref{}) that $\wt{\ell}$ and the orthogonalized loss $\ell$ posses the same conditional minimizer, and that calibration error is invariant to orthogonalization, i.e.\ $\wt{\Cal}(\theta, \eta_0) = \Cal(\theta, (\eta_0, \zeta_0))$. Without loss of generality, we can always assume we are working with the orthogonalized loss. 
\end{comment}
\section{Calibration for Conditionally Orthogonal Losses}
\label{sec:conditional}

We now consider the more challenging problem where the causal effect $\theta_0(X)$ is not the minimizer of a universally orthogonal loss. To aid in our exposition, we introduce \textit{calibration functions}. In short, the calibration function gives a canonical choice of a post-processing $\wh{\theta} = \tau \circ \theta: \calX \rightarrow \R$ such that $\Cal(\wh{\theta}; g_0) = 0$. While computing the calibration function exactly requires knowledge of the data-generating distribution, it can be approximated in finitely-many samples.

\begin{definition}[\textbf{Calibration Function}]
\label{def:cal_func}
Given any $\theta : \calX \rightarrow \R$ and $g \in \calG$, we define the \textit{calibration function for $\theta$ at $g$} as the mapping $\gamma_\theta(\cdot; g) : \calX \rightarrow \R$ given by
\[
\gamma_\theta(x; g) := \arg\min_\nu\E[\ell(\nu, g; Z) \mid \theta(X) = \theta(x)].
\]
In particular, when $g = g_0$, we call $\gamma_\theta^\ast := \gamma_\theta(\cdot; g_0)$ the \textit{true calibration function}.

\end{definition}

As hinted, first-order optimality conditions alongside the tower rule for conditional expectations imply that $\E[\partial \ell(\gamma_\theta(\cdot; g), g; Z) \mid \gamma_\theta(X; g)] = 0$ for any $g \in \calG$. This, in particular, implies that $\gamma_\theta^\ast$ is perfectly calibrated. As an example, when a loss satisfies Assumption~\ref{ass:gen_universal}, $\gamma_\theta^\ast(x, g) = \E[\chi(g; Z) \mid \theta(X) = \theta(x)]$.

 We now ask under what general assumptions on $\ell$ can we achieve calibration. In analogy with \citet{foster2023orthogonal}, who consider the general task of empirical risk minimization in the presence of nuisance, the hope would be to weaken the assumption of universal orthogonality to that of \textit{Neyman orthogonality}. Two commonly-used definitions for Neyman orthogonality (a marginal version and a version conditional on covariates) are provided below. 

\begin{definition}[Neyman Orthogonality]
\label{def:neyman}
 We say $\ell$ is \textit{Neyman orthogonal conditional on covariates $X$} (or marginally) if, for $g \in \calG$, we have 
 \[
 D_g \E[\partial \ell(\theta_0, g_0; Z) \mid X](g - g_0) = 0 \qquad (\text{respectively, } D_g\E[\partial \ell(\omega_0, h_0; Z)](g - h_0) = 0),
 \]
where $\theta_0(x) := \arg\min_{\nu}\E[\ell(\nu, g_0; Z) \mid X = x]$ and $g_0$ denotes the true nuisances (respectively $\omega_0 := \arg\min_\nu\E[\ell(\nu, h_0; Z)]$ and $h_0$ for the latter). 
\end{definition}

Neyman orthogonality is useful for a task such as risk minimization because it allows the statistician to relate the risk $\E[\ell(\wh{\theta}, g_0; Z)]$ under the true nuisances to the risk under the computed nuisances $\E[\ell(\wh{\theta}, \wh{g}; Z)]$ up to second order errors. Why do we need two separate conditions on the loss $\ell$? In general, the conditions in Definition~\ref{def:neyman} are not equivalent. To illustrate this, we can look at the example of the conditional/marginal quantile under treatment (We will let $\theta_{\QTE}(X)$ denote the former and $\omega_{\QTE}$ the latter). Recalling that we defined the pinball loss $\wt{\ell}_{\QTE}(\nu, p; z) := a p(x) (y - \nu)(Q - \mathbbm{1}_{y \leq \nu})$, it is not hard to see that we have
\[
\theta_{\QTE}(x) = \arg\min_\nu\E[\wt{\ell}_{\QTE}(\nu, p_0; Z) \mid X = x] \quad \text{and} \quad \omega_{\QTE} = \arg\min_\nu \E[\wt{\ell}_{\QTE}(\nu, p_0; Z)],
\]
where $p_0(x) := \P(A = 1 \mid X = x)^{-1}$ denotes the inverse propensity. Straightforward computation yields that $\wt{\ell}_{\QTE}$ is Neyman orthogonal conditional on covariates, but \text{not} marginally orthogonal. However, as noted in \citet{kallus2019localized}, one can define the more complicated loss $\ell_{\QTE}(\nu, (p, f); z)$ by performing a first order correction:
\begin{equation}
\label{eq:qte_loss}
\ell_{\QTE}(\nu, (p, f); z) := \wt{\ell}_{\QTE}(\nu, p; z) - \theta(x)\cdot\underbrace{(ap(x)(f(x) - Q) -  f(x) + Q)}_{=:\corr((p, f); z)}
\end{equation}
where $f_0(x) := \P(Y(1) \leq \omega_{\QTE} \mid X = x)$ is an additional nuisance that must be estimated from the data. One can check that $\ell_{\QTE}$ satisfies the second condition of Definition~\ref{def:neyman}, i.e.\ marginal Neyman orthogonality.

In calibration, we care about the quality of a model \textit{conditional on its own predictions}. More specifically, given any initial model $\theta(X)$, the goal of any calibration algorithm (e.g.\ histogram binning, isotonic regression) is to compute the calibration function $\gamma_{\theta}^\ast$ (Definition~\ref{def:cal_func}).\footnote{We remind the reader that $\gamma_\theta^\ast(x) := \arg\min_\nu\E[\ell(\nu, g_0; w) \mid \theta(X) = \theta(x)]$} If $\theta(X)$ were a constant function, then we would have $\gamma_\theta^\ast(x) \equiv \omega_0$, and thus we would want to leverage a loss satisfying marginal orthogonality. Likewise, if $\theta(X)$ were roughly of the same complexity as $\theta_0(X)$, we may want to leverage the a loss $\ell$ satisfying the form of Neyman orthogonality conditional on covariates $X$. In general, the complexity of the initial estimate $\theta(X)$ will interpolate between these two extremes.

For a variant of Neyman orthogonality to thus be useful, we would need to cross-derivative to vanish (a) when evaluated at the calibration function $\gamma_{\theta}^\ast$ instead of $\theta_0(X)$ or $\omega_0$ and (b) when the expectation is taken conditionally on the prediction $\theta(X)$ instead of either conditionally on $X$ or marginally.
The extra structure provided by universal orthogonality allowed us to side-step this issue as the cross-derivative of the loss vanished when evaluated at \textit{any} estimate of $\theta_0$ so long as nuisances were estimated correctly.
The following, quite technical condition will give us the structure we need to perform calibration of estimates of more general heterogeneous causal effects.

\begin{definition}[Conditional Orthogonality]
\label{def:conditional}
Suppose $\wt{\ell}(\theta, \eta; z)$ is some initial loss with true nuisance $\eta_0$. Define the ``corrected'' loss $\ell$ by
\[
\ell(\theta, (\eta, \zeta); z) := \wt{\ell}(\theta, \eta; z) - \theta(x)\cdot\corr((\eta, \zeta); z),
\]
where $\corr((\eta, \zeta); z)$ is any correction term satisfying $\E[\corr((\eta_0, \zeta); z)\mid X] = 0$. Then, we say $\ell$ is conditionally orthogonal if, for any $\varphi : \calX \rightarrow \R$, there exists $\zeta_\varphi \in L^2(P_X)$ such that
\[
D_g\E[\partial \ell(\gamma_\varphi^\ast,  g_\varphi; Z) \mid \varphi(X)](g - g_0) = 0,
\]
for all $g \in \calG$, where $g_\varphi := (\eta_0, \zeta_\varphi)$.

\end{definition}

% \begin{definition}[Conditional Orthogonality]
% \label{def:conditional}
% We say a loss $\ell(\theta, g; z)$ with nuisances $g = (\eta, \zeta) \in \calG$, $\eta, \zeta : \calW \rightarrow \R^k$, and a true first nuisance $\eta_0$ is \textit{conditionally orthogonal} if, for any $\varphi : \calX \rightarrow \calX'$, the following hold.
% \begin{enumerate}
%     \item For any nuisance estimates $\zeta_1, \zeta_2 : \calW \rightarrow \R^k$, any $x \in \calX$, and any $\nu \in \R$,
%     \[
%     \E[\partial \ell(\nu, (\eta_0, \zeta_1); Z) \mid \varphi(X)] = \E[\partial \ell(\nu, (\eta_0, \zeta_2); Z) \mid \varphi(X)].
%     \]
%     In particular, this implies the calibration function \[
%     \gamma_\varphi(x; (\eta, \zeta)) := \arg\min_\nu\E[\ell(\nu, (\eta, \zeta); Z) \mid \varphi(X) = \varphi(x)]
%     \]
%     doesn't depend on $\zeta$ when $\eta = \eta_0$. Thus, we can write $\gamma_\varphi^\ast(x) = \gamma_\varphi(x; (\eta_0, \zeta))$ for any $\zeta$.
%     \item there exists some second nuisance $\zeta_\varphi : \calW \rightarrow \R^k$ (which may not be unique) such that
%     \[
%     D_g\E[\partial \ell(\gamma_\varphi^\ast, g_\varphi; Z) \mid \varphi(X)](g - g_\varphi) = 0 \qquad \text{for all } g - g_\varphi\in \calG,
%     \]
%     where $g_\varphi = (\eta_0, \zeta_\varphi)$.
% \end{enumerate}

% \end{definition}

Definition~\ref{def:conditional} may be difficult to parse, but we can work through several examples to gain some intuition.
Returning to the example of conditional quantile under treatment and the (corrected) pinball loss $\ell_{\QTE}(\nu, (p, f); z)$ defined above, we simply took the correction term to be $\corr((p, f); z) := a p(x)(f(x) - Q) - f(x) + Q$. From a straightforward calculation, one can check $\ell_{\QTE}$ satisfies Definition~\ref{def:conditional} with additional nuisance $f_\varphi$ given by $f_\varphi(x) := \P(Y(1) \leq \gamma_{\varphi}^\ast(X) \mid X = x)$.

More broadly, given some initial loss $\wt{\ell}$, one can use the Riesz representation theorem %(on the Hilbert spaces $L^2(P_{W \mid \varphi(X) = c})$ for each $c \in \range(\theta)$) 
to obtain $\zeta_\varphi : \calW \rightarrow \R$ satisfying
\[
D_\eta \E[\partial \wt{\ell}(\gamma_{\varphi}^\ast, \eta_0; Z) \mid \varphi(X)](\eta - \eta_0) = \E[\langle \zeta_\varphi(W), (\eta - \eta_0)(W) \rangle \mid \varphi(X)]
\]
almost surely. Then, if we can find some variable $U \subset Z$ such that $\E[U \mid W] = \eta_0(W)$, we can simply take $\corr((\eta, \zeta); z) := \langle \zeta(w), \eta(w) - u\rangle$, which gives us a ``corrected'' loss
\[
\ell(\theta, (\eta, \zeta); z) := \wt{\ell}(\theta, \eta; z) - \theta(x)\cdot\langle \zeta(w), \eta(w) - u\rangle.
\]

% \begin{ass}
% \label{ass:gen_cond_loss}
% We assume that the loss $\ell(\nu, (\eta, \zeta); z)$ is of the form outlined in Equation~\eqref{eq:gen_cond_loss} where, for any $\varphi : \calX \rightarrow \R$, (i) $D_\eta\E[\partial \wt{\ell}(\gamma_{\varphi}^\ast, \eta; z) \mid \varphi(X)](\eta - \eta_0) = \E[\langle \zeta_\varphi(W), (\eta - \eta_0)(W)\rangle \mid \varphi(X)]$ for some $\zeta_\varphi : \calW \rightarrow \R^k$ and (ii) $\E[\corr((\eta_0, \zeta); Z) \mid \varphi(X)] = \E[\langle \zeta(W), (\eta - \eta_0)(W)\rangle \mid \varphi(X)]$ for all  $\zeta$.
% \end{ass}

We conclude by pointing out the following observations about calibration with respect to such losses $\ell$, which follows immediately from Definition~\ref{def:conditional}.

\begin{corollary}
\label{cor:err_inv}
Suppose a loss $\ell$ satisfies Definition~\ref{def:conditional}. Then, the following hold:
\begin{enumerate}
    \item The calibration function \[
     \gamma_\varphi(x; (\eta, \zeta)) := \arg\min_\nu\E[\ell(\nu, (\eta, \zeta); Z) \mid \varphi(X) = \varphi(x)]
     \]
    doesn't depend on $\zeta$ when $\eta = \eta_0$. Thus, we can write $\gamma_\varphi^\ast(x) = \gamma_\varphi(x; (\eta_0, \zeta))$ for any $\zeta$.
    \item The calibration error of any estimate $\theta : \calX \rightarrow \R$, given by
    \[
    \Cal(\theta, (\eta, \zeta)) := \left(\int_\calX \E[\partial \ell(\theta, (\eta, \zeta); Z) \mid \theta(X)]^2 P_X(dx)\right)^{1/2}
    \]
    also does not depend on $\zeta$ when $\eta = \eta_0$. Thus, we can write $\Cal(\theta, \eta_0)$ in place of $\Cal(\theta, (\eta_0, \zeta))$ for any $\zeta$. 
    \item Lastly, not only do $\ell$ and $\wt{\ell}$ posses the same conditional minimizer when evaluated at $\eta_0$ (regardless of the choice of $\zeta$), but we also have 
    \[
    \Cal(\theta, (\eta_0, \zeta)) = \wt{\Cal}(\theta, \eta_0),
    \]
    where $\theta : \calX \rightarrow \R$ is arbitrary and $\wt{\Cal}$ denotes the calibration error under $\wt{\ell}$.
\end{enumerate}
\end{corollary}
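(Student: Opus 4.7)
The plan is to show that all three statements follow immediately from a single structural observation: differentiating the corrected loss with respect to its first argument gives
\[
\partial \ell(\theta, (\eta, \zeta); z) \;=\; \partial \wt{\ell}(\theta, \eta; z) \;-\; \corr((\eta, \zeta); z),
\]
since the $\corr$ term enters $\ell$ linearly in $\theta(x)$. Combined with the defining property of $\corr$, namely $\E[\corr((\eta_0, \zeta); Z)\mid X] = 0$, this reduces each claim to an invocation of the tower rule against an $X$-measurable conditioning variable. This is precisely where I expect the subtlety (if any) to lie: one must check that the conditioning sigma-algebras appearing in each statement, namely those generated by $\theta(X)$ and $\varphi(X)$, are coarsenings of $\sigma(X)$, so that iterating conditional expectations lets us pull the $\corr$ term down to zero.

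For the first claim, I would write
\[
\E[\ell(\nu, (\eta_0, \zeta); Z)\mid \varphi(X) = \varphi(x)] \;=\; \E[\wt{\ell}(\nu, \eta_0; Z)\mid \varphi(X) = \varphi(x)] \;-\; \nu\cdot \E[\corr((\eta_0, \zeta); Z)\mid \varphi(X) = \varphi(x)],
\]
and observe that the second expectation vanishes because $\sigma(\varphi(X)) \subset \sigma(X)$ and $\E[\corr((\eta_0,\zeta);Z)\mid X] = 0$. Hence the objective defining $\gamma_\varphi(x; (\eta_0, \zeta))$ is, as a function of $\nu$, identical to that defining the $\wt{\ell}$-calibration function, independently of $\zeta$. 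In particular, writing $\gamma_\varphi^\ast(x)$ without reference to $\zeta$ is unambiguous.

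For the second claim, I would apply the same decomposition pointwise:
\[
\E[\partial \ell(\theta, (\eta_0, \zeta); Z)\mid \theta(X)] \;=\; \E[\partial \wt{\ell}(\theta, \eta_0; Z)\mid \theta(X)] \;-\; \E[\corr((\eta_0, \zeta); Z)\mid \theta(X)],
\]
and again the last conditional expectation is zero by the tower rule, since $\sigma(\theta(X))\subset\sigma(X)$. Squaring, integrating against $P_X$, and taking the square root yields the third claim,
\[
\Cal(\theta, (\eta_0, \zeta))^2 \;=\; \int_\calX \E[\partial \wt{\ell}(\theta, \eta_0; Z)\mid \theta(X)=\theta(x)]^2\, P_X(dx) \;=\; \wt{\Cal}(\theta, \eta_0)^2,
\]
so $\Cal(\theta, (\eta_0, \zeta)) = \wt{\Cal}(\theta, \eta_0)$. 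The whole argument is essentially a two-line computation once the partial derivative identity and the definition of $\corr$ are in place; no machinery from the conditional orthogonality condition itself (i.e.\ the vanishing of the Gateaux derivative) is needed here — that condition enters only in the later theorems that bound $\Cal(\theta, g_0)$ in terms of nuisance errors.
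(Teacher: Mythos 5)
Your proof is correct, and since the paper states the corollary "follows immediately from Definition~\ref{def:conditional}" without supplying a proof, you are essentially just making explicit the argument the authors considered obvious. The key observations — that $\partial \ell(\theta, (\eta, \zeta); z) = \partial \wt{\ell}(\theta, \eta; z) - \corr((\eta, \zeta); z)$ because the correction term is linear in $\theta(x)$, and that the tower rule with $\sigma(\varphi(X)), \sigma(\theta(X)) \subset \sigma(X)$ kills the correction when $\eta = \eta_0$ — are exactly what is needed, and your closing remark that the Gateaux-derivative condition plays no role here is an accurate and useful clarification.
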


\subsection{A General Bound on Calibration Error}
We now prove a decoupled bound on the calibration error $\Cal(\theta, g_0)$ under the assumption that $\ell$ is conditionally orthogonal. This bound serves as a direct analogue of the one presented in Theorem~\ref{thm:universal}, just in a more general setting. As we will see, bounding the calibration error of losses that are not universally orthogonalizable is a much more delicate task.

To prove our result, we will need place convexity assumptions on the underlying loss $\ell$. We note that these convexity results are akin to those made in existing works, namely in the work of \citet{foster2023orthogonal}.

\begin{ass}
\label{ass:strong_conv}
We assume that the loss function conditioned on covariates $X$ is $\alpha$-strongly convex, i.e.\ for any $v\in \R$ and any $g \in \calG$, we have
\[
\E\left[\partial^2\ell(v, g; Z) \mid X\right] \geq \alpha.
\]

\end{ass}

\begin{ass}
\label{ass:smooth}
We assume that the loss function conditioned on covariates $X$ is $\beta$-smooth, i.e.\ for any $v\in \R$ and any $g \in \calG$, we have
\[
\E\left[\partial^2\ell(v, g; Z) \mid X\right] \leq \beta.
\]

\end{ass}

We now state the main theorem of this section. The bound below appears largely identical to the one presented in Theorem~\ref{thm:universal} modulo two minor differences. First, we pay a multiplicative factor of $\beta/\alpha$ in both of the decoupled terms, which ultimately is just the condition number of the loss $\ell$. Second, the error term  $\err(g, g_\theta; \gamma_\theta^\ast)$ is evaluated at the calibration function $\gamma_\theta^\ast(x) := \arg\min_{\nu}\E[\ell(\nu, g_0; Z) \mid \theta(X) = \theta(x)]$  instead of the parameter estimate $\theta$. This difference is due to the fact that, in the proof of Theorem~\ref{thm:conditional}, we must perform a functional Taylor expansion around $\gamma_\theta^\ast$ in order to invoke the orthogonality condition. This subtlety was absent in the case of universal orthogonality, as $\ell$ was insensitive to nuisance misestimation \textit{for any} parameter estimate $\theta$. We ultimately view this difference as minor, as for many examples (e.g.\ the pinball loss $\ell_{\QTE}$) the dependence on $\gamma_\theta^\ast$ vanishes. We prove Theorem~\ref{thm:conditional} in Appendix~\ref{app:decomp:conditional}. 
\begin{restatable}{theorem}{thmconditional}
\label{thm:conditional}
Let $\ell$ be a conditionally orthogonal loss (Definition~\ref{def:conditional}) that is $\alpha$-strong convex (Assumption~\ref{ass:strong_conv}) and $\beta$-smooth (Assumption~\ref{ass:smooth}). Suppose $D_g^2\E[\partial\ell(\gamma_\theta^\ast, f; Z) \mid X](g - g_0, g - g_0)$ exists for $f, g \in \calG$. Then, for any estimate $\theta : \calX \rightarrow \R$ and nuisance parameter $g = (\eta, \zeta)$, we have
\[
\Cal(\theta, \eta_0) \leq \frac{\beta}{2\alpha}\err(g, g_\theta; \gamma_\theta^\ast) + \frac{\beta}{\alpha}\Cal(\theta, g),
\]
where $g_\theta = (\eta_0, \zeta_\theta)$ are the true, unknown nuisance functions, $\gamma_\theta^\ast$ is the calibration function associated with $\theta$, and $\err(g, g_\theta; \gamma_\theta^\ast)$ is as defined in Theorem~\ref{thm:universal}.
\end{restatable}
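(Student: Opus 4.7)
The plan is to chain Corollary~\ref{cor:err_inv} with smoothness, strong convexity, and a functional Taylor expansion at $\gamma_\theta^\ast$ that invokes conditional orthogonality. By Corollary~\ref{cor:err_inv}, $\Cal(\theta, \eta_0) = \Cal(\theta, g_\theta)$, so it suffices to bound $\Cal(\theta, g_\theta)$. Since $\gamma_\theta^\ast$ is the conditional minimizer of $\nu \mapsto \E[\ell(\nu, g_\theta; Z) \mid \theta(X)]$ (again by Corollary~\ref{cor:err_inv}), the first-order condition gives $\E[\partial\ell(\gamma_\theta^\ast, g_\theta; Z) \mid \theta(X)] = 0$. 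The $\beta$-smoothness assumption (which lifts from conditioning on $X$ to conditioning on $\theta(X)$ via the tower rule) then yields $|\E[\partial\ell(\theta, g_\theta; Z) \mid \theta(X) = \theta(x)]| \leq \beta|\theta(x) - \gamma_\theta^\ast(x)|$ pointwise, and integrating over $P_X$ gives $\Cal(\theta, g_\theta) \leq \beta\|\theta - \gamma_\theta^\ast\|_{L^2(P_X)}$.

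Next I would bridge from $\theta$ to $\gamma_\theta^\ast$ through an auxiliary minimizer. Define $\bar{\gamma}_g(x) := \arg\min_\nu\E[\ell(\nu, g; Z) \mid \theta(X) = \theta(x)]$ and apply the triangle inequality $\|\theta - \gamma_\theta^\ast\|_{L^2(P_X)} \leq \|\theta - \bar{\gamma}_g\|_{L^2(P_X)} + \|\bar{\gamma}_g - \gamma_\theta^\ast\|_{L^2(P_X)}$. Two uses of $\alpha$-strong convexity of the $g$-loss conditional on $\theta(X)$ handle the pieces: since $\bar{\gamma}_g$ minimizes this function, $\alpha\|\theta - \bar{\gamma}_g\|_{L^2(P_X)} \leq \Cal(\theta, g)$; and comparing the gradient of the $g$-loss at $\gamma_\theta^\ast$ with that at its own minimizer $\bar{\gamma}_g$ gives $\alpha|\gamma_\theta^\ast(x) - \bar{\gamma}_g(x)| \leq |\E[\partial\ell(\gamma_\theta^\ast, g; Z) \mid \theta(X) = \theta(x)]|$.

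The main obstacle, and where the delicate structure of Definition~\ref{def:conditional} comes in, is bounding this last quantity via conditional orthogonality. I would perform a second-order functional Taylor expansion of $g \mapsto \E[\partial\ell(\gamma_\theta^\ast, g; Z) \mid \theta(X)]$ around $g_\theta$: the zeroth-order term $\E[\partial\ell(\gamma_\theta^\ast, g_\theta; Z) \mid \theta(X)]$ vanishes by optimality of $\gamma_\theta^\ast$, the first-order term $D_g\E[\partial\ell(\gamma_\theta^\ast, g_\theta; Z) \mid \theta(X)](g - g_\theta)$ vanishes by conditional orthogonality applied with $\varphi = \theta$, and only the second-order remainder $\frac{1}{2}D_g^2\E[\partial\ell(\gamma_\theta^\ast, f; Z) \mid \theta(X)](g - g_\theta, g - g_\theta)$, for some $f \in [g, g_\theta]$, survives. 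Interchanging the Gateaux derivative with the conditional expectation (justified by the existence of the second Gateaux derivative assumed in the theorem) and using Jensen together with the tower rule, the $L^2(P_X)$ norm of this remainder is at most $\err(g, g_\theta; \gamma_\theta^\ast)$, which is defined in terms of conditioning on $X$. Chaining the inequalities yields $\alpha\|\bar{\gamma}_g - \gamma_\theta^\ast\|_{L^2(P_X)} \leq \frac{1}{2}\err(g, g_\theta; \gamma_\theta^\ast)$ and hence $\Cal(\theta, \eta_0) \leq \frac{\beta}{2\alpha}\err(g, g_\theta; \gamma_\theta^\ast) + \frac{\beta}{\alpha}\Cal(\theta, g)$. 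The crux, absent in the universally orthogonal case, is that orthogonality kills the first-order correction only at the specific point $\gamma_\theta^\ast$, which forces the detour through $\bar{\gamma}_g$ and the condition-number factor $\beta/\alpha$ in both terms.
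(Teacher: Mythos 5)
Your proof is correct and follows essentially the same route as the paper: reduce to $\Cal(\theta, g_\theta)$ via Corollary~\ref{cor:err_inv}, split through the intermediate calibration function $\gamma_\theta(\cdot; g)$, and control the two pieces by $\alpha$-strong convexity (for $\|\theta - \gamma_\theta(\cdot; g)\|$) and a second-order Taylor expansion plus conditional orthogonality at $\gamma_\theta^\ast$ (for $\|\gamma_\theta(\cdot; g) - \gamma_\theta^\ast\|$), exactly as in the paper's Lemmata~\ref{lem:error_est_cal} and \ref{lem:error_cal_cal}. The only cosmetic difference is that you bound $\Cal(\theta, g_\theta) \le \beta\|\theta - \gamma_\theta^\ast\|_{L^2(P_X)}$ pointwise via $\beta$-smoothness and then apply the triangle inequality in $L^2$, whereas the paper works with $\Cal(\theta, g_\theta)^2$ and Cauchy--Schwarz before dividing through; both give the identical final constants.
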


Although the bound in Theorem~\ref{thm:conditional} looks similar in spirit to the one presented in Theorem~\ref{thm:universal}, there still remain questions to answer. For instance, what does the condition number $\beta/\alpha$ look like for practically-relevant losses? Likewise, will the error term $\err(g, g_\theta; \gamma_\theta^\ast)$ simplify into a cross-error term as in the case of universally orthogonalizable losses? We interpret Theorem~\ref{thm:conditional} by spending some time looking at the example of the pinball loss $\ell_{\QTE}$.

\begin{example}
First, for any fixed quantile $Q$, we assess the strong convexity/smoothness properties of $\ell_{\QTE}$. Let $p : \calX \rightarrow \R_{\geq 0}$ represent any inverse-propensity estimate, and let $\pi_0$ represent the true propensity score. Assume $Y(1)$ admits a conditional density $f_{Y(1)}(y \mid x)$ with respect to the Lebesgue measure on $\R$. Straightforward calculation yields 
\[
\partial^2\E[\ell_{\QTE}(\theta, p; Z) \mid X] = f_{Y(1)}(\theta(x) \mid x).
\]
Thus, if $l \leq f_{Y(1)}(y \mid x) \leq u$ for all $y \in \R, x \in \calX$ and $\epsilon < p_0(x)^{-1} , p(x)^{-1} \leq 1 - \epsilon$ for all $x \in \calX$ for some $0 < \epsilon < 1/2$, then we have:
\[
\frac{\epsilon^2}{(1 - \epsilon)^2}l \leq \partial^2\E[\ell_{\QTE}(\theta, p; Z) \mid X] \leq \frac{(1 - \epsilon)^2}{\epsilon^2}u,
\]
i.e.\ that $\ell_{\QTE}$ satisfies Assumption~\ref{ass:smooth} with $\beta = \frac{1 - \epsilon}{\epsilon}u$ and Assumption~\ref{ass:strong_conv} with $\alpha = \frac{\epsilon}{1 - \epsilon}l$. 

We can further interpret the error term $\err(g, g_\theta; \gamma_\theta^\ast)$ in the case of the pinball loss, where again $g = (\eta, f)$. In particular, straightforward calculation yields
\[
\E[\partial \ell_{\QTE}(\theta, g; Z) \mid X] = \frac{p(X)}{p_0(X)}(\P(Y(1) \leq \theta(X) \mid X) - Q) - \E\left[A(f(X) - Q)(p(W) - p_0(W))\mid X\right].
\]
As the first term is linear in the nuisance estimate $p$ and doesn't depend on $f$, its second Gateaux derivative (with respect to $g$) is identically zero. Thus, the error term does not depend on $\gamma_\theta^\ast$, and thus we can write $\err(g, g_\theta)$ instead. Further, using the same analysis as in Proposition~\ref{prop:cross_error} and writing $\zeta(a, x) = a(f(x) - Q)$. we have that
\begin{align*}
\frac{1}{2}\err(g, g_\theta) &\leq \|(p - p_0)(\zeta - \zeta_\theta)\|_{L^2(P_W)} \\
&= \left\|A(f(X) - f_\theta(X))(p(X) - p_0(X))\right\|_{L^2(P_W)},
\end{align*}
where we recall that $f_\theta(x) := \P(Y(1) \leq \gamma_\theta^\ast \mid X  = x)$. Thus, even in the general case of conditional orthogonality, we can often obtain simple looking bounds on the error in nuisance estimation.
\end{example}

\subsection{A Sample Splitting Algorithm}

We conclude the section by presenting two algorithms for performing causal calibration with respect to conditionally orthogonal losses. As in Section~\ref{sec:universal}, we first present a sample splitting algorithm (Algorithm~\ref{alg:sample-split-cond}) that enjoys finite sample convergence guarantees. We then present a corresponding cross-calibration algorithm that is likely more useful in practice. 
\begin{algorithm}[ht]
   \caption{Sample-Splitting for Conditionally Orthogonal Losses}
   \label{alg:sample-split-cond}
\begin{algorithmic}[1]
   \Require Samples $Z_1, \dots, Z_{2n} \sim P_Z$, nuisance alg.\ $\calA_1$, general loss calibration alg.\ $\calA_2$, estimator $\theta$, loss involving nuisance $\ell$
   \State Estimate nuisances: $\wh{g} \gets \calA_1(Z_{1:n})$.
   \State Compute loss partial-evaluations: $\ell_{n + 1}, \dots, \ell_{2n} := \ell(\cdot, \wh{g}; Z_{n + 1}), \dots, \ell(\cdot, \wh{g}; Z_{2n})$
   \State Run calibration: $\wh{\theta} \gets \calA_2(\theta, (X_m, \ell_m)_{m = n + 1}^{2n})$
   \Ensure Calibrated estimator $\wh{\theta}$.
\end{algorithmic}
\end{algorithm}

Algorithm~\ref{alg:sample-split-cond} is essentially a generalization of Algorithm~\ref{alg:sample-split-universal} to general losses. The key difference is that we can no longer compute pseudo-outcomes for general losses. Instead, we assume that the calibration algorithm $\calA_2$ passed to Algorithm~\ref{alg:sample-split-universal} can calibrate ``with respect to general losses $\ell$''. What does this mean? Many calibration algorithms such as linear calibration, Platt scaling, and isotonic calibration, compute a mapping $\wh{\tau}$ satisfying
\begin{equation}
\label{eq:cal_loss}
\wh{\tau} \in \arg\min_{\tau \in \calF}\sum_{m = 1}^n\ell\left((\tau\circ\theta)(X_i), Y_i\right)
\end{equation}
where $(X_1, Y_1), \dots, (X_n, Y_n)$ denotes a calibration dataset,  $\calF \subset \{f : \R \rightarrow \R\}$ is some appropriately-defined class of functions, and $\ell$ is an appropriately chosen class of functions. Table~\ref{tab:cal_loss} below outlines the choices of $\calF$ and $\ell$ for common calibration algorithms. 

\begin{table}[h]
    \centering
    \begin{tabular}{c|c|c}
    Algorithm & Loss $\ell$ & Function class $\calF$ \\
    \hline 
    Isotonic Calibration & $\ell(\nu, y) = \frac{1}{2}(\nu - y)^2 $& $\calF = \{\tau(x) \text{ is non-decreasing}\}$\\
    Linear Calibration & $\ell(\nu, y) = \frac{1}{2}(\nu - y)^2$ & $\calF = \{ \tau(x) = a x + b,\;a, b \in \R\}$ \\
    Histogram binning & $\ell(\nu, y) = \frac{1}{2}(\nu - y)^2$ & {\footnotesize$\calF = \{\tau(x) = \sum c_b \mathbbm{1}_{\theta(x) \in [\theta(X)_{(b- 1)n/B)}, \theta(X)_{(bn/B)}}\}$} \\ 
    Platt Scaling & {\footnotesize $\ell(\nu, y) = -y\log(\nu) - (1 - y)\log(1 - \nu)$} & $\calF = \{ \tau(x)  = \frac{1}{1 + \exp(a x + b)}, a, b\in\R\}$
    \end{tabular}
    \caption{Representations of classical calibration algorithms in the form outlined in Equation~\eqref{eq:cal_loss}. }
    \label{tab:cal_loss}
\end{table}

Thus, for general losses involving nuisance $\ell$, it makes sense that the calibration algorithm $\calA_2$ should compute the following minimizer. We outline a general template for $\calA_2$ in Algorithm~\ref{alg:erm-cal} in Appendix~\ref{app:gen_cal}.
\[
\wh{\tau} \in \arg\min_{\tau \in \calF}\sum_{m = 1}^n\underbrace{\ell((\tau\circ\theta)(X_i), \wh{g}_i; Z_i)}_{=: \ell_i(\theta(X_i))},
\]
In the above, $Z_1, \dots, Z_n \sim P_Z$ is now the calibration sample and $\wh{g}_i$ denotes a nuisance estimate which generically may depend on the current sample $i$. Also in Appendix~\ref{app:gen_cal}, we also prove the convergence of a simple, three-way sample splitting algorithm based on uniform mass binning. and actually prove a finite sample convergence $L^2$ calibration error convergence guarantee for uniform mass binning as well. We do not include this algorithm in the main paper as the focus of the work is on presenting a general framework for performing causal calibration.

We can similarly define a version of cross calibration for conditionally orthogonal losses, which we present in Algorithm~\ref{alg:cross-cal-cond}.

\begin{algorithm}[ht]
   \caption{Cross Calibration for Conditionally Orthogonal Losses}
   \label{alg:cross-cal-cond}
\begin{algorithmic}[1]
   \Require Samples $\calD := Z_1, \dots, Z_{n} \sim P_Z$, nuisance alg.\ $\calA_1$, general loss calibration alg.\ $\calA_2$, estimator $\theta$, loss involving nuisance $\ell$.
   \State Split $\calI := [n]$ into $K$ equally-sized folds $\calI_1, \dots, \calI_K$.
   \State Define $\calD_k := \{Z_i : i \in \calI_k\}$ for all $k \in [K]$.
   \For{$k \in 1, \dots, K$}
   \State Estimate nuisances: $\wh{g}^{(-k)} \gets \calA_1(\theta, \calD\setminus \calD_k)$.
   \State Compute loss partial evaluations: $\ell_i := \ell(\cdot, \wh{g}^{(-k)}; Z_i)$ for $i \in \calI_k$. 
   \EndFor
   \State Run calibration: $\wh{\theta} \gets \calA_2(\theta, (X_m, \ell_m)_{m = 1}^{n})$
   \Ensure Calibrated estimator $\wh{\theta}$.
\end{algorithmic}
\end{algorithm}

We now focus our efforts on proving a convergence guarantee for Algorithm~\ref{alg:sample-split-cond}. We present a set of assumptions we need to prove our convergence guarantee.

\begin{ass}
\label{ass:alg_cond}
Let $\calA_1 : \calZ^\ast \times \Theta \rightarrow \calG$ be a nuisance estimation algorithm taking in an arbitrary number of points, and let $\calA_2 : \Theta \times (\calX \rightarrow \R)^\ast \rightarrow \Theta$ be a general loss calibration algorithm taking some initial estimator and a sequence of partially-evaluated loss functions $\ell_i : \calX \rightarrow \R$. 
\begin{enumerate}
    \item For any distribution $P_Z$ on $\calZ$, $Z_1, \dots, Z_n \sim_{i.i.d.} P_Z$, $\theta \in \Theta$, and failure probability $\delta_1 \in (0, 1)$, we have
    \[
    \err(\wh{g}, g_{\theta}; \gamma_{\theta}^\ast) \leq \rate_1(n, \delta_1, \theta; P_Z),
    \]
    where $\wh{g} = (\wh{\eta}, \wh{\zeta}) \gets \calA_1(Z_{1:n}, \theta)$, $g_{\theta} = (\eta_0, \zeta_{\theta})$, and $\rate_1$ is some rate function.
    \item For any distribution $P_Z$ on $\calZ$, $Z_1, \dots, Z_n \sim_{i.i.d.} P_Z$, $\theta \in \Theta$, $g \in \range(\calA_1) \subset \calG$ , and failure probability $\delta_2 \in (0, 1)$, we have
    \[
    \Cal(\wh{\theta}; g) \leq \rate_2(n, \delta_2, \ell; P_Z),
    \]
    where $\wh{\theta} \gets \calA_2(\theta, \{(X_m, \ell_m)\}_{m =1}^n)$, $\ell_m := \ell(\cdot, \wh{g}; Z_m)$, and $\rate_2$ is some rate function.
    \item With probability one, $\wh{\theta} = \tau \circ \theta$ for some injective mapping $\tau : \R \rightarrow \R$.
    % \item With probability one over the draws $Z_1, \dots, Z_n \sim_{\text{i.i.d.}} P_Z$ and $\wh{g} \gets \calA_1(Z_{1:n}, \delta_2)$, the pseudo outcomes satisfy $\chi(\wh{g}; Z_i) \in [0, 1]$.
\end{enumerate}
\end{ass}

The first two assumptions are direct analogues of those made in Section~\ref{sec:universal}, giving the learner control over nuisance estimation and calibration rates. In general, $\zeta_\theta \neq \zeta_{\wh{\theta}}$ for arbitrary univariate mappings $\tau : \R \rightarrow \R$. This is a problem, as the learner will estimate the additional nuisance associated with the initial parameter, $\zeta_\theta$, but Theorem~\ref{thm:conditional} will be instantiated with respect to the nuisance $\zeta_{\wh{\theta}}$. The following lemma shows that injectivity of $\tau$ ensures $\zeta_\theta = \zeta_{\wh{\theta}}$.

\begin{restatable}{lemma}{lemlevelset}
\label{lem:level_set}
Suppose $\ell$ is conditionally orthogonal, and
suppose $\varphi_1, \varphi_2 : \calX \rightarrow \R$ have the same level sets, i.e.\ they satisfy $\{\varphi_1^{-1}(c) : c \in \range(\varphi_1)\} = \{\varphi_2^{-1}(c) : c \in \range(\varphi_2)\}$. \footnote{For a function $f : \calX \rightarrow \calY$ that is not necessarily injective, we let $f^{-1}(c) := \{x \in \calX : f(x) = c\}$.} Then, the calibration functions satisfy $\gamma_{\varphi_1}^\ast = \gamma_{\varphi_2}^\ast$. Additionally, without loss of generality, we can assume $\zeta_{\varphi_1} \equiv \zeta_{\varphi_2}$.
\end{restatable}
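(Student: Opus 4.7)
The plan is to exploit the fact that two functions with the same level sets generate the same $\sigma$-algebra, and hence induce identical conditional expectations. This immediately makes both defining objects (the calibration function and the additional nuisance $\zeta_\varphi$) agree.

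First, I would show that the level-set hypothesis implies the existence of a bijection $\pi : \range(\varphi_1) \to \range(\varphi_2)$ with $\varphi_2 = \pi \circ \varphi_1$. Consequently $\sigma(\varphi_1(X)) = \sigma(\varphi_2(X))$, so for any integrable random variable $U$,
\[
\E[U \mid \varphi_1(X)] = \E[U \mid \varphi_2(X)] \quad P_X\text{-a.s.,}
\]
and for each $x \in \calX$ the conditioning events $\{\varphi_1(X) = \varphi_1(x)\}$ and $\{\varphi_2(X) = \varphi_2(x)\}$ coincide (both equal the level set through $x$).

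Next, I would plug this identity into Definition~\ref{def:cal_func}: for $i \in \{1,2\}$,
\[
\gamma_{\varphi_i}^\ast(x) = \arg\min_\nu \E\!\left[\ell(\nu, g_0; Z) \mid \varphi_i(X) = \varphi_i(x)\right].
\]
Since the two conditional expectations are pointwise equal as functions of $x$, the $\arg\min$ over $\nu$ coincides for every $x$, giving $\gamma_{\varphi_1}^\ast = \gamma_{\varphi_2}^\ast$. This handles the first claim and also shows the right-hand side of the Riesz equation defining $\zeta_\varphi$ is the same under $\varphi_1$ and $\varphi_2$.

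For the second claim, recall that $\zeta_\varphi$ is any element of $L^2(P_W)$ such that
\[
D_\eta \E\!\left[\partial \wt{\ell}(\gamma_\varphi^\ast, \eta_0; Z) \mid \varphi(X)\right](\eta - \eta_0) = \E\!\left[\langle \zeta_\varphi(W), (\eta - \eta_0)(W)\rangle \mid \varphi(X)\right]
\]
for all admissible directions $\eta - \eta_0$. Having just shown that $\gamma_{\varphi_1}^\ast = \gamma_{\varphi_2}^\ast$ and that $\E[\cdot \mid \varphi_1(X)] = \E[\cdot \mid \varphi_2(X)]$, the left-hand sides of the two representation equations (one per $\varphi_i$) are the same linear functional of $\eta - \eta_0$. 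Thus any Riesz representer valid for $\varphi_1$ is valid for $\varphi_2$, and we may simply select a common canonical representative, giving $\zeta_{\varphi_1} \equiv \zeta_{\varphi_2}$. The only subtlety is the non-uniqueness of the Riesz representer outside the range of directions probed by $\eta - \eta_0$; this is precisely what the ``without loss of generality'' language in the statement absorbs, and is handled by choosing the same representative in both cases.
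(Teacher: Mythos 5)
Your proof is correct and follows essentially the same approach as the paper: both rely on the observation that equal level sets give equal conditioning $\sigma$-algebras, hence equal conditional expectations, from which the equality of calibration functions is immediate and the defining condition for $\zeta_{\varphi_1}$ transfers to $\varphi_2$. The only cosmetic difference is that you frame the second claim via the Riesz-representation construction of $\zeta_\varphi$, whereas the paper argues directly from the vanishing cross-derivative condition in Definition~\ref{def:conditional}; the two framings are equivalent given the paper's own remark that $\zeta_\varphi$ is naturally obtained as a conditional Riesz representer.
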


Calibration algorithms that learn an injective post-processing mapping, such as Platt scaling and linear calibration, will preserve level sets. 
For an algorithm like isotonic calibration, one can either (a) estimate $\zeta_{\theta}$ and hope $\zeta_\theta \approx \zeta_{\wh{\theta}}$ or (b) if $\wh{\tau} \in \{f : \R \rightarrow \R \mid f \text{ is monotonic}\}$ learned via isotonic regression is not strictly increasing, one can release $\wh{\theta} = (\wh{\tau} + \xi)\circ \theta$ where $\xi$ is \textit{any} strictly increasing map.
For algorithms such as histogram binning and uniform mass binning, one can first learn then the level sets of $\wh{\theta}$, which are just based on the quantiles of the predictions of the initial estimator $\theta(X)$. Then, these level sets entirely specify the target nuisance $\zeta_{\wh{\theta}}$ to estimate.  We provide a version of Algorithm~\ref{alg:sample-split-cond} in Appendix~\ref{app:gen_cal} that does this. We now provide a convergence guarantee for Algorithm~\ref{alg:sample-split-cond}.

\begin{restatable}{theorem}{thmalgconditional}
\label{thm:alg_cond}
Suppose $\ell : \R \times \calG \times \calZ \rightarrow \R$ is an arbitrary, conditionally orthogonal loss 
%satisfying Assumption~\ref{ass:gen_cond_loss}. 
Let $\calA_1$ and $\calA_2$ satisfy Assumption~\ref{ass:alg_cond}, and suppose $\wh{\theta} = \wh{\tau} \circ \theta$ where $\wh{\tau}$ is almost surely injective. Then, with probability at least $1 - \delta_1 - \delta_2$, the output $\wh{\theta}$ of Algorithm~\ref{alg:sample-split-universal} run on a calibration dataset of $2n$ i.i.d.\ data points $Z_{1}, \dots, Z_{2n} \sim P$ satisfies
\[
\Cal(\wh{\theta}, \eta_0) \leq \frac{1}{2}\rate_1(n, \delta_1, \theta; P_Z) + \rate_2(n, \delta_2, \ell; P_Z).
\]
\end{restatable}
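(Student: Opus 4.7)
The plan is to reduce this to Theorem~\ref{thm:conditional} by (i) replacing the calibration function for $\wh{\theta}$ with the one for $\theta$ using the injectivity-based invariance in Lemma~\ref{lem:level_set}, and (ii) using sample splitting to independently control the two terms via Assumption~\ref{ass:alg_cond}. First, I would instantiate Theorem~\ref{thm:conditional} at the random output $\wh{\theta}$ with the learned nuisance $\wh{g} = (\wh{\eta},\wh{\zeta})$ obtained from the first $n$ samples, yielding
\[
\Cal(\wh{\theta}, \eta_0) \;\leq\; \frac{\beta}{2\alpha}\,\err(\wh{g}, g_{\wh{\theta}}; \gamma_{\wh{\theta}}^\ast) \;+\; \frac{\beta}{\alpha}\,\Cal(\wh{\theta}, \wh{g}),
\]
where $g_{\wh{\theta}} = (\eta_0, \zeta_{\wh{\theta}})$ is the true nuisance paired with the additional Riesz-type correction associated to the post-processed estimator $\wh{\theta}$. (The constants $\beta/\alpha$ should be absorbed into $\rate_1,\rate_2$ to match the statement.)

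Next, I would invoke Lemma~\ref{lem:level_set}: since $\wh{\theta} = \wh{\tau}\circ\theta$ with $\wh{\tau}$ almost surely injective, the estimators $\theta$ and $\wh{\theta}$ have identical level sets, so $\gamma_{\wh{\theta}}^\ast = \gamma_\theta^\ast$ and without loss of generality $\zeta_{\wh{\theta}} \equiv \zeta_\theta$, hence $g_{\wh{\theta}} = g_\theta$. This identification is the crux: although $\calA_1$ was told to estimate the nuisance tailored to the initial estimator $\theta$, the bound from Theorem~\ref{thm:conditional} applied at $\wh{\theta}$ in fact requires the nuisance tailored to $\wh{\theta}$. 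Injectivity of the post-processing is precisely what makes these objects coincide. After substitution, the previous display becomes
\[
\Cal(\wh{\theta}, \eta_0) \;\lesssim\; \err(\wh{g}, g_\theta; \gamma_\theta^\ast) \;+\; \Cal(\wh{\theta}, \wh{g}).
\]

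Now I would use sample splitting. Condition on $Z_{1:n}$; then $\wh{g}$ is deterministic, and by Assumption~\ref{ass:alg_cond}(1) applied with the initial estimator $\theta$ and the marginal $P_Z$, with probability at least $1-\delta_1$,
\[
\err(\wh{g}, g_\theta; \gamma_\theta^\ast) \;\leq\; \rate_1(n, \delta_1, \theta; P_Z).
\]
Independently, the calibration sample $Z_{n+1:2n}$ is i.i.d.\ from $P_Z$ and independent of $\wh{g}$. Conditioning on $\wh{g}$, Assumption~\ref{ass:alg_cond}(2) applied to the fixed nuisance $g = \wh{g} \in \range(\calA_1)$ and the fresh sample gives, with probability at least $1-\delta_2$,
\[
\Cal(\wh{\theta}, \wh{g}) \;\leq\; \rate_2(n, \delta_2, \ell; P_Z).
\]
A union bound over the two failure events, followed by integrating out the conditioning, yields the claimed high-probability bound.

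The main obstacle is step (ii): reconciling the two different ``true'' nuisance targets $\zeta_\theta$ and $\zeta_{\wh{\theta}}$. Without the level-set-preservation provided by Lemma~\ref{lem:level_set}, the first-half nuisance estimate $\wh{g}$ would be targeting the wrong object, and the nuisance-error term in Theorem~\ref{thm:conditional} would not be controlled by $\rate_1$. All other steps are direct applications: the Theorem~\ref{thm:conditional} decomposition decouples the error, the first term is conditionally deterministic in $Z_{1:n}$ and controlled by the nuisance rate, and the second term is conditionally (on $\wh{g}$) an instance of off-the-shelf calibration on fresh i.i.d.\ data, hence controlled by the calibration rate.
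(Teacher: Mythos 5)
Your proof is correct and follows essentially the same route as the paper's: apply Theorem~\ref{thm:conditional} at $\wh{\theta}$ with learned nuisance $\wh{g}$, use Lemma~\ref{lem:level_set} (via level-set preservation under the injective post-processing) to replace $\gamma_{\wh{\theta}}^\ast, \zeta_{\wh{\theta}}$ with $\gamma_\theta^\ast, \zeta_\theta$, then control the two terms separately by conditioning on each half of the split sample and taking a union bound. You even correctly note the $\beta/\alpha$ factors from Theorem~\ref{thm:conditional} that the paper's stated bound silently omits, which would need to be absorbed into the rate functions for the statement to be exact.
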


\section{Experiments}
\label{sec:experiments}
We now evaluate the performance of our cross-calibration algorithms, namely Algorithms~\ref{alg:cross-cal-universal} and \ref{alg:cross-cal-cond}. We consider two settings. First, we examine the viability of of our calibration algorithm for universally orthogonal losses using observational data. In particular, we show how Algorithm~\ref{alg:cross-cal-universal} can be used to calibrate estimates of the CATE of 401(k) eligibility and the conditional LATE of 401(k) participation on an individual's net financial assets. 
Second, we measure the ability of Algorithm~\ref{alg:cross-cal-cond} to calibrate estimates of conditional quantiles under treatment on synthetic data. We examine the performance of our algorithm for several quantile both in terms of calibration error and average loss.

\subsection{Effects of 401(k) Participation/Eligibility on Financial Assets}

First, we consider the task of constructing and calibrating estimates of the heterogeneous effect of 401(k) eligibility/participation on an individual's net financial assets. To do this, we use the 401(k) dataset leveraged in \citet{chernozhukov2004effects}, \citet{kallus2019localized}, and  \citet{chernozhukov2024applied}. Since \citet{chernozhukov2004effects} argue that eligibility for 401(k) satisfies conditional ignorability given a sufficiently rich set of features\footnote{The entire set of features are as follows: \texttt{[`age', `inc', `fsize', `educ', `db', `marr', `male', `twoearn', `pira', `nohs', `hs', `smcol', `col', `hown']}}, we aim to measure the CATE of eligibility on net financial assets. This ignorability is not known to be satisfied for 401(k) participation, and thus we instead aim to measure to conditional LATE of 401(k) participation on net financial assets with eligibility serving as an instrument. For each parameter (either CATE or conditional LATE), we randomly split the dataset into three folds of uneven size: we use 60\% of the data to construct the initial parameter estimate, 25\% of the data to perform calibration, and reserve 15\% of the data as a test set. 

\paragraph{Model Training and Calibration:}

To fit the initial model CATE/conditional LATE model, we split the training data randomly into $K = 5$ evenly-sized folds. We use cross-fitting to construct appropriate pseudo-outcomes, i.e.\ for each $k \in [5]$ we use data in all but the $k$th fold to estimate nuisances and then appropriately use these estimates to transform observations in the $k$th fold. In the case of the CATE, we produce estimates $\wh{\pi}^{(-k)}$, $\wh{\mu}^{(-k)}$ of the propensity score $\pi_0(x) := \P(A = 1 \mid X = x)$ and expected outcome mapping $\mu_0(x) := \E[Y \mid X = x, A = a]$ using gradient boosted decision and regression tress, respectively. We then produce pseudo-outcomes on the $k$th fold, per Equation~\eqref{eq:pseudo_CATE}, and use gradient-boosted regression trees to regress these pseudo-outcomes onto covariates. 

In the case of the conditional LATE, as the instrument policy $\zeta_0(x) := \P(Z = 1 \mid X =x)$ is not assumed to be known, instead of using the universally orthogonal loss discussed in Equation~\eqref{eq:pseudo_LATE}, we instead use the following loss detailed in \citet{syrgkanis2019machine} and \citet{lan2023causal}:
\[
\ell(\theta, (\mu, \pi, \zeta); W) := \frac{1}{2}\Big(\theta(x) - \quad \underbrace{\tau(x) - \frac{(\wt{y} - 
\tau(x)\wt{a})\wt{Z}}{\zeta(x)} }_{=:\chi_{\LATE}'(g; W)}\quad\Big)^2
\]
where $\tau(x) := \frac{\mu(1, x) - \mu(0, x)}{\zeta(x)}$, $\wt{y} := y - \mu(a, x)$, $\wt{Z} := Z - \zeta(x)$, and $\wt{a} := a - \pi(x)$, where $\mu_0$ and $\pi_0$ are as defined above. We estimate all nuisances and construct pseudo-outcomes $\chi'_{\LATE}$ as in the case of the CATE, once again using either gradient-boosted regression or decision trees based on appropriateness. Again, we regress pseudo-outcomes onto covariates using gradient-boosted regression trees to obtain initial parameter estimates.

After initial models are trained, we run Algorithm~\ref{alg:cross-cal-universal} (cross-calibration) on the 25\% of the data reserved for calibration, once again using $K = 5$ folds. We estimate all nuisances again by either using gradient-boosted decision or regression trees. We perform calibration using three different algorithms: isotonic calibration, histogram binning with $B = 20$ buckets, and linear calibration, which performs simple linear regression with intercept of the constructed pseudo-outcomes onto the initial model predictions.

\paragraph{Comparing Calibration Error in Quartiles} We assess calibration of both the pre- and post-calibrated models by approximating the actual target treatment effect with the quartiles of the models' predictions. We let $\wh{\theta}$ denote either the pre or post calibrated model (either CATE or conditional LATE). Re-using the calibration dataset (25\% of the data), we compute the order statistics of model predictions $\wh{\theta}(X)_{(1)}, \dots, \wh{\theta}(X)_{(N)}$. We then define four buckets based on the quartiles of the above order statistics, i.e.
\[
B_i = \left\{x \in \calX : \wh{\theta}(x) \in \Big(\wh{\theta}(X)_{(\lfloor iN/4\rfloor)}, \wh{\theta}(X)_{(\lfloor (i + 1)N/4\rfloor)}\Big] \right\} \qquad \text{for } i = 0, 1, 2, 3
\]
where we assume $\wh{\theta}(X)_{(0)} = -\infty$ and $\wh{\theta}(X)_{(N)} = \infty$. Next, we use cross-fitting with $K = 5$ folds to transform the 15\% of the data reserved for testing into pseudo-outcomes $\chi_m$.\footnote{This is done in the same manner as discussed in the previous subsection} We assign each transformed sample $(X_m, \chi_m)$ in the test set to an appropriate bin based on the predicted value of $\wh{\theta}(X_m)$, and average the pseudo outcomes falling into each bin. We then approximate the $L^2$ calibration error, which is computed as: $\wh{\Cal}(\wh{\theta}, g_0) = \frac{1}{4}\sum_{i = 1}^4 (\overline{\chi}_i - \overline{\theta}_i)^2$, where $\overline{\theta}_i$ denotes the average of the $\wh{\theta}(X_m)$ falling into bin $i$ and $\wh{\chi}_i$ denotes the average pseudo-outcome $\chi_m$ in bin $i$.

\begin{figure}
    \centering
    \begin{subfigure}{0.4\textwidth}
    \centering
    \includegraphics[width=\linewidth]{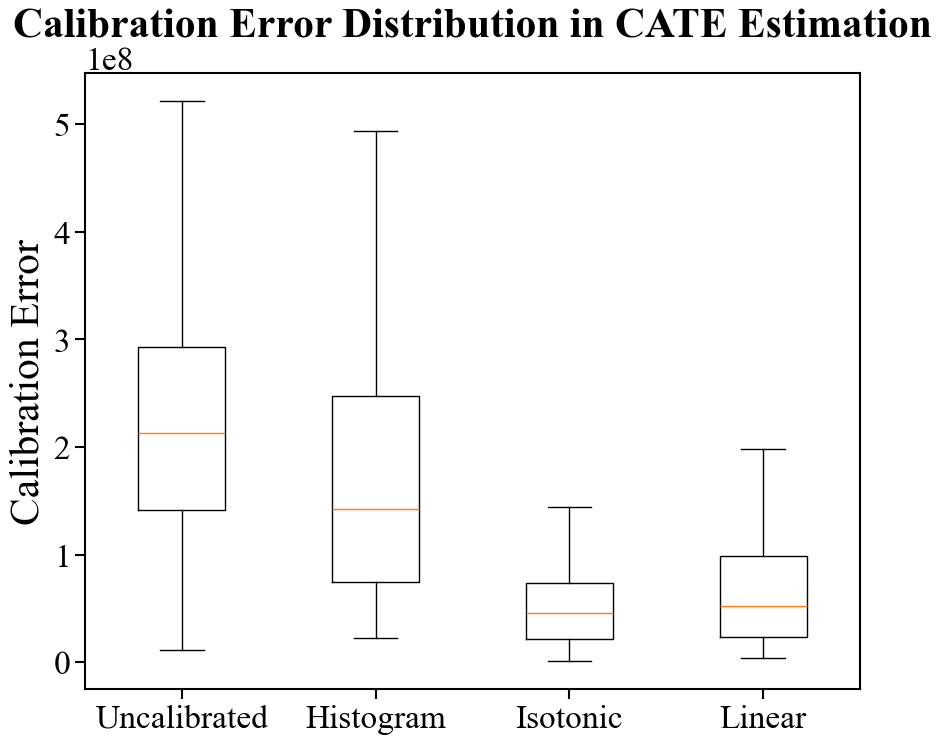}
    \caption{Calibration Error for CATE}
    \label{fig:box:CATE}
    \end{subfigure}
    \begin{subfigure}{0.4\textwidth}
    \centering
    \includegraphics[width=\linewidth]{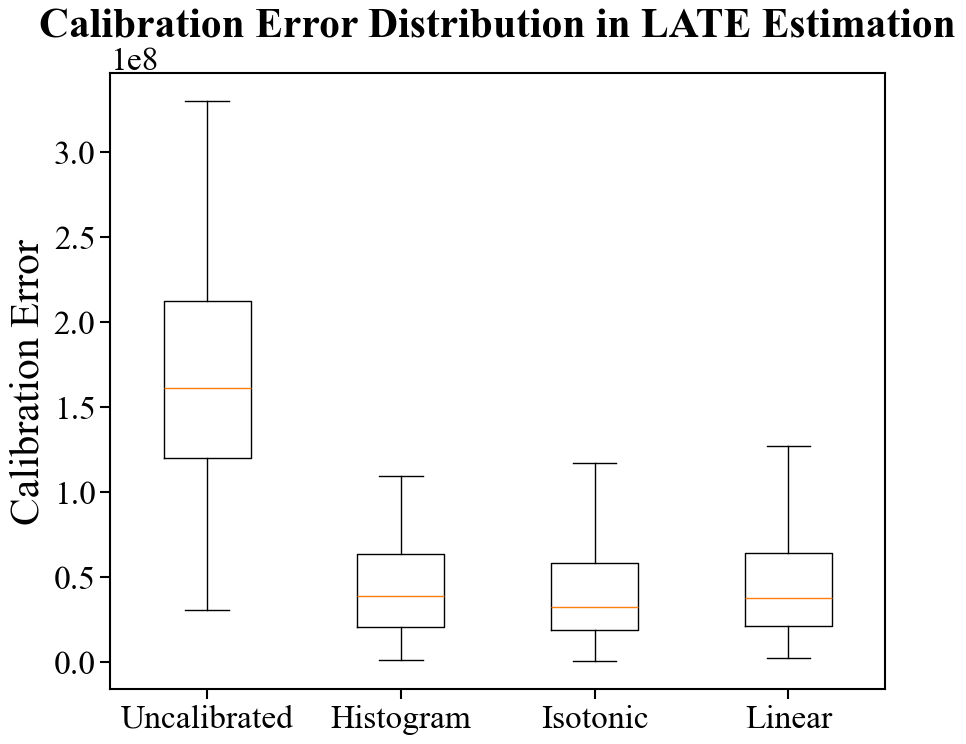}
    \caption{Calibration Error for LATE}
    \label{fig:box:LATE}
    \end{subfigure}
    \caption{}
    \label{fig:box}
\end{figure}

\begin{comment}
Figure~\ref{fig:GATE} below plots the average pseudo-outcome in each bin ($y$-axis) along with a 95\% confidence interval against the mean prediction made by $\wh{\theta}$ in the corresponding bucket ($x$-axis). At the top of figure the (approximate) $L^2$ calibration error is noted, which is computed as: $\wh{\Cal}(\wh{\theta}, g_0) = \frac{1}{4}\sum_{i = 1}^4 (\overline{\chi}_i - \overline{\theta}_i)^2$. Here, $\overline{\chi}_i$ denotes the average the pseudo-outcomes mapped into $B_i$ and $\overline{\theta}_i$ denotes the average of the model predictions mapped into bucket $B_i$.
If $\wh{\theta}$ were calibrated, not only would the midpoints of the prediction intervals be monotonically increasing, but they would also be close to the red line indicating perfect calibration [ADD THIS IN]. As seen, before calibration, the measured average treatment effects in each bucket (computed in terms of pseudo-outcomes) are not monotonically increasing. In particular, in the first bucket, the model predicts the incorrect sign, positing a negative group effect when the computing effect is actually positive. After applying Algorithm~\ref{} using histogram binning with twenty buckets, not only are the midpoints of the intervals monotonically increasing and signs of predicted effect are correct, but also the center of each interval is noticeably closer to the red line indicating perfect calibration.
\end{comment}
We repeat the above experiment over $M = 100$ random splits of the initial 401(k) dataset. Figure~\ref{fig:box} shows in a box and whisker plot the empirical distribution of the $L^2$ calibration errors over these $M$ runs for the base (uncalibrated) model, and the model calibrated using each of thee aforementioned calibration algorithms. Each box is centered at the median calibration error $m$, the bottom of the box is given by the first quartile of calibration error $Q1$, and the top of the box is given by the third quartile $Q3$ of calibration error. The top (bottom) of the whiskers are given by the maximum (resp. minimum) of the observations falling within $m \pm 1.5\times(Q3 - Q1)$. 

From Figure~\ref{fig:box}, we see that all calibration algorithms result in noticeably smaller calibration error. In particular in the setting of the CATE of 401(k) eligibility on net financial assets, the third quartile for the calibration error using isotonic calibration and linear calibration falls below the first quartile of calibration error for the uncalibrated model. Likewise, in the LATE model, which aims to measure the effect of 401(k) participation on net financial assets, the third quartile of calibration error under histogram binning and linear calibration falls below the first quartile of calibration error for the uncalibrated model. This indicates that our calibration algorithms have a significant impact on the calibration error of the produced models.

\subsection{Calibrating Estimates of Conditional Quantiles Under Treatment}

We now show how Algorithm~\ref{alg:cross-cal-cond} (cross-calibration for conditionally orthogonal loss functions) can be used to calibrate estimates of the conditional $Q$th quantile under treatment. In particular, we study the the impact of sample size and chosen quantile have on both $L^2$ calibration error and average loss.

For this experiment, we leverage synthetically generated data. At the start of the experiment, we generate two slope vector $\beta^Y, \beta^\pi \in \R^{100}$ where $\beta^Y_{1:20}, \beta^\pi_{1:20},  \sim \calN(0, I_{20})$ are i.i.d.\ and $\beta^Y_i = \beta^\pi_i = 0$ for all $i > 20$. Then, for $N \in [500, 1000, 1500, 2000, 2500, 3000]$, we generate i.i.d.\ covariates $X_1, \dots, X_N \sim \calN(0, I_{100})$ and treatments $A_1, \dots, A_N$ where $A_i \sim \mathrm{Bern}(p_i)$ and 
\[
p_i := \max\left\{C, \min\left\{\frac{1}{1 + \exp(\langle \beta^\pi, X_i\rangle)}, 1 - C\right\}\right\}.
\]
In the above, $C = 0.05$ is a fixed clipping parameter. Finally, we generate potential outcomes under treatment and control as $Y_i(1) = Y_i(0) = \langle \beta^Y, X_i\rangle + \epsilon_i$, where $\epsilon_1, \dots, \epsilon_N$ are i.i.d.\ standard normal noise variables. We note that, because we are only interested in the conditional quantile under treatment, we make potential equivalent under both treatment and control for simplicity.

Given data as generated above, we then train an initial model $\wh{\theta}$ using gradient-boosted regression trees. In more detail, in training these regression trees, we leverage the loss $\wt{\ell}_{\QTE}(\theta, p; z) := a \cdot p(x) (y - \theta(x))(Q -\mathbbm{1}_{y \leq \theta(x)})$, where we determine the nuisances using $K = 5$ fold cross-fitting, i.e.\ where we estimate the propensity $\pi_0(x) := \P(A = 1 \mid X = x) \equiv p_0(x)^{-1}$ using gradient-boosted decision trees. We note that this loss is Neyman orthogonal conditional on covariates $X$, and thus using this loss allows for the efficient estimate of the quantile under treatment. However, this loss is not conditionally orthogonal, and thus is a poor fit for performing calibration.

We then use an additional $N$ samples generated in the same manner as above to perform calibration using Algorithm~\ref{alg:cross-cal-cond} and loss function $\ell_{\QTE}(\theta, (p, f); z)$ (outlined in Equation~\eqref{eq:qte_loss}). We estimate the inverse propensity $p_0(x)$ by using gradient-boosted decision trees to estimate the propensity $\pi_0(x) = p_0(x)^{-1}$. We estimate the additional CDF-like nuisance $f_{\wh{\theta}}(x) := \P(Y(1) \leq \gamma_{\wh{\theta}}^\ast(X) \mid X = x)$ by instead estimating $\P(Y(1) \leq \wh{\theta}(X) \mid X = x)$, again using gradient-boosted decision trees. Heuristically, we are hoping that $\wh{\theta}$ is a ``reasonable rough estimate'' for the calibration function $\gamma_{\wh{\theta}}^\ast$. In both settings, we perform calibration using linear calibration. After we estimate nuisance, we run the cross-calibration algorithm by using linear calibration.
We repeat the entire above process $M = 50$ times for three quantile values $Q \in \{0.5, 0.75, 0.9\}$, and plot both the empirical $L^2$ calibration errors and average losses below (measuring loss according to $\wt{\ell}_{\QTE}(\theta, p_0; z)$, i.e.\ the loss evaluated at the true nuisances).

\paragraph{Results:}

Figure~\ref{fig:quant} displays the results of the experimental procedure outlined above. Displayed in the left-hand column are plots demonstrating the empirical $L^2$ calibration at various sample sizes averaged over the $M = 50$ runs. Likewise, in the right-hand column, the sample loss is displayed averaged again over $M$ runs. We include point-wise valid 95\% confidence intervals for all plots. 

Regardless of the sample size $N$ and chosen quantile $Q$, we see a significant decrease in the $L^2$ calibration error. This shows that not only does Algorithm~\ref{alg:cross-cal-cond} exhibit favorable theoretical convergence guarantees, but that it also offer strong performance in practice. Typically, one hopes calibration algorithms enjoy a ``do no harm'' property, i.e.\ that calibrating a parameter estimate will not significantly increase loss. While we do not formally prove this, the right-hand column of Figure~\ref{fig:quant} demonstrates that calibrating typically decreases loss, as desired. Moreover, the loss obtained by using $N$ samples to construct an initial estimate and $N$ to perform calibration is comparable to the loss had $2N$ samples directly been used to estimate the unknown conditional parameter. This suggests that reserving some samples for calibration yields significantly lower calibration error without adversely affecting performance.

\begin{figure}
    \centering
    \begin{subfigure}{0.4\textwidth}
    \centering
    \includegraphics[width=\linewidth]{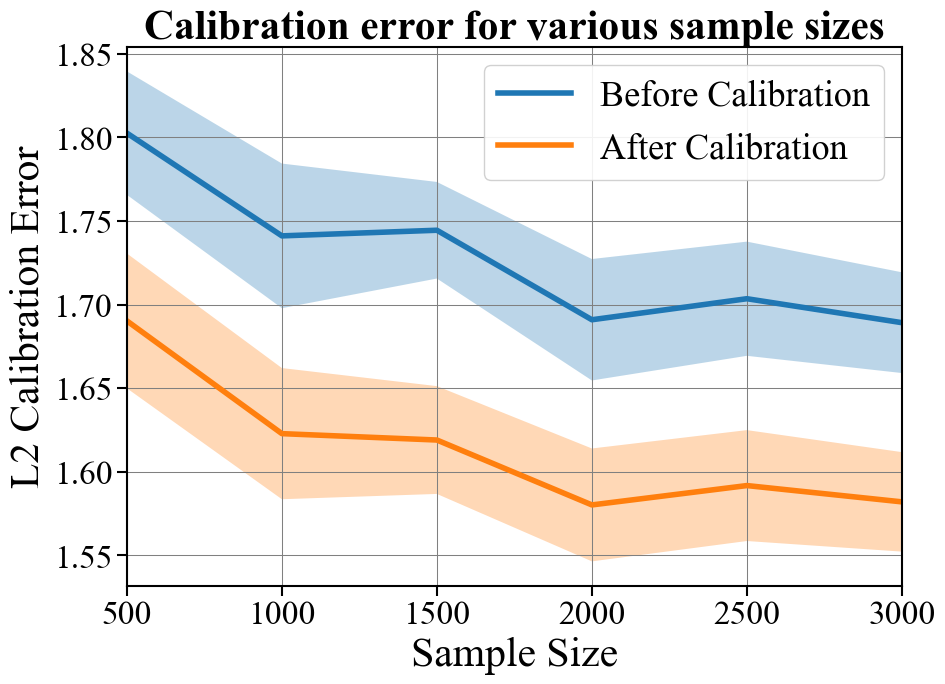}
    \caption{Calibration Error for $Q = 0.6$}
    \label{fig:quant:error_0.6}
    \end{subfigure}
    \begin{subfigure}{0.4\textwidth}
    \centering
    \includegraphics[width=\linewidth]{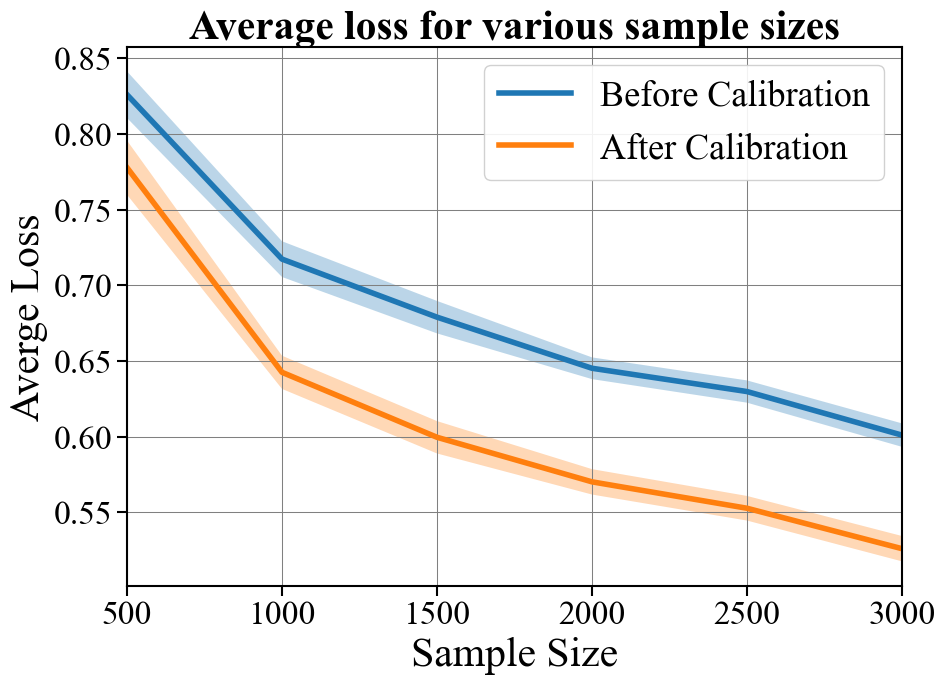}
    \caption{Average Loss for $Q = 0.6$}
    \label{fig:quant:loss_0.6}
    \end{subfigure}
    \begin{subfigure}{0.4\textwidth}
    \centering
    \includegraphics[width=\linewidth]{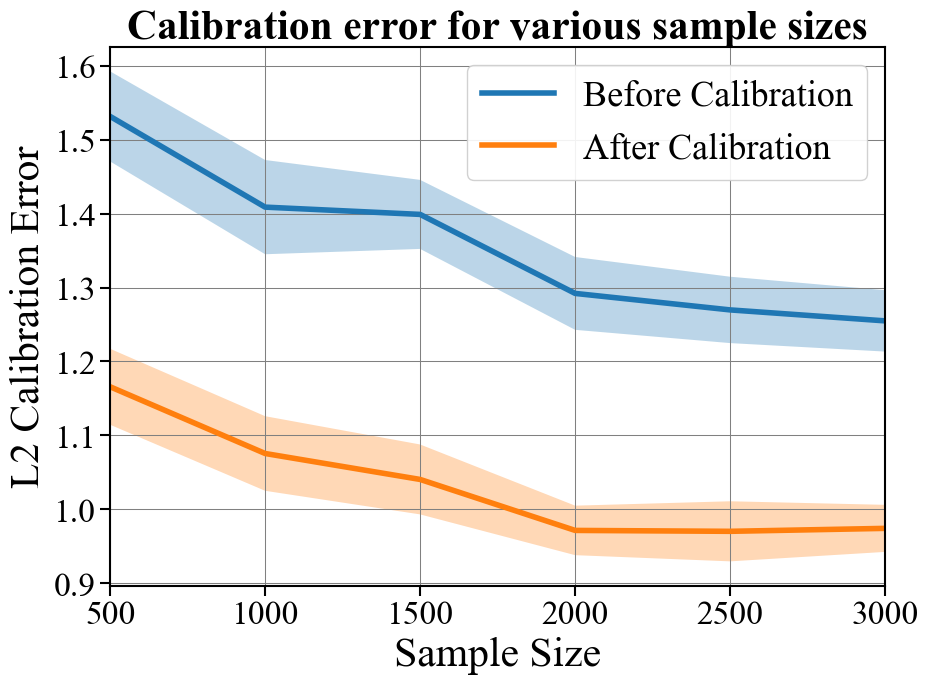}
    \caption{Calibration Error for $Q = 0.75$}
    \label{fig:quant:error_0.75}
    \end{subfigure}
    \begin{subfigure}{0.4\textwidth}
    \centering
    \includegraphics[width=\linewidth]{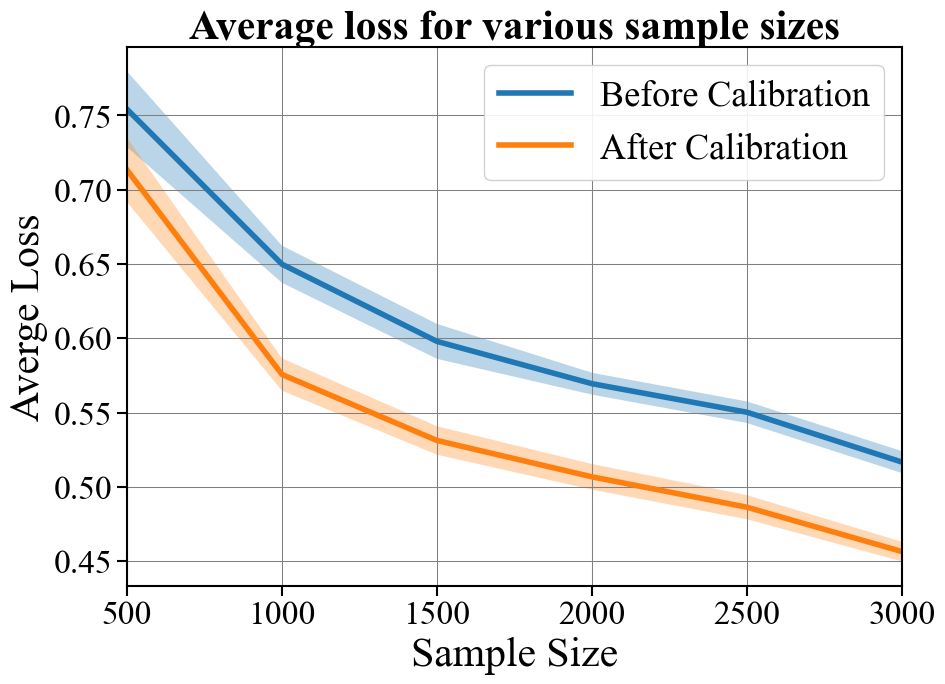}
    \caption{Average Loss for $Q = 0.75$}
    \label{fig:quant:loss=0.75}
    \end{subfigure}
    \begin{subfigure}{0.4\textwidth}
    \centering
    \includegraphics[width=\linewidth]{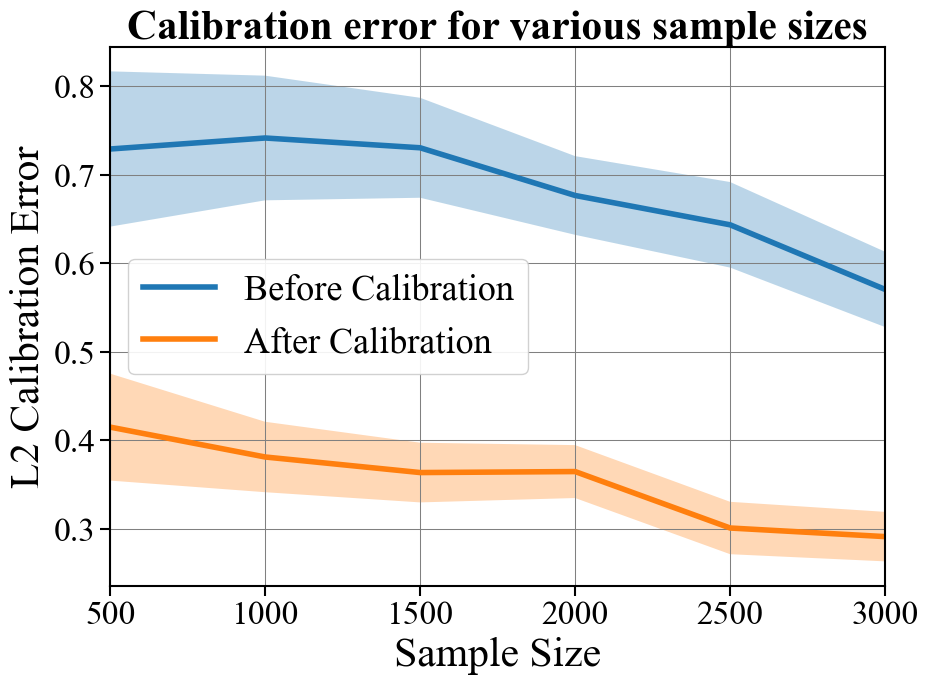}
    \caption{Calibration Error for $Q = 0.9$}
    \label{fig:quant:error_0.9}
    \end{subfigure}
    \begin{subfigure}{0.4\textwidth}
    \centering
    \includegraphics[width=\linewidth]{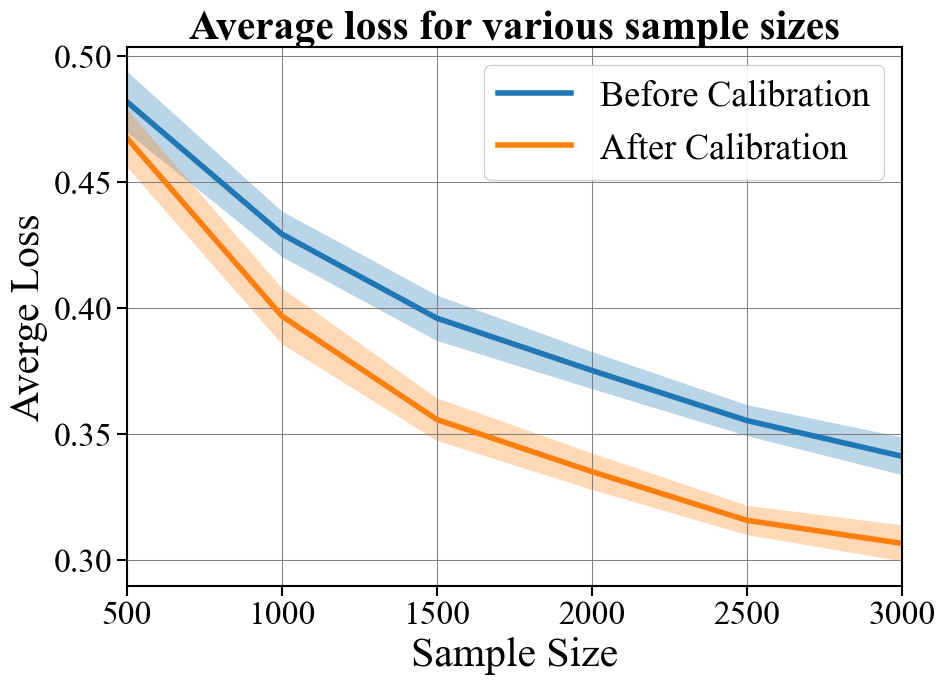}
    \caption{Average Loss for $Q = 0.9$}
    \label{fig:quant:loss_0.9}
    \end{subfigure}
    \caption{We plot the performance of Algorithm~\ref{alg:cross-cal-cond} in calibrating estimates of conditional quantiles under treatment using linear calibration. We display both the sample $L^2$ calibration error and the average loss for $N \in \{500, 1000, 1500, 2000, 2500, 3000\}$ and $Q \in \{0.6, 0.75, 0.9\}$ (where an additional $N$ samples are used for calibration). We also display corresponding 95\% pointwise-valid confidence intervals. Cross-calibration not only decreases calibration error (as expected), but also decreases loss.}
    \label{fig:quant}
\end{figure}

\section{Conclusion}
\label{sec:conc}

In this work, we constructed a framework for calibrating general estimates of heterogeneous causal effects with respect to nuisance-dependent losses. By leveraging variants of Neyman orthogonality, we were able to bound the $L^2$ calibration error of $\theta$ by two decoupled terms. One term, roughly, represented the error in nuisance estimation, while the other term represented the $L^2$ calibration error of $\theta$ in a world where the learned nuisances were true. These bounds suggested natural sample-splitting and cross-calibration algorithms, for which we proved high-probability convergence guarantees. Our algorithms also admitted ``do no harm'' style guarantees. We empirically evaluated our algorithms in Section~\ref{sec:experiments}, in which we considered both observational and synthetic data.

While our provided contributions are quite general, there are still interesting directions for future work. First, in our work, we only measure the convergence of our algorithms via the $L^2$-calibration error. Depending on the situation, other notions of calibration error may be more appropriate. For instance, \citet{gupta2021distribution} analyze the convergence of histogram/uniform mass binning in terms of $L^\infty$ calibration error. Likewise, \citet{globus2023multicalibration} study $L^1$ muli-calibration error. We leave it as interesting future work to extend our results to the general setting of measuring $L^p$ calibration error. Further, it would be interesting to study how the calibration of estimates of heterogeneous causal effects impacts utility in decision making tasks.

\section{Acknowledgments}

VS is supported by NSF Award IIS 2337916.

\newpage
\bibliography{bib.bib}{}
\bibliographystyle{plainnat}

\appendix

\section{Calibration Error Decomposition Proofs}
\label{app:decomp}

In this appendix, we prove the main error decompositions from Sections~\ref{sec:universal} an d\ref{sec:conditional}. These results provide two-term, decoupled bounds on the $L^2$ calibration error of an arbitrary, fixed parameter $\theta : \calX \rightarrow \R$ in terms of $L^2$ calibration error assuming the learned nuisances were correct, and a term measuring the distance between the learned nuisances and the true, unknown nuisance parameters. 

\subsection{Universally Orthogonality}
\label{app:decomp:universal}
We start by proving Theorem~\ref{thm:universal}, which provides the claimed decoupled bound under the assumption that $\ell$ is universally orthogonal (Definition~\ref{def:universal}).
\thmuniversal*
\begin{proof}[Proof of Theorem~\ref{thm:universal}]
We start by adding a useful form of zero to the integrand, which yields
\begin{align*}
\Cal(\theta, g_0)^2 &= \E\left(\E\left[\partial \ell(\theta, g_0; Z) \mid \theta(X) \right]^2\right) \\
&= \underbrace{\E\left(\E\left[\partial \ell (\theta, g_0; Z) \mid \theta(X) \right] \cdot \left\{\E\left[\partial \ell (\theta, g_0; Z) \mid \theta(X)\right] - \E\left[\partial \ell(\theta, g ; Z) \mid \theta(X)\right]\right\}\right)}_{T_1} \\
&\qquad + \underbrace{\E\left(\E\left[\partial \ell(\theta, g_0; Z) \mid \theta(X)\right] \cdot \E\left[\partial \ell(\theta, g; Z) \mid \theta(X)\right]\right)}_{T_2}.
\end{align*}

We bound $T_1$ and $T_2$ separately. As a first step in bounding $T_1$, note that by a second order Taylor expansion with respect to the nuisance estimate $g \in \calG$ with Lagrange remainder, we have
\begin{align*}
&-\E\left[\partial \ell (\theta, g_0; Z) \mid \theta(X)\right] + \E\left[\partial \ell(\theta, g ; Z) \mid \theta(X)\right] \\
&\qquad = D_{g}\E\left[\partial \ell(\theta, g_0 ; Z) \mid \theta(X)\right](g - g_0) + \frac{1}{2}D_{g}^2\E\left[\partial \ell(\theta, \wb{g}; Z) \mid \theta(X)\right](g - g_0, g - g_0) \\
&\qquad=\frac{1}{2}D_{g}^2\E\left[\partial \ell(\theta, \wb{g}; Z) \mid \theta(X)\right](g - g_0, g - g_0).
\end{align*}
In the above, $\wb{g} \in [g_0, g]$, and the first-order derivative (with respect to $g$) vanishes due to the assumption of Definition~\ref{def:universal}. This is because we have Taylor expanded around the true, unknown nuisance $g_0 = g_0$.

With this, we can apply the Cauchy-Schwarz inequality, which furnishes
\begin{align*}
T_1 &\leq \frac{1}{2}\sqrt{\E\left(\E\left[\partial \ell(\theta, g_0 ; Z) \mid \theta(X)\right]^2\right)}\sqrt{\E\left(\left\{D_{g}^2\E\left[\partial \ell(\theta, \wb{g}; Z) \mid \theta(X)\right](g - g_0, g - g_0)\right\}^2\right)}\\
&\leq \frac{1}{2}\sqrt{\E\left(\E\left[\partial \ell(\theta, g_0 ; Z) \mid \theta(X)\right]^2\right)}\sqrt{\E\left(\left\{D_{g}^2\E\left[\partial \ell(\theta, \wb{g}; Z) \mid X\right](g - g_0, g - g_0)\right\}^2\right)}\\
&\leq \frac{1}{2}\Cal(\theta, g_0)\cdot \err(g, g_0; \theta)
\end{align*}
In the second line, we apply Jensen's inequality inside the conditional expectation.

Bounding $T_2$ is more straightforward. Applying the Cauchy-Schwarz inequality, we have:
\begin{align*}
T_2 &\leq \sqrt{\E\left(\E\left[\partial \ell(\theta, g_0; Z) \mid \theta(X)\right]^2\right)}\sqrt{\E\left(\E\left[\partial \ell(\theta, g; Z) \mid \theta(X)\right]^2\right)} \\
&= \Cal(\theta, g_0)\cdot\Cal(\theta, g)
\end{align*}

This line of reasoning, in total, yields that
\[
\Cal(\theta, g_0)^2 \leq \frac{1}{2}\Cal(\theta, g_0) \cdot \err(g, g_0; \theta) + \Cal(\theta, g_0) \cdot \Cal(\theta, g).
\]

Dividing through by $\Cal(\theta, g_0)$ yields the claimed bound.
\end{proof}

\subsection{Conditional Orthogonality}
\label{app:decomp:conditional}
We now turn to proving the second error bound, which holds in the case that $\ell$ satisfies the weaker assumption of conditional orthogonality (Definition~\ref{def:conditional}). We generically write $g = (\eta, \zeta)$ for a fixed pair of nuisance parameters. Further, for any fixed post-processing function $\varphi : \calX \rightarrow \calX'$, we write  $g_\varphi = (\eta_0, \zeta_\varphi)$ where $\zeta_\varphi$ is a nuisance parameter ensuring the vanishing cross-derivative condition. To prove Theorem~\ref{thm:conditional}, we will need two technical lemmas. In what follows, we remind the reader of the definition of the \textit{calibration function} $\gamma_\varphi : \calX \times \calG \rightarrow \R$ denote the calibration function under the orthogonalized loss $\ell$, i.e.\ $\gamma_\varphi$ is specified by
\[
\gamma_\theta(x; g) := \arg\min_\nu \E[\ell(\theta, g; Z) \mid \theta(X) = \theta(x)],
\]
We recall the identity $\gamma_\varphi^\ast \equiv \gamma_\varphi(\cdot; (\eta_0, \zeta))$ for any estimate $\zeta$ of the second nuisance parameter, which will be useful in the sequel.

The first lemma we prove measures the distance (in terms of the $L^2(P_X)$ norm) between the true calibration $\gamma_\theta^\ast$ and the calibration function under any other nuisance pair $g = (\eta, \zeta)$, $\gamma_\theta(\cdot; g)$. Here, $\theta : \calX \rightarrow \R$ should be viewed as some arbitrary estimator. We can bound this distance in terms of the complicated looking error term, which was first introduced in Theorem~\ref{thm:universal}. This term actually simplifies rather nicely, as was seen in the prequel when we computed the quantity for the task of calibrating estimates of conditional $Q$th quantiles under treatment.

\begin{lemma}
\label{lem:error_cal_cal}
Let $\theta : \calX \rightarrow \R$ be an arbitrary function, assume $\ell$ is conditionally orthogonal, and let $g_\theta = (\eta_0, \zeta_\theta)$ be a pair of nuisance functions guaranteeing the vanishing cross-derivative condition in Definition~\ref{def:conditional}. Let $g = (\eta, \zeta)$ be some fixed pair of nuisance functions. Then, assuming the base loss $\ell$ satisfies $\alpha$-strong convexity (Assumption~\ref{ass:strong_conv}), we have
\[
\|\gamma_\theta^\ast - \gamma_\theta(\cdot; g)\|_{L^2(P_X)} \leq \frac{1}{2\alpha}\err(g, g_\theta; \gamma_\theta^\ast),
\]
where we define 
$\err(g, h; \varphi) := \sup_{f \in [g, h]}\sqrt{\E\left(\left\{D_g^2\E\left[\partial \ell(\varphi, f; Z) \mid X\right](g - h, g - h)\right\}^2\right)}$.
\end{lemma}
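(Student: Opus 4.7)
The plan is to combine $\alpha$-strong convexity (to translate a pointwise distance into a first-order residual) with a second-order functional Taylor expansion in the nuisance argument (whose leading term will be killed by conditional orthogonality). Both $\gamma_\theta^\ast$ and $\gamma_\theta(\cdot; g)$ are defined as conditional loss minimizers, so they satisfy the first-order conditions $\E[\partial \ell(\gamma_\theta^\ast, g_\theta; Z) \mid \theta(X)] = 0$ and $\E[\partial \ell(\gamma_\theta(\cdot; g), g; Z) \mid \theta(X)] = 0$, where the first equality uses the $\zeta$-invariance from Corollary~\ref{cor:err_inv} to legitimately evaluate against $g_\theta = (\eta_0, \zeta_\theta)$ rather than the generic $g_0$.

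First, Assumption~\ref{ass:strong_conv} combined with the tower property implies that $\nu \mapsto \E[\partial \ell(\nu, g; Z) \mid \theta(X) = \theta(x)]$ is $\alpha$-strongly monotone. Combining this with the first-order condition for $\gamma_\theta(\cdot; g)$ yields the pointwise estimate $\alpha|\gamma_\theta^\ast(x) - \gamma_\theta(x; g)| \leq |\E[\partial \ell(\gamma_\theta^\ast(x), g; Z) \mid \theta(X) = \theta(x)]|$. To exploit conditional orthogonality, I would next rewrite the right-hand side by subtracting the zero quantity $\E[\partial \ell(\gamma_\theta^\ast(x), g_\theta; Z) \mid \theta(X) = \theta(x)]$, and then perform a second-order Taylor expansion of $g \mapsto \E[\partial \ell(\gamma_\theta^\ast, g; Z) \mid \theta(X)]$ around $g_\theta$. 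The first-order term in this expansion is precisely the cross-derivative appearing in Definition~\ref{def:conditional} applied with $\varphi = \theta$, and so vanishes; only a Lagrange-type remainder $\tfrac{1}{2}D_g^2\E[\partial\ell(\gamma_\theta^\ast, \bar{g}; Z) \mid \theta(X)](g - g_\theta, g - g_\theta)$ survives for some $\bar{g} \in [g_\theta, g]$.

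To match the form of $\err$, which conditions on $X$ rather than on $\theta(X)$, I would then square both sides, integrate against $P_X$, and invoke the tower property together with conditional Jensen's inequality, which gives $\E[(\E[R \mid \theta(X)])^2] \leq \E[R^2]$ for $R = D_g^2\E[\partial\ell(\gamma_\theta^\ast, \bar{g}; Z) \mid X](g - g_\theta, g - g_\theta)$ (after interchanging $D_g^2$ with the inner conditional expectation, which is valid under standard regularity conditions on $\ell$). Taking a supremum over $\bar{g} \in [g_\theta, g]$ and a square root then delivers the claimed bound $\|\gamma_\theta^\ast - \gamma_\theta(\cdot; g)\|_{L^2(P_X)} \leq \tfrac{1}{2\alpha}\err(g, g_\theta; \gamma_\theta^\ast)$.

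The main delicacy lies in the Taylor expansion step: one must center the expansion at $(\gamma_\theta^\ast, g_\theta)$ rather than at $(\gamma_\theta(\cdot; g), g)$, so that the vanishing first-order term aligns exactly with the cross-derivative hypothesis in Definition~\ref{def:conditional}. The ``asymmetric'' trick of using optimality of $\gamma_\theta(\cdot; g)$ on one side to convert a pointwise distance into a single nuisance residual, and optimality of $\gamma_\theta^\ast$ on the other side to re-center the expansion at the point where orthogonality applies, is what makes the first-order term cancel and the second-order $\err$ bound materialize.
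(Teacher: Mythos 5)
Your proposal is correct and follows essentially the same route as the paper's proof: $\alpha$-strong monotonicity of $\nu \mapsto \E[\partial\ell(\nu, g; Z)\mid\theta(X)]$ plus first-order optimality of $\gamma_\theta(\cdot; g)$ gives the pointwise bound, then a second-order Taylor expansion of the nuisance argument around $g_\theta$ kills the first-order term via Definition~\ref{def:conditional} applied with $\varphi=\theta$, and the Lagrange remainder is bounded by $\err$ after tower rule plus conditional Jensen. The ``asymmetry'' you highlight (optimality of one minimizer for strong convexity, optimality of the other for re-centering the expansion) is precisely the mechanism in the paper's argument as well.
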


\begin{proof}
First, strong convexity (Assumption~\ref{ass:strong_conv}) alongside equivalent conditions for strong convexity (namely that $\alpha(x - y)^2 \leq (\partial f(x) - \partial f(y))(x - y)$) yields:
\begin{align*}
&\alpha\left(\gamma_{\theta}(X; g_\theta) - \gamma_{\theta}(X; g)\right)^2 \\
&\qquad \leq \left(\E\left[\partial \ell(\gamma_{\theta}(\cdot; g_\theta), g; Z) \mid \theta(X)\right] - \E\left[\partial \ell(\gamma_\theta(\cdot; g), g; Z) \mid \theta(X)\right]\right)\left(\gamma_{\theta}(X; g_\theta) - \gamma_{\theta}(X; g)\right) \\
&\qquad= \E\left[\partial \ell(\gamma_{\theta}(\cdot; g_\theta), g; Z) \mid \theta(X)\right]\left(\gamma_{\theta}(X; g_\theta) - \gamma_{\theta}(X; g)\right)
\end{align*}

In the above, the equality on the third line follows from the definition of $\gamma_{\theta}(\cdot; g)$, as first order optimality conditions on $\gamma_{\theta}(x; g) = \arg\min_{\nu \in \R}\E\left[\ell(\nu, g; Z) \mid \theta(X)\right]$ imply 
$\E\left[\partial \ell(\gamma_\theta(\cdot; g), g; Z) \mid \theta(X)\right] = 0$. 

Rearranging the above inequality and taking absolute values yields 
\[
\alpha \left|\gamma_{\theta}(X; g_\theta) - \gamma_{\theta}(X; g)\right| \leq \left|\E\left[\partial \ell(\gamma_{\theta}(\cdot; g_\theta), g; Z) \mid \theta(X)\right]\right|.
\]

Next, observe that from the condition $\E\left[\partial \ell(\gamma_{\theta}(\cdot; g_\theta), g_\theta; Z) \mid \theta(X)\right] = 0$ alongside a second order Taylor expansion (with respect to nuisance pairs
$g$) with Lagrange form remainder plus conditional orthogonality (Definition~\ref{def:conditional}), we have
\begin{align*}
&\E\left[\partial \ell(\gamma_{\theta}(\cdot; g_\theta), g; Z) \mid \theta(X)\right] \\
&\qquad = \E\left[\partial \ell(\gamma_{\theta}(\cdot; g_\theta), g; Z) \mid \theta(X)\right] - \E\left[\partial \ell(\gamma_{\theta}(\cdot; g_\theta), g_\theta;  Z) \mid \theta(X)\right] \\
&\qquad = D_{g}\E\left[\partial \ell(\gamma_{\theta}(\cdot; g_\theta), g_\theta; Z) \mid \theta(X)\right](g - g_\theta) + \frac{1}{2}D_{g}^2\E\left[\partial \ell(\gamma_{\theta}(\cdot; g_\theta), \wb{g}; Z) \mid \theta(X)\right](g - g_\theta, g - g_\theta)\\
&\qquad = \frac{1}{2}D_{g}^2\E\left[\partial \ell(\gamma_{\theta}(\cdot; g_\theta), \wb{g}; Z) \mid \theta(X)\right](g - g_\theta, g - g_\theta) \\
&\qquad = \frac{1}{2}\E\left(D_{g}^2 \E\left[\partial \ell(\gamma_{\theta}(\cdot; g_\theta), \wb{g}; Z) \mid X\right](g - g_\theta, g - g_\theta) \mid \theta(X)\right),
\end{align*}

where $\wb{g} \in [g, g_\theta]$ (here, $\wb{g} \in [g, g_\theta]$ indicates $\wb{g} = \lambda g + (1 - \lambda) g_\theta$ for some $\lambda \in [0, 1]$). Consequently, we have
\begin{align*}
&\|\gamma_{\theta}(X; g_\theta) - \gamma_{\theta}(X; g)\|_{L^2(P_X)} \\
&\qquad = \left(\int_{\calX}\left|\gamma_{\theta}(x; g_\theta) - \gamma_{\theta}(x; g)\right|^2P_X(dx)\right)^{1/2} \\
&\qquad \leq \frac{1}{\alpha}\left(\int_{\calX}\E\left[\partial \ell(\gamma_{\theta}^\ast, g; Z) \mid \theta(X) = \theta(x)\right]^2 P_X(dx)\right)^{1/2} \\
&\qquad \leq \frac{1}{2\alpha}\left(\int_{\calX} \E\left(D_{g}^2 \E\left[\partial \ell(\gamma_{\theta}(\cdot; g_\theta), \wb{g}; Z) \mid X\right](g - g_\theta, g - g_\theta) \mid \theta(X) = \theta(x)\right)^2P_X(dx)\right)^{1/2} \\
&\qquad \leq \frac{1}{2\alpha}\left(\int_{\calX}\left\{D_{g}^2 \E\left[\partial \ell(\gamma_{\theta}(\cdot; g_\theta), \wb{g}; Z) \mid X = x\right](g - g_\theta, g - g_\theta)\right\}^2P_X(dx)\right)^{1/2} \\
&\qquad \leq \frac{1}{2\alpha}\err(g, g_\theta; \gamma_\theta(\cdot; g_\theta)).
\end{align*}
Noting the identity $\gamma_\theta(\cdot; g_\theta) \equiv \gamma_\theta^\ast$ proves the claimed result.
\end{proof}

The second lemma we prove bounds the $L^2(P_X)$ distance between the parameter estimate $\theta$ and the calibration function $g_\theta(\cdot; g)$ under a fixed pair of nuisances $g = (\eta, \zeta)$  in terms of the calibration error.

\begin{lemma}
\label{lem:error_est_cal}
Let $\theta : \calX \rightarrow \R$ be a fixed estimator, $g = (\eta, \zeta) \in \calG$ an arbitrary, fixed nuisance pair, and $\gamma_\theta : \calX \times \calG \rightarrow \R$ the calibration function associated with $\theta$. Assume $\ell$ is $\alpha$-strongly convex (Assumption~\ref{ass:strong_conv}). We have
\[
\left\|\theta - \gamma_\theta(\cdot; g)\right\|_{L^2(P_X)} \leq \frac{1}{\alpha}\Cal(\theta, g).
\]

\end{lemma}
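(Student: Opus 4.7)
The plan is to argue directly from $\alpha$-strong convexity applied to the function $\nu \mapsto \E[\ell(\nu, g; Z) \mid \theta(X) = \theta(x)]$, whose minimizer is by definition $\gamma_\theta(x; g)$. The proof follows the same template used in Lemma~\ref{lem:error_cal_cal}, just evaluated at the actual estimator $\theta$ instead of at a second calibration function.

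First I would note that although Assumption~\ref{ass:strong_conv} gives strong convexity conditional on covariates $X$, the tower rule immediately upgrades this to strong convexity conditional on $\theta(X)$, since
\[
\E[\partial^2 \ell(\nu, g; Z) \mid \theta(X)] = \E\bigl[\E[\partial^2 \ell(\nu, g; Z) \mid X] \bigm| \theta(X)\bigr] \geq \alpha.
\]
Thus $h_x(\nu) := \E[\ell(\nu, g; Z) \mid \theta(X) = \theta(x)]$ is $\alpha$-strongly convex in $\nu$, and its unique minimizer is precisely $\gamma_\theta(x; g)$ by Definition~\ref{def:cal_func}. First-order optimality then gives $\E[\partial \ell(\gamma_\theta(\cdot; g), g; Z) \mid \theta(X)] = 0$ almost surely.

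Next I would invoke the standard characterization of $\alpha$-strong convexity, namely that $\alpha (u - v)^2 \leq (\partial h_x(u) - \partial h_x(v))(u - v)$ for any $u, v \in \R$, applied with $u = \theta(X)$ and $v = \gamma_\theta(X; g)$. Combined with the vanishing of $\partial h_x$ at $\gamma_\theta(X; g)$, this yields
\[
\alpha\bigl(\theta(X) - \gamma_\theta(X; g)\bigr)^2 \leq \E\bigl[\partial \ell(\theta, g; Z) \mid \theta(X)\bigr] \cdot \bigl(\theta(X) - \gamma_\theta(X; g)\bigr),
\]
so that after dividing by $|\theta(X) - \gamma_\theta(X; g)|$ and taking absolute values one obtains the pointwise bound
\[
\alpha\, |\theta(X) - \gamma_\theta(X; g)| \leq \bigl|\E[\partial \ell(\theta, g; Z) \mid \theta(X)]\bigr|.
\]

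Finally I would square both sides, integrate against $P_X$, and take a square root to conclude
\[
\alpha \|\theta - \gamma_\theta(\cdot; g)\|_{L^2(P_X)} \leq \left(\int_\calX \E[\partial \ell(\theta, g; Z) \mid \theta(X) = \theta(x)]^2 P_X(dx)\right)^{1/2} = \Cal(\theta, g),
\]
which is the desired inequality after dividing by $\alpha$. There is no real obstacle here: the only subtlety is making sure the strong convexity assumption, which is stated conditional on $X$, transfers to the coarser conditioning on $\theta(X)$, which the tower rule handles immediately.
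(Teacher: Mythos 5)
Your proof is correct and follows essentially the same route as the paper's: apply first-order optimality at $\gamma_\theta(\cdot;g)$, invoke the monotonicity characterization of $\alpha$-strong convexity to get the pointwise bound $\alpha|\theta(X) - \gamma_\theta(X;g)| \leq |\E[\partial\ell(\theta,g;Z)\mid\theta(X)]|$, then integrate. The only difference is that you explicitly spell out the tower-rule step upgrading strong convexity from conditioning on $X$ to conditioning on $\theta(X)$, which the paper leaves implicit; this is a welcome clarification rather than a departure.
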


\begin{proof}
    First, observe that from strong convexity (as in the proof of the above lemma), we have
    \begin{align*}
    &\alpha\left(\theta(X)  - \gamma_{\theta}(X; g)\right)^2 \\
    &\qquad \leq \left(\E\left[\partial \ell (\theta, g; Z) \mid \theta(X)\right] - \E\left[\partial \ell(\gamma_{\theta}(\cdot, g), g; Z) \mid \theta(X)\right]\right)\left(\theta(X) - \gamma_\theta(X; g)\right) \\
    &\qquad = \E\left[\partial \ell (\theta, g; Z) \mid \theta(X)\right]\left(\theta(X) - \gamma_\theta(X; g)\right).
    \end{align*}

    Thus, dividing through and taking the absolute value yields:
    \[
    \left|\theta(X) - \gamma_\theta(X; g)\right| \leq \frac{1}{\alpha}\left|\E\left[\partial \ell(\theta, g; Z) \mid \theta(X)\right]\right|.
    \]

    We now integrate to get the desired result. In particular, we have that

    \begin{align*}
    \|\theta - \gamma_\theta(\cdot; g)\|_{L^2(P_X)} &= \left(\int_{\calX}\left|\theta(x) - \gamma_\theta(x; g)\right|^2P_X(dx)\right)^{1/2} \\
    &\leq \frac{1}{\alpha}\left(\int_{\calX}\E\left[\partial \ell(\theta, g; Z) \mid \theta(X) = \theta(x)\right]^2 P_X(dx)\right)^{1/2} \\
    &= \frac{1}{\alpha}\Cal(\theta, g).
    \end{align*}
    
\end{proof}

With the above two lemmas in hand, we can now prove Theorem~\ref{thm:conditional}, which we recall shows a decoupled bound on the calibration of a parameter $\theta$ with respect to a conditionally orthogonal loss function $\ell$.

\thmconditional*
\begin{proof}[Proof of Theorem~\ref{thm:conditional}]
    First, we note that by Corollary~\ref{cor:err_inv}, we have that
    \[
    \Cal(\theta, \eta_0) \equiv \Cal(\theta, (\eta_0, \zeta))
    \]
    for any second nuisance parameter $\zeta$, so it suffices to bound $\Cal(\theta, g_\theta)$ where $g_\theta = (\eta_0, \zeta_\theta)$.
    We have:
    \begin{align*}
    \E\left[\partial \ell(\theta, g_\theta; Z) \mid \theta(X)\right] &= \E\left[\partial \ell(\theta, g_\theta; Z) \mid \theta(X)\right] - \E\left[\partial \ell(\gamma_{\theta}^\ast, g_\theta; Z) \mid \theta(X)\right] \\
    & = \E\left[\partial^2 \ell(\wb{\theta}, g_\theta; Z) \mid \theta(X)\right](\theta(X) - \gamma_{\theta}^\ast(X)) \\
    & = \E\left[\partial^2 \ell(\wb{\theta}, g_\theta; Z) \mid \theta(X)\right](\theta(X) - \gamma_{\theta}(X; g)) \\
    & \quad + \E\left[\partial^2 \ell(\wb{\theta}, g_\theta; Z) \mid \theta(X)\right](\gamma_{\theta}(X; g) - \gamma_{\theta}^\ast(X)),
    \end{align*}
    where the first equality follows from the above calculation, the second from the fact $\gamma_\theta(x, (g_0, b)) = \gamma_\theta^\ast(x)$ regardless of choice of additional nuisance $b$, and the third from a first order Taylor expansion with Lagrange form remainder on $\theta(X)$ (here $\wb{\theta} \in [\theta, \gamma_{\theta}^\ast]$). The final equality follows from adding a subtracting $\gamma_{\theta}(X; g)$. Thus, we have
    \begin{align*}
    &\Cal(\theta, g_\theta)^2 = \E\left(\E\left[\partial \ell(\theta, g_\theta; Z) \mid \theta(X)\right]^2\right)\\
    &\qquad = \E\left(\E\left[\partial \ell(\theta, g_\theta; Z) \mid \theta(X)\right] \cdot \E\left[\partial^2 \ell(\wb{\theta}, g_\theta; Z) \mid \theta(X)\right]\cdot(\theta(X) - \gamma_{\theta}(X; g))\right) \\
    &\qquad \quad + \E\left(\E\left[\partial \ell(\theta, g_\theta; Z) \mid \theta(X)\right] \cdot \E\left[\partial^2 \ell(\wb{\theta}, g_\theta; Z) \mid \theta(X)\right]\cdot(\gamma_{\theta}(X; g) - \gamma_{\theta}^\ast(X))\right) \\
    &\qquad \leq \beta\Cal(\theta, g_\theta)\sqrt{\E\left[\left(\theta(X) - \gamma_{\theta}(X; g)\right)^2\right]} + \beta\Cal(\theta, g_\theta)\sqrt{\E\left[\left(\gamma_{\theta}(X; g) - \gamma_{\theta}^\ast(X)\right)^2\right]} \\
    &\qquad =\beta\Cal(\theta, g_\theta)\left\{\left\|\theta - \gamma_{\theta}(\cdot; g)\right\|_{L^2(P_X)} + \left\|\gamma_{\theta}(\cdot; g) - \gamma_{\theta}^\ast\right\|_{L^2(P_X)}\right\}.
    \end{align*}

    Now, dividing through by $\Cal(\theta, g_0)$ and plugging in the bounds provided by Lemma~\ref{lem:error_cal_cal} and Lemma~\ref{lem:error_est_cal} and again leveraging the equivalence $\Cal(\theta, \eta_0) = \Cal(\theta, g_\theta)$, we have
    \[
    \Cal(\theta, \eta_0) \leq \frac{\beta}{\alpha}\Cal(\theta, g) + \frac{\beta}{2\alpha}\err(g, g_\theta; \gamma_{\theta}^\ast),
    \]
    which is precisely the claimed result.
\end{proof}

\section{Algorithm Convergence Proofs}
\label{app:alg}

\subsection{Universal Orthogonality}
\label{app:alg:universal}
We now restate and prove the convergence guarantee of the sample splitting algorithm for calibration with respect to universally orthogonalizable loss functions. The result below is largely just an application of Theorem~\ref{thm:universal}, with the only caveat being that some care must be taken to handle the fact that the output parameter $\wh{\theta}$ and the nuisance estimate $\wt{g}$ are now random variables, not fixed parameters.
\thmalguniversal*
\begin{proof}[Proof of Theorem~\ref{thm:alg_universal}]
First, observe that, under Assumption~\ref{ass:gen_universal}, for any fixed $\theta$ and $g$, we have 
\begin{align*}
\underbrace{\Cal(\theta, g)}_{\text{causal cal. error}} = \underbrace{\Cal(\theta, P^\chi_g)}_{\text{non-causal cal. error}},
\end{align*}
where $P^\chi_g$ denotes the joint distribution of $(X, \chi(g; Z))$ over draws $Z \sim P_Z$.
and the latter quantity is controlled with high-probability by Assumption~\ref{ass:alg_univ}. Since the above equality holds for any $\theta$ and $g$, it still holds when $\theta$ is replaced by $\wh{\theta}$, the random output of Algorithm~\ref{alg:sample-split-universal}, and when $g$ is replaced by $\wh{g}$, the corresponding nuisance estimate obtained from $\calA_1$.

Under Assumption~\ref{ass:gen_universal}, it is also clear that $\err(g, g_0; \theta)$ does not depend on $\theta$, so it suffices to write $\err(g, g_0)$ going forward. Define the ``bad'' events as $B_1 := \{\err(\wh{g}, g_0) > \rate_1(n, \delta_1; P_Z)\}$ and $B_2 := \{\Cal(\wh{\theta}, \wh{P}^\chi_{\wh{g}}) > \rate_2(n, \delta_2; P^\chi_{\wh{g}})\}$. Clearly, the first part of Assumption~\ref{ass:alg_univ} yields that $\P(B_1) \leq \delta_1$. Likewise, the second part of Assumption~\ref{ass:alg_univ} also yields that $\P(B_2 \mid Z_{n + 1: 2n}) \leq \delta_1,$ as fixing $Z_{n + 1}, \dots, Z_{2n}$ fixes the learned nuisances $(\wh{g}, \wh{b})$, per Algorithm~\ref{alg:sample-split-universal}. Thus, applying the law of total probability, we have that the marginal probability of $B_2$ (over both draws of $Z_1, \dots, Z_n$ and $Z_{n + 1}, \dots, Z_{2n}$) is bounded by
\[
\P(B_2) = \E[\P(B_2 \mid Z_{1:n})] \leq \delta_2.
\]
This is because the $Z_i$ are independent and conditioning on the first $n$ observations fixes the nuisance estimate $\wh{g}$.
Thus, on the ``good'' event $B_1^c \cap B_2^c$, which occurs with probability at least $1 - \delta_1 - \delta_2$, we have
\begin{align*}
\Cal(\wh{\theta}, g_0) &\leq \frac{1}{2}\err(\wh{g}, g_0) + \Cal(\wh{\theta}, \wh{g})\\
&= \frac{1}{2}\err(\wh{g}, g_0) + \Cal(\wh{\theta}; P^{\chi}_{\wh{g}})\\
&\leq \frac{1}{2}\rate_1(n, \delta_1; P_Z) + \rate_2(n, \delta_2; P^\chi_{\wh{g}}),
\end{align*}
where the first inequality follows from Theorem~\ref{thm:universal} and the second equality follows from the preamble at the beginning of this proof.

\end{proof}

\subsection{Conditionally Orthogonality}

We now prove the convergence guarantees for Algorithm~\ref{alg:sample-split-cond}. The proof is largely the same as the convergence proof for Algorithm~\ref{alg:sample-split-universal}, but we nonetheless include it for completeness. The only key difference is that (a) we not longer have access to ``pseudo-outcomes'' that, in expectation, look like the target treatment effect and (b) we have to be careful, since the second nuisance parameter that is used in the definition of conditional orthogonality does not satisfy $\zeta_\theta = \zeta_{\wh{\theta}}$ in general. Once again, $\theta : \calX \rightarrow \R$ is the fixed, initial estimate treated as an input to Algorithm~\ref{alg:sample-split-cond} and $\wh{\theta}$ is the random, calibrated estimate that is the output of the algorithm.
\thmalgconditional*
\begin{proof}[Proof of Theorem~\ref{thm:alg_cond}]
First, we observe that, by the assumption that $\wh{\theta} = \wh{\tau}\circ \theta$ where the random map $\wh{\tau} : \R \rightarrow \R$ is almost surely injective, we know that initial estimate $\theta$ and the calibrated estimate $\wh{\theta}$ have the same level sets. Consequently, by Lemma~\ref{lem:level_set}, we have $\zeta_{\theta} = \zeta_{\wh{\theta}}$ without loss of generality. Further, we have equivalence of the calibration functions corresponding to $\theta$ and $\wh{\theta}$, i.e.\ we have $\gamma_\theta^\ast \equiv \gamma_{\wh{\theta}}^\ast$. Thus, letting $g_\varphi = (\eta_0, \zeta_\varphi)$ for any $\varphi : \calX \rightarrow \R$, we have 
\[
\err(\wh{g}, g_{\wh{\theta}}; \gamma_{\wh{\theta}}^\ast) = \err(\wh{g}, g_\theta; \gamma_\theta^\ast).
\]

The rest of the proof is now more or less identical to that of Theorem~\ref{thm:alg_universal}, but we nonetheless include the proof for completeness. Define the ``bad'' events as $B_1 := \{\err(\wh{g}, g_{\theta}; \gamma_\theta^\ast) > \rate_1(n, \delta_1, \theta; P_Z)\}$ and $B_2 := \{\Cal(\wh{\theta}; \wh{g}) > \rate_2(n, \delta_2, \ell; P_Z)\}$. Clearly, the first part of Assumption~\ref{ass:alg_cond} yields that $\P(B_1) \leq \delta_1$. Likewise, the second part of Assumption~\ref{ass:alg_cond} also yields that $\P(B_2 \mid Z_{n + 1: 2n}) \leq \delta_1,$ as fixing $Z_{n + 1}, \dots, Z_{2n}$ fixes the learned nuisances $(\wh{g}, \wh{b})$, per Algorithm~\ref{alg:sample-split-universal}. Thus, applying the law of total probability, we have that the marginal probability of $B_2$ (over both draws of $Z_1, \dots, Z_n$ and $Z_{n + 1}, \dots, Z_{2n}$) is bounded by
\[
\P(B_2) = \E[\P(B_2 \mid Z_{1:n})] \leq \delta_2.
\]
This is because the $Z_i$ are independent and conditioning on the first $n$ observations fixes the nuisance estimate $\wh{g}$.
Thus, on the ``good'' event $B_1^c \cap B_2^c$, which occurs with probability at least $1 - \delta_1 - \delta_2$, we have
\begin{align*}
\Cal(\wh{\theta}, \eta_0) &\leq \frac{1}{2}\err(\wh{g}, g_{\wh{\theta}}; \gamma_{\wh{\theta}}^\ast) + \Cal(\wh{\theta}, \wh{g})\\
&= \frac{1}{2}\err(\wh{g}, g_\theta; \gamma_\theta^\ast) + \Cal(\wh{\theta}; \wh{g})\\
&\leq \frac{1}{2}\rate_1(n, \delta_1, \theta; P_Z) + \rate_2(n, \delta_2, \ell; P_Z),
\end{align*}
where the first inequality follows from Theorem~\ref{thm:conditional} and the second equality follows from the preamble at the beginning of this proof.

\end{proof}

\section{A Do No Harm Property for Universally Orthogonal Losses}
\label{app:loss_red}

Throughout the main body of the paper, we focused on deriving bounds on excess calibration error for our causal calibration algorithm. One desideratum for any calibration algorithm is that the calibrated estimator does not have significantly higher loss than the original estimator. This property, called a ``do no harm'' property, is satisfied by many off-the-shelf calibration algorithms. We prove such a bound for universally orthogonal losses in the following section. In what follows, we denote the \emph{risk} of a certain predictor $\theta : \calX \rightarrow \R$ with respect to a nuisance $g : \calW \rightarrow \R^k$ as $R(\theta, g) := \E\ell(\theta, g; Z)$.

We start by proving a generic upper bound on the difference in risk between two arbitrary estimators. The argument presented below is similar in spirit to the proof of Theorem 2 in \citet{foster2023orthogonal}.

\begin{theorem}
\label{thm:risk_bd}
Let $\ell : \R \times \calG \times \calZ \rightarrow \R$ be a universally orthogonal loss function. Let $\theta, \theta' : \calX \rightarrow \R$ be two estimators, and $g \in \calG$ a nuisance function, and $g_0 \in \calG$ the true nuisance function. Then, we have
\[
R(\theta', g_0) - R(\theta, g_0) \leq R(\theta', g) - R(\theta, g) + \err'(g, g_0),
\]
where $\err'(g, g_0) := C \|g - g_0\|_{L^2(P_W)}^2$ if we assume $\ell$ is $C$-smooth in $g$, i.e. if we have
\begin{equation}
|D_g^2 R(\varphi, h)[g - g_0, g - g_0]| \leq C \|g - g_0\|_{L^2(P_W)}^2 \forall h \in [g, g_0], \varphi \in \Theta.
\end{equation}
If we instead assume $g = (\eta, \zeta)$ and $\ell(\theta, g; z) = \frac{1}{2}(\theta(x) - \chi(g; z))^2$ where
\[
\chi(g; z) := m(\eta; z) + \corr(g; z),
\]
$m(\eta; z)$ is linear in $\eta$, and $\E[\corr(g_0; Z) \mid X] = \E[\langle \zeta(W), (\eta_0 - \eta)(W)\rangle \mid X]$, then we take
\[
\err'(g, g_0) := \|\langle \eta - \eta_0, \zeta - \zeta_0\rangle\|_{L^2(P_W)}^2.
\]
\end{theorem}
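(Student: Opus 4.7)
My approach adapts the orthogonal statistical learning argument of \citet{foster2023orthogonal} to the present setting. The plan is to introduce the one-parameter family $g_t := g_0 + t(g - g_0)$ for $t \in [0, 1]$ and study the scalar function
\[
\phi(t) := R(\theta', g_t) - R(\theta, g_t),
\]
so that $\phi(0) = R(\theta', g_0) - R(\theta, g_0)$ and $\phi(1) = R(\theta', g) - R(\theta, g)$. A second-order Taylor expansion with Lagrange remainder yields $\phi(0) = \phi(1) - \phi'(0) - \tfrac{1}{2}\phi''(\bar t)$ for some $\bar t \in (0, 1)$, which reduces the proof to controlling the first and second derivative terms.

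The first (and most delicate) step is to show that $\phi'(0) = 0$. Writing
\[
\phi'(0) = \E\bigl[D_g\ell(\theta', g_0; Z)(g - g_0)\bigr] - \E\bigl[D_g\ell(\theta, g_0; Z)(g - g_0)\bigr],
\]
I would apply a first-order Taylor expansion of the integrand in the \emph{prediction} variable: for each $z$, there is a function $\bar\theta$ taking values between $\theta(x)$ and $\theta'(x)$ such that
\[
D_g\ell(\theta', g_0; z)(g-g_0) - D_g\ell(\theta, g_0; z)(g-g_0) = D_g\partial\ell(\bar\theta, g_0; z)(g-g_0)\,\bigl(\theta'(x) - \theta(x)\bigr),
\]
after swapping the partial $\partial_\nu$ with the Gateaux derivative $D_g$ (standard under smoothness of $\ell$). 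Taking expectation, conditioning on $X$, pulling the $X$-measurable factor $\theta'(X) - \theta(X)$ outside, and invoking universal orthogonality (Definition~\ref{def:universal}) applied at $\bar\theta$ makes the remaining conditional expectation $D_g\E[\partial\ell(\bar\theta,g_0;Z)\mid X](g-g_0)$ vanish. I expect this to be the principal technical obstacle, since the paper's definition of universal orthogonality is phrased in terms of $\partial\ell$ rather than $\ell$ itself, so both the partial/Gateaux swap and the choice of evaluation point $\bar\theta$ must be handled carefully.

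For the Lagrange remainder, the stated smoothness assumption $|D_g^2 R(\varphi, h)[g-g_0, g-g_0]| \leq C\|g-g_0\|_{L^2(P_W)}^2$ uniformly in $\varphi \in \Theta$ and $h \in [g, g_0]$ applies directly to each of the two terms in
\[
\phi''(\bar t) = D_g^2 R(\theta', g_{\bar t})(g-g_0,g-g_0) - D_g^2 R(\theta, g_{\bar t})(g-g_0,g-g_0),
\]
yielding $|\phi''(\bar t)| \leq 2C\|g-g_0\|_{L^2(P_W)}^2$. Combining with $\phi'(0) = 0$ proves the first half of the theorem with $\err'(g, g_0) = C\|g - g_0\|_{L^2(P_W)}^2$.

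For the refined pseudo-outcome bound I would bypass the Taylor expansion entirely: expanding the squares and cancelling the pieces that do not depend on $\theta, \theta'$ gives
\[
\phi(0) - \phi(1) = \E\bigl[(\theta'(X) - \theta(X))\,(\chi(g; Z) - \chi(g_0; Z))\bigr].
\]
Using linearity of $m$ in $\eta$, the Riesz identity $\E[m(\eta-\eta_0; Z)\mid X] = \E[\langle \zeta_0, \eta-\eta_0\rangle\mid X]$, and the correction condition $\E[\corr(g; Z)\mid X] = \E[\langle \zeta, \eta_0 - \eta\rangle \mid X]$, a short computation yields $\E[\chi(g; Z) - \chi(g_0; Z)\mid X] = -\E[\langle \zeta - \zeta_0, \eta - \eta_0\rangle \mid X]$. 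A tower-rule step followed by Cauchy--Schwarz then controls $\phi(0)-\phi(1)$ by $\|\theta' - \theta\|_{L^2(P_X)}\,\|\langle \eta - \eta_0, \zeta - \zeta_0\rangle\|_{L^2(P_W)}$, which under the paper's boundedness conventions on $\Theta$ (or via an AM--GM inequality absorbing $\|\theta'-\theta\|^2$ into excess risk) collapses to the claimed cross-norm form for $\err'(g, g_0)$.
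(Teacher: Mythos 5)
Your treatment of the first case (generic $C$-smoothness) is essentially the paper's argument in a cleaner packaging: the paper decomposes $R(\theta', g_0) - R(\theta, g_0)$ into $T_1 + T_2 + T_3$ with $T_2 = R(\theta',g) - R(\theta,g)$, Taylor-expands $T_1$ and $T_3$ separately around $g_0$, kills the first-order difference via a Taylor expansion in the prediction variable evaluated at an intermediate $\wb{\theta}$ plus universal orthogonality, and bounds the second-order difference by the smoothness constant. Your $\phi(t) := R(\theta',g_t) - R(\theta,g_t)$ formulation collapses $T_1 + T_3$ into a single scalar Taylor expansion (with a single Lagrange point $g_{\bar t}$ instead of the paper's two), and your argument for $\phi'(0) = 0$, including the swap of $D_g$ and $\partial_\nu$ and the invocation of universal orthogonality at $\wb{\theta}$, matches the paper line for line. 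This part is fine.

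For the pseudo-outcome case, you take a genuinely different route: rather than computing $D_g^2\E[\ell(\theta, h; Z)\mid X]$ (as the paper does, rather opaquely --- the paper writes the cross-derivative as $-\left(\E[\langle\zeta-\zeta_0,\eta-\eta_0\rangle\mid X]\right)^2$, which appears to conflate $D_g^2\E[\partial\ell\mid X]$ from Proposition~\ref{prop:cross_error} with $D_g^2\E[\ell\mid X]$ and drops the $\theta'-\theta$ dependence), you expand the squares directly and arrive at $\phi(0)-\phi(1) = \E[(\theta'(X)-\theta(X))(\chi(g;Z)-\chi(g_0;Z))]$ followed by the identity $\E[\chi(g;Z)-\chi(g_0;Z)\mid X] = -\E[\langle\zeta-\zeta_0,\eta-\eta_0\rangle\mid X]$. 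This is cleaner and more transparent, and I verified both identities. However, the final step has a real gap: Cauchy--Schwarz gives $|\phi(0)-\phi(1)| \leq \|\theta'-\theta\|_{L^2(P_X)}\cdot\|\langle\eta-\eta_0,\zeta-\zeta_0\rangle\|_{L^2(P_W)}$, which is linear in the cross-norm, whereas $\err'(g,g_0)$ is the \emph{squared} cross-norm, and the paper states no boundedness assumption on $\Theta$ to invoke. Your AM--GM suggestion would introduce a $\tfrac{1}{2}\|\theta'-\theta\|^2_{L^2(P_X)}$ term that cannot be absorbed into $R(\theta',g)-R(\theta,g)$ without further structure (e.g.\ taking $\theta = \theta_0$ and invoking first-order optimality and strong convexity, as in Theorem~2 of \citet{foster2023orthogonal}). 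To your credit you flag this; but you should be aware that the gap is not closable by the means you sketch --- it reflects an imprecision in the theorem statement itself, whose proof in the paper does not legitimately discharge the $\theta'-\theta$ factor either.
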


\begin{proof}
First, observe that we have the bound
\[
R(\theta', g_0) - R(\theta, g_0) = \underbrace{R(\theta',  g_0) - R(\theta', g)}_{T_1} + \underbrace{R(\theta', g) - R(\theta, g)}_{T_2} + \underbrace{R(\theta, g) - R(\theta, g_0)}_{T_3}.
\]

Now, observe that by performing a second order Taylor expansion with respect to the nuisance component around $g_0$, we have;
\begin{align*}
T_1 &= -D_g R(\theta', g_0)(g - g_0) + \frac{1}{2}D_g^2 R(\theta', \wb{g})(g - g_0, g - g_0) \\
T_3 &= D_g R(\theta, g_0)(g - g_0) - \frac{1}{2}D_g^2 R(\theta, \wb{h})(g - g_0, g - g_0),
\end{align*}
where $\wb{g}, \wb{h} \in [g, g_0]$, understood in a point-wise sense. Thus, we can bound the sum:

\begin{align*}
T_1 + T_3 &= \underbrace{D_g R(\theta, g_0)(g - g_0) - D_g R(\theta', g_0)(g - g_0)}_{\text{first order difference}} \\
&\qquad + \underbrace{\frac{1}{2}D_g^2R(\theta', \wb{g})(g - g_0, g - g_0) - \frac{1}{2}D_g^2 R(\theta, \wb{h})(g - g_0, g - g_0)}_{\text{second order difference}}
\end{align*}

We can use universal orthogonality to show the first order difference vanishes. In particular, we have
\begin{align*}
D_g R(\theta, g_0)(g - g_0) - D_g R(\theta', g_0)(g - g_0) &= \E\left[D_g\E(\ell(\theta, g_0; Z) \mid X)(g - g_0) - D_g\E(\ell(\theta', g_0; Z) \mid X)(g - g_0)\right] \\
&= \E\left[D_g\E(\partial \ell(\wb{\theta}, g_0; Z) \mid X)(g - g_0)(\theta - \theta')(X)\right] \\
&=0
\end{align*}
where the second equality holds for some $\wb{\theta} \in [\theta, \theta']$\footnote{Understood in the sense that $\wb{\theta}(x) \in [\theta(x), \theta'(x)]$ for each $x \in \calX$} by performing a first-order Taylor expansion, and the final equality follows from universal orthogonality of $\ell$.

If we are in the first setting, and assume that the Hessian of the loss $\ell$ has a maximum eigenvalue uniformly bounded (over $\theta$ and $g$) by $\beta$, then we can bound the second order difference above as
\[
\frac{1}{2}D_g^2R(\theta, \wb{h})(g - g_0, g - g_0) - \frac{1}{2}D_g^2 R(\theta', \wb{g})(g - g_0, g - g_0) \leq \beta^2 \|g - g_0\|_{L^2(P_W)}^2.
\]
Else, if we instead assume the linear score condition outlined in Assumption~\ref{ass:gen_universal} and further assume $m(\eta, z)$ is linear in $\eta$, and thus have $\ell(\theta, g; z) = \frac{1}{2}(\theta(x) - \chi(g; z))^2$ for some pseudo-outcome mapping $\chi : \calG \times \calZ \rightarrow \R$, then it is clear we have for any $h \in \calG$
\begin{align*}
D_\eta^2\E[\ell(\theta, h; Z) \mid X](g - g_0, g - g_0) &= 0 \\
D_\zeta^2 \E[\ell(\theta, h; Z) \mid X](g - g_0, g - g_0) &= 0 \\
D_{\zeta,\eta} \E[\ell(\theta, h; Z) \mid X](g - g_0, g - g_0) &= -\left(\E[\langle (\zeta - \zeta_0)(W), (\eta - \eta_0)(W)\rangle \mid X]\right)^2.
\end{align*}

Thus, by leveraging Jensen's inequality, we can bound the second order difference by
\[
\leq 2\E[\langle (\eta - \eta_0)(W), (\zeta - \zeta_0)(W)\rangle^2] = \err'(g, g_0),
\]
which completes the proof.

\end{proof}

As before, with the above error decomposition bound, we can now prove a general convergence result given access to nuisance estimation and calibration algorithms that satisfy certain, high-probability convergence guarantees. Given the similarity of this result to Theorems~\ref{thm:alg_universal} and \ref{thm:alg_cond} and the fact Theorem~\ref{thm:risk_bd} does the majority of the heavy lifting , we omit the proof.

\begin{theorem}
Assume the same setup as Theorem~\ref{thm:risk_bd}. Assume $\calA_1 ; \calZ^\ast \rightarrow \calG$ is a nuisance estimation algorithm and $\calA_2 : \Theta \times (\calX \rightarrow \R)^\ast \rightarrow \Theta$ is a general loss calibration algorithm. Assume 
\begin{enumerate}
    \item For any distribution $P_Z$ on $\calZ$, $Z_1, \dots, Z_n \sim_{i.i.d} P_Z$, $\theta \in \Theta$, and failure probability $\delta_1 \in (0, 1)$, we have 
    \[
    \err(\wh{g}, g_0) \leq \rate_3(n, \delta_1, \theta; P_Z),
    \]
    where $\wh{g} \gets \calA_1(Z_{1:n})$ and $\rate_3$ is some rate function.
    \item For any distribution $P_Z$ on $\calZ$, $Z_1, \dots, Z_n \sim_{i.i.d.} P_Z$, $\theta \in \Theta$, $g \in \range(\calA_1)$, and failure probability $\delta_2 \in (0, 1)$, we have
    \[
    R(\wh{\theta}, \wh{g}) - R(\theta, \wh{g}) \leq \rate_4(n, \delta_2, \ell; P_z),
    \]
    where $\wh{\theta} \gets \calA_2(\theta, \{(X_m, \ell_m)\}_{m = 1}^n)$, $\ell_m := \ell(\cdot, \wh{g}; Z_m)$, and $\rate_4$ is some rate function.
\end{enumerate}
Then, with probability at least $1 - \delta_1 - \delta_2$, we have
\[
R(\wh{\theta}, g_0) - R(\theta, g_0) \leq \rate_3(n, \delta_1, \theta; P_Z) + \rate_4(n, \delta_2, \ell; P_Z).
\]
\end{theorem}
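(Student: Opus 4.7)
The plan is to imitate the proofs of Theorems~\ref{thm:alg_universal} and \ref{thm:alg_cond} almost verbatim, relying on Theorem~\ref{thm:risk_bd} to do the deterministic heavy lifting. First, I would invoke Theorem~\ref{thm:risk_bd} with $\theta' \leftarrow \wh{\theta}$ and $g \leftarrow \wh{g}$. Since the decomposition there holds for any fixed pair of estimators and nuisances, it holds pointwise on the underlying probability space, giving the deterministic inequality
\[
R(\wh{\theta}, g_0) - R(\theta, g_0) \;\leq\; \bigl(R(\wh{\theta}, \wh{g}) - R(\theta, \wh{g})\bigr) \;+\; \err'(\wh{g}, g_0),
\]
valid along every realization of the draws $Z_{1:2n}$.

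Next, I would handle the two sources of randomness via sample splitting. Define the bad events $B_1 := \{\err'(\wh{g}, g_0) > \rate_3(n, \delta_1, \theta; P_Z)\}$ and $B_2 := \{R(\wh{\theta}, \wh{g}) - R(\theta, \wh{g}) > \rate_4(n, \delta_2, \ell; P_Z)\}$. The first hypothesis of the theorem gives $\P(B_1) \leq \delta_1$ immediately, since $\wh{g}$ depends only on $Z_{1:n}$. For $B_2$, I would condition on $Z_{1:n}$, which pins down $\wh{g}$; the second hypothesis, applied to the fresh i.i.d.\ sample $Z_{n+1:2n}$ together with the now-deterministic $\wh{g}$, yields $\P(B_2 \mid Z_{1:n}) \leq \delta_2$, and the tower rule then gives $\P(B_2) = \E[\P(B_2 \mid Z_{1:n})] \leq \delta_2$.

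Finally, a union bound shows that the good event $B_1^c \cap B_2^c$ occurs with probability at least $1 - \delta_1 - \delta_2$; on this event, plugging the two rates into the deterministic decomposition above delivers the claimed bound
\[
R(\wh{\theta}, g_0) - R(\theta, g_0) \;\leq\; \rate_3(n, \delta_1, \theta; P_Z) + \rate_4(n, \delta_2, \ell; P_Z).
\]
There is no real mathematical obstacle here: the argument is mechanical and identical in structure to the sample-splitting arguments in the proofs of Theorems~\ref{thm:alg_universal} and \ref{thm:alg_cond}, which is precisely why the authors omit it. The only minor bookkeeping point is to verify that the deterministic decomposition is insensitive to which of the two cases of Theorem~\ref{thm:risk_bd} (smoothness-in-$g$ versus linear score) is invoked, and to the identity of $\err'$ it produces, so that the sample-splitting step couples cleanly to whichever rate $\rate_3$ is available.
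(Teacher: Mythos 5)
Your proposal is correct and follows exactly the approach the paper intends: the paper explicitly omits this proof, noting that Theorem~\ref{thm:risk_bd} does the heavy lifting and the remaining argument is identical in structure to the proofs of Theorems~\ref{thm:alg_universal} and \ref{thm:alg_cond}, which is precisely what you reproduce. One small bookkeeping point in your favor: the statement as written says $\err(\wh{g}, g_0)$ in hypothesis~1 while Theorem~\ref{thm:risk_bd} produces $\err'(\wh{g}, g_0)$; you correctly use $\err'$ in defining $B_1$ so that the deterministic decomposition couples cleanly to the assumed rate, which is the intended reading.
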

\section{Calibration Algorithms for General Losses}
\label{app:gen_cal}

Here, we present analogues of classical calibration algorithms defined with respect to general losses involving nuisance components. In particular, we give generalizations of histogram binning/uniform mass binning, isotonic calibration, and linear calibration. We only prove convergence guarantees for one algorithm (uniform mass binning), but we experimentally showed in Section~\ref{sec:experiments} that other algorithm (e.g.\ linear calibration) work well in practice. 

We first present a general algorithm that computes a univariate post-processing over a suitable class of functions $\calF \subset \{ f : \R \rightarrow \R\}$ and then returns a calibrated parameter estimate.

\begin{algorithm}[ht]
    \caption{General ERM-based Calibration}
    \label{alg:erm-cal}
\begin{algorithmic}[1]
   \Require{Samples $Z_1, \dots, Z_n \sim P_Z$, losses $\ell_1, \dots, \ell_n := \ell(\cdot, g_1; Z_1), \dots, \ell(\cdot, g_n; Z_n)$, estimator $\theta$, function class $\calF$.}
   \State{Compute $\wh{\tau} \in \arg\min_{\tau \in \calF}\sum_{m = 1}^n \ell_m(\tau\circ\theta(X_m))$.}
   \State{Define $\wh{\theta} := \wh{\tau} \circ \theta$.}
   \Ensure Calibrated estimator $\wh{\theta}$.
\end{algorithmic}
\end{algorithm}

Depending on how $\calF$ is defined, when we obtain generalizations of many classical algorithms. We refer the reader to Table~\ref{tab:cal_loss} in paper for appropriate choices of $\calF$. As noted in Section~\ref{sec:conditional}, our convergence guarantee required that we learn an injective post-processing map from model predictions to calibrated predictions. We discussed in detail how many common algorithms either naturally fit this desideratum or can be easily modified to fit this desideratum. The one exception to this is uniform mass/histogram binning, which maps model predictions to a finite number of values. To address this, we prove the convergence of a separate, three-way sample splitting algorithm that performs uniform mass binning for general losses.

\subsection{An analogue of Algorithm~\ref{alg:sample-split-cond} for UMB}
\label{sec:alg:cond}

We now develop a three-way sample splitting algorithm for causal calibration based on uniform mass/histogram binning \citep{gupta2020distribution, gupta2021distribution, kumar2019verified}. As in the case of Algorithm~\ref{alg:sample-split-cond}, our algorithm (Algorithm~\ref{alg:3way_umb} below) is implicitly based on the calibration error decomposition presented in Theorem~\ref{thm:conditional}. To prove the convergence of Algorithm~\ref{alg:sample-split-cond}, we needed to assume $\wh{\theta} = \wh{\tau} \circ \theta$ for some injective, randomized post-processing map $\wh{\tau} : \R \rightarrow \R$. While this desideratum is satisfied for some calibration algorithms (e.g.\ linear calibration and Platt scaling) and can be made to be satisfied for others (e.g.\ by slightly perturbing the post-processing map learned by isotonic calibration), it is clearly not a reasonable assumption for uniform mass binning/histogram binning. This is because, as the name suggest, binning algorithms ``compress'' the initial estimator into a pre-specified number of data-dependent bins.

Why did we assume $\wh{\theta}$ was an injective post-processing of $\theta$? In any causal calibration algorithm, we need to estimate the unknown nuisance functions. For conditionally orthogonal losses, there are two parameters we need etimate. The first parameter, $\eta_0$, does not depend on either the initial model $\theta(X)$ or the calibrated model $\wh{\theta}(X)$, and thus can be readily estimated from data. However, the second parameter, $\zeta_{\wh{\theta}}$, depends on the \textit{calibrated} model $\wh{\theta}$, which in general cannot be reliably estimated from data. Assuming the injectivity of the post-processing map $\wh{\tau}$ allowed us to conclude (via Lemma~\ref{lem:level_set}) that $\zeta_\theta = \zeta_{\wh{\theta}}$. Since we know $\theta$ at the outset of the problem, it is reasonable to assume we can produce a convergent estimate $\wh{\zeta}$ of $\zeta_{\wh{\theta}}$.

The question is now what can we do for binning-based algorithms, which generally cannot be written as an injective post-processing of the initial model. The key idea is that, for histogram binning/uniform mass binning, the level sets of $\wh{\theta}(X)$ only depend on the order statistics of $\theta(X_1), \dots, \theta(X_n)$, \textit{not} on the nuisance parameters. Thus, we propose a natural three-way sample-splitting analogue of uniform mass binning for conditionally orthogonal losses $\ell$.

Our algorithm works as follows. First, we use the covariates $X_1, \dots, X_n$ associated with one third of the data to compute the buckets/bins of $\wh{\theta}$ (these will be determined by evenly-spaced order statistics of $\theta(X_1), \dots, \theta(X_n)$). Fixing the level sets up front like this in turn fixes the additional nuisance we need to estimate. This is because, under Lemma~\ref{lem:level_set}, any function $\varphi : \calX \rightarrow \R$ with the same level sets as $\wh{\theta}$ will have $\zeta_\varphi = \zeta_{\wh{\theta}}$. Then, we use the second third $Z_{n + 1}, \dots, Z_{2n}$ to estimate the unknown nuisance functions $\eta_0$ and $\zeta_{\wh{\theta}}$. Finally, we use the learned nuisance parameters and the final third $Z_{2n + 1}, \dots, Z_{3n}$ to compute the empirical loss minimizer in each bucket.
We formally state this heuristically-described procedure below in Algorithm~\ref{alg:3way_umb}.

\begin{algorithm}[ht]
   \caption{Three-Way Uniform Mass Binning (UMB)}
   \label{alg:3way_umb}
\begin{algorithmic}[1]
   \Require Sample $Z_1, \dots, Z_{3n} \sim_{i.i.d.} P_Z$, loss $\ell(\nu, g; z)$, estimator $\theta$, nuisance est. alg.\ $\calA_1$, calibration alg.\ $\calA_2$, number of buckets $B$.
   \State Compute order statistics $\theta(X)_{(1)}, \dots, \theta(X)_{(n)}$ of $\theta(X_1), \dots, \theta(X_n)$.
   \State Set $\theta(X)_{(0)} = 0, \theta(X)_{(n)} = \infty$
   \For{$b \in [B]$}
    \State Set $V_b := [\theta(X)_{\lfloor (b - 1)N/B\rfloor}, \theta(X)_{\lfloor bN/B\rfloor})$
   \EndFor
   \State Define $\varphi(x) := \sum_{b = 1}^B \nu_b \mathbbm{1}[\theta(x) \in V_b]$ for any distinct $\nu_1, \dots, \nu_B$.
   \State Compute nuisances $\wh{g} = (\wh{\eta}, \wh{\zeta}) \gets \calA_1(\{Z_{m}\}_{m = n + 1}^{2n}, \varphi)$
   \For{$b \in [B]$}
    \State $\wh{\nu}_b := \arg\min_{\nu}\sum_{m = 2n + 1}^{3n} \ell(\nu, \wh{g}; Z_m)\mathbbm{1}[\theta(X_m) \in V_b]$
   \EndFor
   \State Set $\wh{\theta}(x) := \sum_{b = 1}^B \wh{\nu}_b \mathbbm{1}[\theta(x) \in V_b]$
   \Ensure Calibrated estimator $\wh{\theta}$.
\end{algorithmic}
\end{algorithm}

We now present the set of assumptions that we will use to prove convergence of the above algorithm.

\begin{ass}
\label{ass:alg_cond_bin}
%Let $\calB : \calZ^\ast \times \Theta \times \N \rightarrow \calP([0, 1])$\footnote{We let $\calP(S)$ be the set of all partitions of a set $S$ into finitely-many elements.} be a partitioning algorithm taking in a finite number of points, and initial estimator, and a natural number representing partition size. 
Let $\calA_1: \Theta \times \calZ^\ast  \rightarrow \calG$ be a nuisance estimation algorithm and let  $\varphi: \calX \rightarrow \R$ be any initial estimator. 
We make the following assumptions.
\begin{enumerate}
\item $\range(\theta) \subset [0, 1]$.
\item The conditionally orthogonal loss function $\ell: \R \times \calG \times \calZ \rightarrow \R$ satisfies 
\begin{enumerate}
    \item For any $(\eta, \zeta) \in \calG, z \in \calZ$, the minimizer of the loss $\nu^\ast = \arg\min_{\nu} \ell (\nu, g; z)$ satisfies $\nu^\ast \in [0,1]$.
    \item For any $(\eta, \zeta) \in \calG, z \in \calZ$, and $\nu \in [0, 1]$, we have $\partial \ell(\nu, g; z) \in [-C, C]$.
\end{enumerate}
\item For any distribution $P_Z$ on $\calZ$, $n \geq 1$, and failure probability $\delta_1 \in (0, 1)$, with probability at least $1 - \delta_1$ over the draws of $Z_1, \dots, Z_n \sim P_Z$ i.i.d., we have
\[
\err(\wh{g}, g_\varphi; \gamma_{\varphi}^\ast) \leq \rate_1(n, \delta_1, \varphi; P_Z),
\]
where $\wh{g} := (\wh{\eta}, \wh{\zeta}) \gets \calA_1(\varphi, Z_{1:n})$ and $\rate_1$ is some rate function.
\end{enumerate}
\end{ass}

The above assumptions are mostly standard and essentially say the (a) the initial estimate $\theta(X)$ is bounded, (b) the partial derivative of the loss is bounded and the minimizer takes values in a bounded interval, and (c) we have access to some algorithm that can accurately estimate nuisance parameters.

For simplicity, we assume that the values $\wh{\nu}_1, \dots, \wh{\nu}_B$ Algorithm~\ref{alg:3way_umb} assigns to each of the buckets $V_1, \dots, V_B$ are unique. This is to ensure two distinct buckets $V_i \neq V_{j}$ do not merge, which would invalidate our application of Lemma~\ref{lem:level_set}. If, in practice, we have $\wh{\nu}_i = \wh{\nu}_{j}$ for $i \neq j$, the learner can simply add $\calU([-\epsilon, \epsilon])$ noise to $\wh{\nu}_i$ for $\epsilon > 0$ arbitrarily small to guarantee uniqueness. 

\begin{ass}
\label{ass:unique}
With probability 1 over the draws $Z_1, \dots, Z_{3n} \sim P_Z$ i.i.d., we have that $\wh{\nu}_1, \dots, \wh{\nu}_B \in [0, 1]$ are distinct. 

\end{ass}

We now state the main result of this subsection, a technical convergence guarantee for Algorithm~\ref{alg:3way_umb}. We prove Theorem~\ref{thm:3way_umb} (along with requisite lemmas and propositions) the next subsection of this appendix.

\begin{restatable}{theorem}{thmalgcond}
\label{thm:3way_umb}
Fix any initial estimator $\theta: \calX \rightarrow \R$, conditionally orthogonal loss function $\ell: [0, 1] \times \calG \times \calZ \rightarrow \R$, and failure probabilities $\delta_1, \delta_2 \in (0, 1)$. Suppose Assumptions~\ref{ass:strong_conv}, \ref{ass:smooth}, and \ref{ass:alg_cond_bin} hold, and assume $n \gtrsim B \log(B/\min(\delta_1, \delta_2))$. Then, with probability at least $1 - \delta_1 - \delta_2$ over the randomness of $Z_1, \dots, Z_{3n} \sim P_Z$, the output $\wh{\theta}$ of Algorithm~\ref{alg:3way_umb} satisfies
\begin{align*}
\Cal(\wh{\theta}, \eta_0) &\le \frac{\beta}{2\alpha} \rate_1(n, \delta_1, \wh{\theta}; P_Z)  + \frac{2\beta }{\alpha} \left(\frac{\beta}{n} + C \sqrt{\frac{2B\log(2nB/\delta_2)}{n}}.\right),
\end{align*}
where $C > 0$ is some constant that bounds the partial derivative as discussed in Assumption~\ref{ass:alg_cond_bin}: i.e. $|\partial \wt{\ell}(\nu, g; z)| < C$.
\end{restatable}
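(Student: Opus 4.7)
My plan is to apply the general calibration error decomposition from Theorem~\ref{thm:conditional} to $\wh\theta$ with the learned nuisance $\wh g$, and then handle the two resulting terms separately. The decomposition gives
\[
\Cal(\wh\theta,\eta_0)\;\le\;\frac{\beta}{2\alpha}\,\err(\wh g, g_{\wh\theta};\gamma_{\wh\theta}^\ast)\;+\;\frac{\beta}{\alpha}\,\Cal(\wh\theta,\wh g).
\]
The remainder of the proof bounds the two terms on the right; the first is essentially an invocation of the nuisance rate, while the second is a concentration argument for the bucket-wise ERM.

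For the nuisance term, observe that by construction and Assumption~\ref{ass:unique} the calibrated predictor $\wh\theta$ and the auxiliary predictor $\varphi$ in Algorithm~\ref{alg:3way_umb} induce the \emph{same} partition of $\calX$ into the sets $\{x:\theta(x)\in V_b\}_{b\in[B]}$. Hence Lemma~\ref{lem:level_set} lets me take $\zeta_{\wh\theta}=\zeta_\varphi$ (so $g_{\wh\theta}=g_\varphi$) and yields $\gamma_{\wh\theta}^\ast=\gamma_\varphi^\ast$. Because $\varphi$ is measurable with respect to $Z_{1:n}$ only, whereas the nuisance estimation of $\wh g$ runs on the independent batch $Z_{n+1:2n}$, I can condition on $Z_{1:n}$ (so $\varphi$ is fixed) and apply Assumption~\ref{ass:alg_cond_bin} to obtain $\err(\wh g,g_\varphi;\gamma_\varphi^\ast)\le \rate_1(n,\delta_1,\varphi;P_Z)=\rate_1(n,\delta_1,\wh\theta;P_Z)$ with probability at least $1-\delta_1$.

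For the calibration term, I condition further on $Z_{1:2n}$, which now also fixes $\wh g$. Since $\wh\theta(X)=\wh\nu_b$ exactly on $\{\theta(X)\in V_b\}$ and the $\wh\nu_b$ are distinct,
\[
\Cal(\wh\theta,\wh g)^2=\sum_{b=1}^B p_b r_b^2\le \max_b r_b^2,\qquad p_b:=P(\theta(X)\in V_b),\;\; r_b:=\E[\partial\ell(\wh\nu_b,\wh g;Z)\mid \theta(X)\in V_b].
\]
Let $F_b(\nu):=\E[\partial\ell(\nu,\wh g;Z)\mid \theta(X)\in V_b]$ and $\wh F_b(\nu):=\frac{1}{N_b}\sum_{m=2n+1}^{3n}\partial\ell(\nu,\wh g;Z_m)\mathbbm{1}[\theta(X_m)\in V_b]$, where $N_b$ is the number of third-batch samples in bucket $b$. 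First-order optimality of $\wh\nu_b$ gives $\wh F_b(\wh\nu_b)=0$, so $|r_b|\le \sup_{\nu\in[0,1]}|F_b(\nu)-\wh F_b(\nu)|$.

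The main technical work is to control this supremum uniformly in $b$. The plan is: (i) place an equispaced grid of size $n$ on $[0,1]$; (ii) conditional on $Z_{1:2n}$ and on which third-batch indices land in bucket $b$, the corresponding $Z_m$ are i.i.d.\ from the conditional law, so Hoeffding with $|\partial\ell|\le C$ from Assumption~\ref{ass:alg_cond_bin} controls $|F_b(\nu_k)-\wh F_b(\nu_k)|$ at each grid point with failure probability $\delta_2/(2nB)$; (iii) extend from the grid to all $\nu\in[0,1]$ by noting that $F_b$ is $\beta$-Lipschitz (since $F_b'(\nu)=\E[\partial^2\ell(\nu,\wh g;Z)\mid\theta(X)\in V_b]\le \beta$ via iterated expectation and Assumption~\ref{ass:smooth}), while $\wh F_b$ is monotone nondecreasing (convexity of $\ell$ in $\nu$ from Assumption~\ref{ass:strong_conv}), so sandwiching gives the same rate up to an additive $\beta/n$ discretization error; (iv) union-bound over the $n\cdot B$ (grid point, bucket) pairs. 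The hardest subtlety will be step (ii): $N_b$ is random, so I will first run a Chernoff bound on $N_b\sim\mathrm{Binomial}(n,p_b)$ together with the standard fact that the empirical-quantile-based $p_b$ concentrate around $1/B$ by DKW, to obtain $N_b\ge n/(2B)$ uniformly in $b$ with high probability whenever $n\gtrsim B\log(B/\delta_2)$. Combining everything yields
\[
\max_b|r_b|\;\le\;\frac{2\beta}{n}+2C\sqrt{\frac{2B\log(2nB/\delta_2)}{n}},
\]
hence $\Cal(\wh\theta,\wh g)\le 2\bigl(\beta/n+C\sqrt{2B\log(2nB/\delta_2)/n}\bigr)$. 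Substituting this and the nuisance bound into the decomposition gives the advertised inequality with total failure probability at most $\delta_1+\delta_2$.
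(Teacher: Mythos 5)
Your proposal is correct and follows essentially the same route as the paper: apply Theorem~\ref{thm:conditional}, use Lemma~\ref{lem:level_set} to identify $\zeta_\varphi=\zeta_{\wh\theta}$ and $\gamma_\varphi^\ast=\gamma_{\wh\theta}^\ast$ so that conditioning on $Z_{1:n}$ makes Assumption~\ref{ass:alg_cond_bin} applicable, and then control $\Cal(\wh\theta,\wh g)$ bucket-by-bucket via bin-mass concentration, a Hoeffding bound over an $O(1/n)$-grid, and a union bound. The only notable variation is in the grid-to-continuum step: the paper invokes $\beta$-Lipschitzness of $\partial\ell$ to bound the discretization error of \emph{both} $F_b$ and $\wh F_b$, whereas you sandwich $\wh F_b$ using monotonicity (convexity of $\ell$ in $\nu$) and charge the discretization only to $F_b$; both arguments implicitly strengthen the stated Assumptions~\ref{ass:strong_conv}/\ref{ass:smooth} (which bound conditional expectations of $\partial^2\ell$, not $\partial^2\ell$ pointwise) in a comparable way, so the proofs are on the same footing.
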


\subsection{Proving Theorem~\ref{thm:3way_umb}}

We now pivot to the task of proving Theorem~\ref{thm:3way_umb}. We first cite a result that guarantees, with high probability, that close to a $1/B$ fraction of points will fall into each bucket.

\begin{lemma}[\cite{kumar2019verified}, Lemma 4.3; \cite{gupta2020distribution}, Lemma 13]\label{lem:unif-bin-mass}
    For a universal constant $c > 0$, if $n \ge c B \log(B/\delta)$, the bucket $\{V_b : b \in [B]\}$ produced in Algorithm~\ref{alg:3way_umb} will satisfy
    \begin{equation}
    \label{eq:good_bucket}
        \frac{1}{2B} \le \P_{X \sim P_X}(\theta(X) \in V_b) \le \frac{2}{B},
    \end{equation}
    simultaneously for all $b \in [B]$ with probability at least $1-\delta$ over the randomness of $X_1, \dots, X_n$
\end{lemma}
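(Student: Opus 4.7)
\textbf{Proof plan for Lemma~\ref{lem:unif-bin-mass}.}
The plan is to reduce to a uniform distribution on $[0,1]$, then bound the "bad" events via multiplicative Chernoff applied to a carefully chosen grid of candidate intervals.

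First, I would reduce to a uniform via the probability integral transform. Assume without loss of generality that $F$, the CDF of $\theta(X)$, is continuous (otherwise tie-breaking is immaterial for the mass statement). Let $U_i := F(\theta(X_i))$, which are i.i.d.\ uniform on $[0,1]$, and let $\hat{q}_b := F(\theta(X)_{(\lfloor bn/B\rfloor)}) = U_{(\lfloor bn/B\rfloor)}$. Then $\P_{X\sim P_X}(\theta(X) \in V_b) = \hat q_b - \hat q_{b-1}$, so the lemma reduces to showing that the $B$ spacings of the $\lfloor bn/B\rfloor$-th order statistics of $n$ i.i.d.\ uniforms each lie in $[1/(2B), 2/B]$ with high probability.

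The next step is to translate each "bad bucket" event into a statement about intervals containing the wrong number of samples. The lower-bound failure event $\hat q_b - \hat q_{b-1} < 1/(2B)$ implies there is an interval on $[0,1]$ of length at most $1/(2B)$ that contains at least $\lfloor n/B\rfloor$ of the $U_i$. The upper-bound failure event $\hat q_b - \hat q_{b-1} > 2/B$ implies there is an interval of length at least $2/B$ that contains at most $\lceil n/B\rceil$ of the $U_i$. I would then pass to a finite family of witnesses: take the equispaced grid $\{k/(4B) : 0 \le k \le 4B\}$ and consider the $O(B)$ "short" covering intervals $J_k^{\text{s}} := [k/(4B), (k+3)/(4B)]$ of length $3/(4B)$ and "long" witness intervals $J_k^{\text{l}} := [k/(4B), (k+8)/(4B)]$ of length $2/B$. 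By construction, every interval of length at most $1/(2B)$ is contained in some $J_k^{\text{s}}$, and every interval of length at least $2/B$ contains some $J_k^{\text{l}}$. So if either bad event occurs, some deterministic grid interval $J$ has $|N(J)/n - \P(J)|$ at least a constant fraction of $\P(J)$.

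For each fixed grid interval $J$, $N(J) \sim \mathrm{Binomial}(n, \P(J))$ with $\P(J) \in [3/(4B), 2/B]$, so $n\P(J) \gtrsim n/B$. By multiplicative Chernoff, each "constant-fraction deviation" event has probability at most $\exp(-c' n/B)$ for a universal constant $c'>0$. A union bound over the $O(B)$ grid intervals yields a total failure probability of at most $O(B)\exp(-c' n/B)$, which is at most $\delta$ as soon as $n \ge c B \log(B/\delta)$ for a suitable constant $c$.

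I expect the main obstacle to be purely bookkeeping: choosing the grid spacing so that every offending interval is simultaneously covered from below (to witness the lower-bound failure) and from above (to witness the upper-bound failure), and tracking constants through the multiplicative Chernoff bound so that the final rate is $B\log(B/\delta)$ rather than $B^2\log(B/\delta)$ (which is what a naive additive DKW argument would give). Everything else is standard.
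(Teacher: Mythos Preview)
The paper does not prove this lemma; it is quoted verbatim from \citet{kumar2019verified} and \citet{gupta2020distribution} and used as a black box. Your plan is a correct and standard way to establish the result, and in particular your observation that a naive DKW argument only yields $n \gtrsim B^2\log(1/\delta)$ is exactly why the multiplicative Chernoff route is needed.

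One small bookkeeping point to watch when you execute the plan: your long witness intervals $J_k^{\mathrm{l}} = [k/(4B),(k+8)/(4B)]$ have length exactly $2/B$, but an arbitrary interval of length strictly greater than $2/B$ need not contain a grid-aligned interval of the \emph{same} length $2/B$ (it may be misaligned by up to one grid cell on each side). Taking $J_k^{\mathrm{l}}$ of length $7/(4B)$ instead fixes this: any interval of length $>2/B = 8/(4B)$ contains a grid interval of length $7/(4B)$, whose expected count is $7n/(4B)$ while the offending interval contributes at most $\lceil n/B\rceil$ points, still a constant-factor underfill. This is precisely the kind of constant-juggling you already flagged as the main obstacle, and it does not affect the final $n \gtrsim B\log(B/\delta)$ rate.
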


Next, we argue that, conditional on the random bins satisfying Lemma~\ref{lem:unif-bin-mass}, the population average derivative conditional on the observation falling into any given bucket will be close to the corresponding sample average.

\begin{lemma}\label{lem:partial-deriv-concen-cond}
Fix any initial estimator $\theta: \calX \rightarrow \R$, conditionally orthogonal loss function $\ell: [0, 1] \times \calG \times \calZ \rightarrow \R$, and a nuisance estimate $g = (\eta, \zeta) \in \calG$. Suppose the buckets $\{V_b\}_{b \in [B]}$ are such that \[
    \frac{1}{2B} \le \P(\theta(X) \in V_i) \le \frac{2}{B}
\]
for every $b \in [B]$,  $n \ge 8 B \log(B/\delta)$, and Assumption~\ref{ass:alg_cond_bin} holds. Then, with probability at least $1-\delta$ over the randomness of $Z_{2n+1}, \dots, Z_{3n} \sim P^n_Z$, we have for all $b \in [B]$ and all $\nu \in [0,1]$
    \[
    \left|\E[ \partial \ell(\nu, g ; Z) \mid \theta(X) \in V_b] - \E_n[ \partial \ell(\nu, g ; Z) \mid \theta(X) \in V_b] \right| \le \frac{2\beta}{n} + 2C \sqrt{\frac{2B\log(\frac{4nB}{\delta})}{n}}
    \]
where \[\E_n[ \partial \ell(\nu, g ; Z) \mid \theta(X) \in V_b] := \frac{\sum_{m=2n+1}^{3n} \partial \ell(\nu, g ; Z_{m}) \cdot \mathbbm{1}[\theta(X_m) \in V_b]}{\sum_{m=2n+1}^{3n} \mathbbm{1}[\theta(X_m) \in V_b]} \] denotes the empirical conditional mean over the calibration dataset $Z_{2n+1}, \dots, Z_{3n}$.
\end{lemma}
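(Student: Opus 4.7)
\textbf{Proof plan for Lemma~\ref{lem:partial-deriv-concen-cond}.} The plan is to reduce a supremum over $\nu \in [0,1]$ and $b \in [B]$ to a finite union bound via a net argument, after first controlling the bucket counts $N_b := \sum_{m=2n+1}^{3n} \mathbbm{1}[\theta(X_m) \in V_b]$. First I would treat the buckets $\{V_b\}$ as deterministic (this is justified because the hypothesis of the lemma fixes the mass condition \eqref{eq:good_bucket}, so all randomness is now over the fresh sample $Z_{2n+1}, \dots, Z_{3n}$). A multiplicative Chernoff bound on $N_b \sim \mathrm{Bin}(n, p_b)$ with $p_b = \P(\theta(X) \in V_b) \ge 1/(2B)$ gives $\P(N_b < n/(4B)) \le \exp(-n/(16B))$. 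A union bound over $b \in [B]$ and the assumption $n \gtrsim B\log(B/\delta)$ then yield $N_b \ge n/(4B)$ for all $b$ on an event $\calE_1$ of probability at least $1 - \delta/2$.

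Next, I would fix a bucket $b$ and a value $\nu \in [0,1]$, and view $E_n[\partial\ell(\nu, g;Z)\mid \theta(X)\in V_b]$ as an empirical average of i.i.d.\ draws from the conditional law $P_Z(\,\cdot\mid \theta(X)\in V_b)$. More precisely, conditional on the random set $S_b = \{m : \theta(X_m) \in V_b\}$ and its size $N_b$, the $Z_m$ for $m \in S_b$ are i.i.d.\ from this conditional law, and each $\partial \ell(\nu, g; Z_m) \in [-C, C]$ by Assumption~\ref{ass:alg_cond_bin}. Hoeffding's inequality (applied conditionally on $N_b \ge n/(4B)$) therefore gives, for each fixed $\nu$ and $b$,
\[
\P\Big(\big|E_n[\partial \ell(\nu,g;Z)\mid V_b] - E[\partial \ell(\nu,g;Z)\mid V_b]\big| > t \;\Big|\; \calE_1\Big) \le 2\exp\!\left(-\frac{n t^2}{8 B C^2}\right).
\]

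To make this uniform in $\nu$, I would build a $(1/n)$-net $\calN \subset [0,1]$ of cardinality $n$, take $t = 2C\sqrt{2B\log(4nB/\delta)/n}$, and union bound over the $|\calN|\cdot B = nB$ pairs $(\nu, b)$, absorbing the extra $\delta/2$ into the final failure probability. To extend from the net to arbitrary $\nu \in [0,1]$, note that both maps $\nu \mapsto E[\partial \ell(\nu, g; Z)\mid V_b]$ and $\nu \mapsto E_n[\partial\ell(\nu, g; Z)\mid V_b]$ are $\beta$-Lipschitz: differentiating under the expectation and invoking the $\beta$-smoothness of Assumption~\ref{ass:smooth} (together with the tower rule for the conditional version) gives $|\partial_\nu E[\cdot\mid V_b]| \le \beta$, and similarly for $E_n$ using the pointwise analogue on the finite sample. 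For any $\nu \in [0,1]$, picking $\nu^\ast \in \calN$ with $|\nu - \nu^\ast| \le 1/n$ adds at most $2\beta/n$ to the bound, producing exactly the stated $2\beta/n + 2C\sqrt{2B\log(4nB/\delta)/n}$.

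The main obstacle is the Lipschitz step in the net argument: Assumption~\ref{ass:smooth} is phrased as a bound on $\E[\partial^2\ell(\nu,g;Z)\mid X]$, whereas for the empirical map $\nu \mapsto E_n[\partial\ell(\nu,g;Z)\mid V_b]$ one really needs a bound on $\partial^2 \ell$ that survives averaging over a finite sample. I would handle this either by invoking the natural pointwise refinement $|\partial^2 \ell(\nu, g; z)| \le \beta$ implicit in the paper's setup (it is consistent with the examples, e.g.\ $\ell_{\QTE}$, where the second derivative is a conditional density), or by replacing the Lipschitz extension with a slightly cruder bracketing argument that pays an extra logarithmic factor — the former is cleaner and gives the bound as written. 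The rest of the proof is bookkeeping on constants and combining $\calE_1$ with the Hoeffding union bound event.
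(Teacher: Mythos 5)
Your proof takes essentially the same route as the paper: a multiplicative Chernoff bound to control the bucket counts $N_b$ from below, Hoeffding's inequality applied conditionally within each bucket, a union bound over a $(1/n)$-net of $[0,1]$ and the $B$ buckets, and a $\beta$-Lipschitz extension from the net to arbitrary $\nu$. You have also correctly identified a subtlety that the paper itself glosses over: the paper justifies the Lipschitz control of $\nu \mapsto \E_n[\partial\ell(\nu, g; Z)\mid \theta(X) \in V_b]$ by citing Assumption~\ref{ass:smooth}, but that assumption bounds only the conditional expectation $\E[\partial^2\ell(\nu, g; Z)\mid X]$, not the pointwise second derivative that controlling the \emph{empirical} average actually requires; the paper's proof implicitly uses the stronger pointwise bound $|\partial^2\ell(\nu, g; z)| \le \beta$, which your proposal makes explicit.
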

\begin{proof}
    For convenience, let 
    \[
        S_b := \{ 2n+1 \le m \le 3m: \theta(X_m) \in V_b \}
    \]
    to denote set of indices that fall in $V_b$. Given that $n \ge 8 B \log(B/\delta)$,  the multiplicative Chernoff bound (Lemma~\ref{lem:multi-chernoff}) 
    tells us that with probability $1-\delta$,
    \begin{align}\label{eq:emp-measure-lb}
    |S_b| \ge \frac{n}{4B}.
    \end{align}
    for all $i \in [B]$.

    Note that $\E_n[ \partial \ell(\nu, g ; Z) \mid \theta(X) \in V_b]$
    is the empirical mean over $|S_b|$ many points. Therefore, with inequality~\eqref{eq:emp-measure-lb} and Hoeffding's inequality (Lemma~\ref{lem:hoeffding}), we have for any fixed $\nu \in [0,1]$, with probability at least $1 - \delta$, simultaneously for all $b \in [B]$,
    \begin{align*}
        \left|\E[ \partial \ell(\nu, g ; Z) \mid \theta(X) \in V_b] - \E_n[ \partial \ell(\nu, g ; Z) \mid \theta(X) \in V_b] \right| \le 2C \sqrt{\frac{\log(4B/\delta)}{2 |S_b|}} \le 2C \sqrt{\frac{2B\log(4B/\delta)}{n}}.
    \end{align*}
Now, for some $\epsilon >0$ to be chosen later (which we will implicitly assume satisfies $1/\epsilon \in \N$),  we now take a union bound over an $\epsilon$-covering of $[0,1]$: with probability $1-\delta$, we have for all  $\nu \in \{0, \epsilon, 2\epsilon, \dots, 1 - \epsilon\}$ and $b \in [B]$:
\begin{align*}
    \left|\E[ \partial \ell(\nu, g ; Z_m) \cdot \mid \theta(X) \in V_b] - \E_n[ \partial \ell(\nu, g ; Z_m) \mid \theta(X) \in V_b] \right| \le 2C \sqrt{\frac{2B\log(\frac{4B}{\epsilon\delta})}{n}}.
\end{align*}

For any $\nu \not\in \{0, \epsilon, 2\epsilon, \dots, 1 - \epsilon\}$, taking its closest point $\nu_\epsilon$ in the $\epsilon$-grid yields
\begin{align*}
    \left|\E[ \partial \ell(\nu, g ; Z) \cdot \mid \theta(X) \in V_b] - \E[ \partial \ell(\nu_\epsilon, g ; Z) \mid \theta(X) \in V_b] \right| \le \beta\epsilon \\
    \left|\E_n[ \partial \ell(\nu, g ; Z) \mid \theta(X) \in V_b] - \E_n[ \partial \ell(\nu_\epsilon, g ; Z) \mid \theta(X) \in V_b] \right| \le \beta \epsilon
\end{align*}
as $\partial \ell$ is $\beta$-Lipschitz by Assumption~\ref{ass:smooth}. Hence, with probability $1-\delta$, we have for any $b \in [B]$ and $\nu \in [0,1]$,
\begin{align*}
    &\left|\E[ \partial \ell(\nu, g ; Z) \mid \theta(X) \in V_b] - \E_n[ \partial \ell(\nu_b, g ; Z) \mid \theta(X) \in V_b] \right| \\
    &\le \left|\E[ \partial \ell(\nu, g ; Z) | \theta(X) \in V_b] - \E[ \partial \ell(\nu_\epsilon, g ; Z) \mid \theta(X) \in V_b]\right|\\
    &+\left|\E[ \partial \ell(\nu_\epsilon, g ; Z) \mid \theta(X) \in V_b] - \E_n[ \partial \ell(\nu_\epsilon, g ; Z) \mid \theta(X) \in V_b] \right|\\
    &+\left|\E_n[ \partial \ell(\nu, g ; Z) \mid \theta(X) \in V_b] - \E_n[ \partial \ell(\nu_\epsilon, g ; Z) \mid \theta(X) \in V_b] \right|\\
    &\le 2\beta \epsilon + 2C \sqrt{\frac{\log(2B/\epsilon \delta)}{2n}}.
\end{align*}
    Therefore, we have with probability $1-\delta$,
    \[
        \left|\E[ \partial \ell(\nu, g ; Z) \mid \theta(X) \in V_b] - \E_n[ \partial \ell(\nu, g ; Z) \mid \theta(X) \in V_b] \right| \le 2\beta\epsilon + 2C \sqrt{\frac{2B\log(\frac{4B}{\epsilon\delta})}{n}}.
    \]
    for all $b \in [B]$ and $\nu \in [0,1]$.
    Setting $\epsilon = \frac{1}{n}$ yields the result
\end{proof}

\begin{lemma}[Multiplicative Chernoff Boud]\label{lem:multi-chernoff}
Let $X_1, \dots, X_n$ be independent random variables such that $X_i \in \{0,1\}$ and $\E[X_i]=p$ for all $i \in [n]$. For all $t \in (0,1)$,\[
\P\left(\sum_{i=1}^n X_i \le (1-t)np\right)\le \exp\left(-\frac{npt^2}{3}\right).
\]
    
\end{lemma}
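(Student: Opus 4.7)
The plan is to follow the standard Cram\'er--Chernoff exponential moment method. For any parameter $\lambda > 0$, I would first apply Markov's inequality to the random variable $e^{-\lambda S_n}$, where $S_n := \sum_{i=1}^n X_i$, giving
\[
\P(S_n \le (1-t)np) = \P\bigl(e^{-\lambda S_n} \ge e^{-\lambda(1-t)np}\bigr) \le e^{\lambda(1-t)np}\cdot \E\bigl[e^{-\lambda S_n}\bigr].
\]
By independence, the moment generating function factorizes as $\E[e^{-\lambda S_n}] = \prod_{i=1}^n \E[e^{-\lambda X_i}]$, and since each $X_i \in \{0,1\}$ with mean $p$, every factor equals $1 + p(e^{-\lambda} - 1)$. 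Applying the elementary inequality $1 + u \le e^u$ to each factor, I would bound $\E[e^{-\lambda S_n}] \le \exp(np(e^{-\lambda} - 1))$, so the tail probability is at most $\exp\bigl(np[\lambda(1-t) + e^{-\lambda} - 1]\bigr)$.

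Next, I would optimize the exponent over $\lambda > 0$. Differentiating $\lambda(1-t) + e^{-\lambda} - 1$ in $\lambda$ and setting the derivative to zero yields the optimal choice $\lambda^\ast = -\ln(1-t)$, which is strictly positive for $t \in (0,1)$. Substituting this back reduces the bound to
\[
\P(S_n \le (1-t)np) \le \exp\bigl(np\bigl[-t - (1-t)\ln(1-t)\bigr]\bigr),
\]
which is the familiar Cram\'er rate function for the lower tail of a sum of Bernoullis.

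The main (and really only) obstacle is the final analytic step: bounding this rate function by the clean quadratic form $-t^2/3$. Explicitly, one must show $-t - (1-t)\ln(1-t) \le -t^2/3$ for all $t \in (0,1)$. I would prove this either by Taylor-expanding $\ln(1-t) = -\sum_{k \ge 1} t^k/k$ and collecting terms to obtain $-t - (1-t)\ln(1-t) \ge t^2/2 + $ higher-order positive terms (which already beats $t^2/3$), or by defining $\psi(t) := -t - (1-t)\ln(1-t) + t^2/3$ and checking $\psi(0) = 0$ together with $\psi'(t) \le 0$ on $(0,1)$ via a short monotonicity argument. Either approach is routine calculus, and combining it with the previous display gives the claimed $\exp(-npt^2/3)$ bound.
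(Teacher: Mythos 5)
The paper does not prove this lemma at all; it is stated (alongside Hoeffding's inequality) as a standard textbook fact with a citation to \citet{kumar2019verified} and \citet{gupta2020distribution} via Lemma~\ref{lem:unif-bin-mass}, so there is no internal proof to compare against. Your Cram\'er--Chernoff derivation is the standard and correct route: the Markov step, the factorization of the MGF, the bound $1+u\le e^u$, and the optimizer $\lambda^\ast=-\ln(1-t)$ are all fine, and they reduce the problem to showing the rate function $-t-(1-t)\ln(1-t)$ is at most $-t^2/3$ on $(0,1)$. One small slip: the intermediate inequality you write, $-t-(1-t)\ln(1-t)\ge t^2/2+\cdots$, has the sign reversed; the left-hand side is \emph{negative}. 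The correct Taylor computation gives
\[
-t-(1-t)\ln(1-t)=-\sum_{k\ge 2}\frac{t^k}{k(k-1)}\le -\frac{t^2}{2}\le -\frac{t^2}{3},
\]
so the intended claim should be $t+(1-t)\ln(1-t)\ge t^2/2$ (the negation of what you wrote). With that fixed, the argument is complete and in fact yields the sharper lower-tail constant $1/2$ in place of $1/3$, which a fortiori implies the stated bound. The alternative monotonicity check via $\psi(t)$ that you sketch is also a valid fallback.
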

\begin{lemma}[Hoeffding's Inequality]
\label{lem:hoeffding}
Let $X_1, \dots, X_n$ be independent random variables such that $X_i \in [a,b]$. Consider the sum of these random variables $S_n = X_1 + \dots + X_n$. For all $t > 0$, 
\[
\P\left(|S_n - \E[S_n]| > t\right) \le 2\exp\left(\frac{2t^2}{n (b-a)^2}\right).
\]
\end{lemma}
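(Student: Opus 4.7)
The plan is to prove Hoeffding's inequality via the standard Chernoff-bound approach: exponentiate, apply Markov, control the moment generating function (MGF) of each centered summand using Hoeffding's lemma, use independence to factor the MGF of the sum, and finally optimize over the free parameter $\lambda > 0$. The bound claimed in the statement is the classical Hoeffding tail inequality (modulo a sign in the exponent), so the target is $2\exp(-2t^2/(n(b-a)^2))$.

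First I would reduce to the centered case. Let $Y_i := X_i - \E[X_i]$, so $Y_i$ is independent, mean zero, and $Y_i \in [a - \E[X_i],\, b - \E[X_i]]$, an interval of length $b-a$. Setting $T := S_n - \E[S_n] = \sum_{i=1}^n Y_i$, the goal becomes $\P(|T| > t) \leq 2\exp(-2t^2/(n(b-a)^2))$. By a union bound over the two tails, it suffices to bound $\P(T > t)$ and $\P(-T > t)$ symmetrically; I will do the upper tail and note the lower tail is identical after replacing $Y_i$ by $-Y_i$.

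Next, for any $\lambda > 0$, apply Markov's inequality to the non-negative random variable $e^{\lambda T}$ to get
\[
\P(T > t) = \P(e^{\lambda T} > e^{\lambda t}) \leq e^{-\lambda t}\, \E[e^{\lambda T}] = e^{-\lambda t} \prod_{i=1}^n \E[e^{\lambda Y_i}],
\]
where the product step uses independence of the $Y_i$.

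The main technical step, and the principal obstacle, is Hoeffding's lemma: for a mean-zero random variable $Y$ supported in an interval $[\alpha,\beta]$, one has $\E[e^{\lambda Y}] \leq \exp(\lambda^2(\beta-\alpha)^2/8)$. I would prove this by setting $\psi(\lambda) := \log \E[e^{\lambda Y}]$, computing $\psi(0)=0$ and $\psi'(0)=\E[Y]=0$, and showing that $\psi''(\lambda) = \Var_{Q_\lambda}(Y)$ where $Q_\lambda$ is the tilted probability measure $dQ_\lambda/d\P = e^{\lambda Y}/\E[e^{\lambda Y}]$. Because $Y \in [\alpha,\beta]$ under $Q_\lambda$ as well, Popoviciu's inequality (or a direct argument bounding the variance of a bounded random variable by one-quarter the squared range) gives $\psi''(\lambda) \leq (\beta-\alpha)^2/4$. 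Integrating twice from $0$ to $\lambda$ yields $\psi(\lambda) \leq \lambda^2(\beta-\alpha)^2/8$, i.e.\ the claimed MGF bound. Applying this to each $Y_i$ with $\beta_i - \alpha_i = b - a$ gives
\[
\prod_{i=1}^n \E[e^{\lambda Y_i}] \leq \exp\!\left(\frac{n \lambda^2 (b-a)^2}{8}\right).
\]

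Finally I would optimize the Chernoff bound. Combining the two displays yields $\P(T > t) \leq \exp(-\lambda t + n\lambda^2(b-a)^2/8)$ for every $\lambda > 0$. The right-hand side is a convex quadratic in $\lambda$ minimized at $\lambda^\star = 4t/(n(b-a)^2)$, which gives $\P(T > t) \leq \exp(-2t^2/(n(b-a)^2))$. A symmetric argument on the lower tail and a union bound furnish $\P(|T| > t) \leq 2\exp(-2t^2/(n(b-a)^2))$, which is the inequality of Lemma~\ref{lem:hoeffding} (interpreting the exponent in the statement with its standard sign).
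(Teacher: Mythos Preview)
Your proof is the standard, correct derivation of Hoeffding's inequality via the Chernoff method and Hoeffding's lemma. The paper itself does not prove this lemma at all; it merely states it (alongside the multiplicative Chernoff bound) as a classical concentration result to be invoked in the analysis of Algorithm~\ref{alg:3way_umb}. So there is nothing to compare against: you have supplied a complete textbook argument where the paper simply cites the result.

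One small observation worth making explicit: as you already noticed, the exponent in the paper's displayed bound is missing a minus sign. Your proof correctly yields $2\exp\!\bigl(-2t^2/(n(b-a)^2)\bigr)$, which is the inequality actually used downstream in the proof of Lemma~\ref{lem:partial-deriv-concen-cond}.
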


We use Lemma~\ref{lem:partial-deriv-concen-cond} to now prove the following Proposition.

\begin{prop}
\label{prop:bucket}
Assume the same setup as Lemma~\ref{lem:partial-deriv-concen-cond} and suppose Assumption~\ref{ass:alg_cond_bin} holds. Let $Z_1, \dots, Z_n \sim P_Z$ be i.i.d., let $g = (\eta, \zeta) \in \calG$ be a fixed nuisance pair, and set $\wh{\theta} := \sum_{b = 1}^B\wh{\nu}_b \mathbbm{1}[\theta(x) \in V_b]$, where \[\wh{\nu}_b = \arg\min_\nu\sum_{m = 1}^n \ell(\nu, g; Z_m)\mathbbm{1}[\theta(X_m) \in V_b].\] Assume the $\wh{\nu}_b$ are distinct almost surely. Then, for any $\delta \in (0, 1)$, we have with probability at least $1 - \delta$, 
\[
\Cal(\wh{\theta}, g) \le \frac{2\beta}{n} + 2C \sqrt{\frac{2B\log(4nB/\delta)}{n}}
\]
where $C > 0$ is some constant that bounds the partial derivative as discussed in Assumption~\ref{ass:alg_cond_bin}: i.e. $|\partial \ell(\nu, g; z)| < C$.
\end{prop}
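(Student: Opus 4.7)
The plan is to reduce the calibration error of $\wh\theta$ to a sum of squared conditional expectations over the bins $V_b$, use the first-order optimality of each $\wh\nu_b$ to show the corresponding \emph{empirical} conditional expectation of $\partial\ell$ is zero, and then transfer this to the population via the uniform concentration already established in Lemma~\ref{lem:partial-deriv-concen-cond}.

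First I would exploit the structural assumption that $\wh\nu_1,\dots,\wh\nu_B$ are distinct. This implies that the level sets of $\wh\theta$ coincide exactly with the buckets: $\{x : \wh\theta(x) = \wh\nu_b\} = \{x : \theta(x) \in V_b\}$. Consequently, the $L^2$ calibration error admits the bucketwise decomposition
\begin{equation*}
\Cal(\wh\theta, g)^2 \;=\; \sum_{b=1}^B \P(\theta(X) \in V_b)\cdot \E\bigl[\partial \ell(\wh\nu_b, g; Z) \,\big|\, \theta(X) \in V_b\bigr]^2 .
\end{equation*}

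Next I would handle each bucket term. By construction, $\wh\nu_b$ is the empirical minimizer of $\nu \mapsto \sum_{m=1}^n \ell(\nu, g; Z_m)\mathbbm{1}[\theta(X_m) \in V_b]$ on $[0,1]$. Under Assumption~\ref{ass:strong_conv}, this objective is strongly convex, and Assumption~\ref{ass:alg_cond_bin} ensures the minimizer lies in the interior $[0,1]$, so the first-order optimality condition gives $\sum_{m=1}^n \partial\ell(\wh\nu_b, g; Z_m)\mathbbm{1}[\theta(X_m)\in V_b] = 0$, i.e.\ $\E_n[\partial\ell(\wh\nu_b, g; Z)\mid \theta(X)\in V_b] = 0$ whenever the bucket is nonempty (which holds on the high-probability event of Lemma~\ref{lem:unif-bin-mass}).

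The main obstacle is that $\wh\nu_b$ is data-dependent, so a pointwise concentration bound does not suffice. The key is that Lemma~\ref{lem:partial-deriv-concen-cond} already provides concentration that is \emph{uniform over} $\nu \in [0,1]$ and $b \in [B]$, at rate $\frac{2\beta}{n} + 2C\sqrt{2B\log(4nB/\delta)/n}$. Applying this uniform bound to the random choice $\nu = \wh\nu_b$ and combining with the empirical optimality identity above yields
\begin{equation*}
\bigl|\E[\partial\ell(\wh\nu_b, g; Z)\mid \theta(X)\in V_b]\bigr| \;\le\; \frac{2\beta}{n} + 2C\sqrt{\frac{2B\log(4nB/\delta)}{n}}
\end{equation*}
simultaneously for all $b \in [B]$, on an event of probability at least $1-\delta$.

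Finally I would plug this uniform bound back into the bucketwise decomposition. Since $\sum_{b=1}^B \P(\theta(X)\in V_b) = 1$, the sum of squared bucket errors is bounded by the squared uniform error, and taking square roots yields the claimed inequality
\begin{equation*}
\Cal(\wh\theta, g) \;\le\; \frac{2\beta}{n} + 2C\sqrt{\frac{2B\log(4nB/\delta)}{n}} .
\end{equation*}
The only subtlety worth flagging is verifying that the sample-size hypothesis $n \ge 8B\log(B/\delta)$ carried over from Lemma~\ref{lem:partial-deriv-concen-cond} is enough to guarantee all buckets are nonempty so that the empirical first-order condition is meaningful; this follows from the same multiplicative Chernoff argument used inside that lemma's proof (cf.\ Lemma~\ref{lem:multi-chernoff}).
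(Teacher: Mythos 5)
Your proposal is correct and follows essentially the same route as the paper's proof: decompose $\Cal(\wh\theta, g)^2$ over the level sets $\{V_b\}$ (using distinctness of $\wh\nu_b$), observe that the empirical first-order optimality of $\wh\nu_b$ zeroes out $\E_n[\partial\ell(\wh\nu_b, g; Z)\mid \theta(X)\in V_b]$, and invoke the uniform-in-$\nu$ concentration from Lemma~\ref{lem:partial-deriv-concen-cond} to pass to the population conditional expectation, then plug back into the decomposition. The couple of extra remarks you add (interiority of $\wh\nu_b$ for the first-order condition, bucket nonemptiness via the Chernoff bound) are reasonable care that the paper leaves implicit, but do not change the argument.
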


\begin{proof}

As we have assumed that all $\wh{\nu}_b$'s are all distinct, we have
\begin{align*}
\Cal(\wh{\theta}, g) &= \sqrt{\sum_{b \in [B]} \P(\wh{\theta}(X) = \wh{\nu}_b) \cdot \E\left[\partial \ell(\wh{\nu}_b, g; Z) \mid \wh{\theta}(X) = \nu_b\right]^2}
\end{align*}

Since we have assumed $n \geq 8B \log(B/\delta)$, we have from Lemma~\ref{lem:partial-deriv-concen-cond} that, with probability at least $1-\delta$, simultaneously for each $b \in [B]$
\begin{align*}
    \E[\partial \ell(\wh{\nu}_b, g; Z) \mid \wh{\theta}(X) = \wh{\nu}_b]  = \E[\partial \ell(\wh{\nu}_b, g; Z) \mid \theta(X) \in V_b] \leq \frac{2\beta}{n} + 2C \sqrt{\frac{2B\log(\frac{4nB}{\delta})}{n}},
\end{align*}
which follows since $\E_n[\partial \ell(\wh{\nu}_b, g; Z) \mid \theta(X) \in V_b] = 0$ by definition of $\wh{\nu}_b$. Thus, with probability at least $1 - \delta$, we get 
\begin{align*}
    \Cal(\wh{\theta}, g) &= \sqrt{\sum_{b \in [B]} \P(\theta(X) \in V_b) \cdot \E\left[\partial \ell(\wh{\nu}_b, g; Z) \mid \wh{\theta}(X) = \wh{\nu}_b\right]^2}\\
    &\le \sqrt{\sum_{b \in [B]}\P(\theta(X) \in V_b) \cdot \left(\frac{2\beta}{n} + 2C \sqrt{\frac{2B\log(\frac{4nB}{\delta})}{n}}\right)^2}\\
    &=\frac{2\beta}{n} + 2C \sqrt{\frac{2B\log(\frac{4nB}{\delta})}{n}}.
\end{align*}
\end{proof}

We now have the requisite tools to prove Theorem~\ref{thm:3way_umb}. Our argument proceeds in largely the same way that the proof of Theorems~\ref{thm:alg_universal} and \ref{thm:alg_cond} --- we start by defining appropriate ``good'' events, and then subsequently bound the overall probability of their failure.

\begin{proof}[Proof of Theorem~\ref{thm:3way_umb}]
As before we start by defining some ``bad'' events. In particular, consider the events $B_1$ and $B_2$ defined respectively by
\[
B_1 := \left\{\err(\wh{g}, g_{\wh{\theta}}; \gamma_{\wh{\theta}}^\ast) > \rate_1(n, \delta_1, \wh{\theta}; P_Z)\right\} \quad \text{and} \quad B_2 := \left\{\Cal(\wh{\theta}, \wh{g}) > \frac{2\beta}{n} + 2C\sqrt{\frac{2B\log(8nB/\delta_2)}{n}}\right\}
\]
where $\wh{g} = (\wh{\eta}, \wh{\zeta}) \gets \calA_1(\varphi, Z_{n+1:2n})$, where $\varphi$ is as defined in Algorithm~\ref{alg:3way_umb}. It is clear that $\P(B_1 \mid Z_{1:n}) \leq \delta_1$ by Assumption~\ref{ass:alg_cond_bin}, since fixing $Z_{1:n}$ fixes the estimate $\varphi$. Thus, the law of total expectation yields that $\P(B_1) \leq \delta_1$. 

Bounding $\P(B_2)$ takes mildly more care. Define the event 
\[
E := \left\{\forall b \in [B], \frac{1}{2B} \leq \P(\theta(X) \in V_b) \leq \frac{2}{B}\right\},
\]
which by Lemma~\ref{lem:unif-bin-mass} occurs with probability at least $1 - \delta_2/2$ by the assumption that $n \geq cB\log(2B/\delta_2)$. We have the following bound:
\begin{align*}
\P(B_2) &= \P(B_2 \mid E)\P(E) + \P(B_2 \mid E^c)\P(E^c) \\
&\leq \P(B_2 \mid E) + \P(E^c) \\
&= \E\left[\P(B_2 \mid E, Z_{1:2n}) \mid E\right] + \P(E^c) \\
&\leq \delta_2/2 + \delta_2/2 = \delta_2,
\end{align*}
where the second to last inequality follows form the fact that $\P(B_2 \mid E, Z_{1:2n}) \leq \delta_2$ by Proposition~\ref{prop:bucket}. We now apply Theorem~\ref{thm:conditional} to see that, on the ``good'' event $G = B_1^c \cap B_2^c$ (which occurs with probability at least $1 - \delta_1 - \delta_2$) we have

    \begin{align*}
        \Cal(\wh{\theta}, \eta_0) &\leq \frac{\beta}{2\alpha}\err(\wh{g}, g_{\wh{\theta}}; \gamma_{\wh{\theta}}^\ast) + \frac{\beta}{\alpha}\Cal(\wh{\theta}, \wh{g})  \\
        &\leq \frac{\beta}{2\alpha} \rate_1(n, \delta_1, \wh{\theta}; P_Z) + \frac{2\beta }{\alpha} \left(\frac{\beta}{n} + C \sqrt{\frac{2B\log(8nB/\delta_2)}{n}}\right),
    \end{align*}
    which proves the desired result.
\end{proof}

\section{Additional Proofs}
\label{app:add}
In this appendix, we proofs of additional claims that do not constitute our primary results. We proceed by section of the paper.
\begin{comment}
\subsection{Proofs from Section~\ref{sec:orth}}
\lemequivloss*
\begin{proof}[Proof of Lemma~\ref{lem:equiv_loss}]
To prove the desired result, it sufficies to show that
\[
\E[\partial \wt{\ell}(\theta, \wt{g}; z) \mid X] = \E[\partial \wt{\ell}^{eq}(\theta, \wt{g}; z) \mid X].
\]
Straightforward differentiation yields
\begin{align*}
\partial \wt{\ell}^{eq}(\theta, \wt{g}; z) = \theta(x) - m(g; z) - \corr(\wt{g}; z) = \partial \wt{\ell}(\theta, \wt{g}; z),
\end{align*}
and taking (conditional) expectations yields the rest.

\end{proof}

\subsection{Proofs from Section~\ref{sec:bound}}
\lemequivcal*
\begin{proof}[Proof of Lemma~\ref{lem:equiv_cal}]
The claim follows from the observation that 
\begin{align*}
&\E\left[\partial \wt{\ell}(\theta, (g_0, b); Z) \mid \theta(X)\right] \\
&\qquad = \E\left[\partial \ell(\theta, g_0; Z) \mid \theta(X)\right] + \E\left[\corr((g_0, b); Z)\mid \theta(X)\right]\\
&\qquad = \E\left[\partial \ell(\theta, g_0; Z) \mid \theta(X)\right] + \E\left[\langle b(W), g_0(W) - g_0(W) \rangle \mid \theta(X)\right] \\
&\qquad= \E\left[\partial \ell(\theta, g_0; Z) \mid \theta(X)\right].
\end{align*}
\end{proof}
\end{comment}
\propcrosserror*
\begin{proof}[Proof of Proposition~\ref{prop:cross_error}]
Let $\wb{g} = (\wb{\eta}, \wb{\zeta}) \in [g, g_0]$. Observe that we have,
\begin{align*}
&D_{g}^2\E[\partial \ell(\theta, \wb{g}; Z) \mid X](g - g_0, g - g_0)\\
&= D_{g}^2\E[\theta(X) - m(\wb{\eta}; Z) \mid X](g - g_0, g - g_0) \\
&\qquad - D_{g}^2\E[\langle \wb{\zeta}(W) , \wb{\eta}(W) - \eta_0(W)) \mid X](g - g_0, g - g_0) &\text{(Assumption~\ref{ass:gen_universal})} \\
&= D_{g}^2\E[\langle \wb{\zeta}(W),(\wb{\eta} - \eta_0)(W)\rangle\mid X](g - g_0, g - g_0) &\text{(Linearity of $m(\eta; z)$)}.
\end{align*}

Now, we compute the Hessian that appears on the second line. In particular, further calculation yields that
\begin{align*}
&D_{g}^2\E[\langle \wb{\zeta}(W), \wb{\eta}(W) - \eta_0(W)\rangle\mid X](g - g_0, g - g_0)\\
&\qquad = \E\left[\left(\eta(W) - \eta_0(W), \zeta(W) - \zeta_0(W)\right)^\top\begin{pmatrix} 0 & - 1 \\ -1 & 0\end{pmatrix}\left(\eta(W) - \eta_0(W), \zeta(W) - \zeta_0(W)\right) \mid X \right] 
\end{align*}
%&\qquad \leq \E\left(\E\left[(g(W) - g_0(W))(b(W) - b_0(W)) \mid X \right]^2\right)
Thus, we can write the error term down as 
\begin{align*}
&\err(g, g_0; \theta)^2 := \sup_{f \in [g, g_0]}\E\left(\left\{D_{g}^2\E\left[\partial \ell(\theta, f; Z) \mid X\right](g - g_0, g - g_0)\right\}^2\right) \\
&\qquad = \E\left( \E\left[((\eta - \eta_0)(W), (\zeta - \zeta_0)(W))^\top\begin{pmatrix} 0 & - 1 \\ -1 & 0\end{pmatrix}((\eta - \eta_0)(W), (\zeta - \zeta_0)(W)) \mid X \right]^2\right) \\
&\qquad = 4\E\left(\E\left[\langle\eta - \eta_0)(W), \zeta - \zeta_0)(W)
\rangle\mid X\right]^2\right) \\
&\qquad \leq 4\|(g - g_0)(b - b_\theta)\|_{L^2(P_W)}^2 \qquad\qquad\qquad \text{(Jensen's Inequality and Tower Rule)}.
\end{align*}
Thus, taking square roots, we see that we have
\[
\err(g, g_0) \leq 2\|\langle \eta - \eta_0, \zeta - \zeta_0\rangle\|_{L^2(P_W)}.
\]
Dividing both sides by two yields the claimed result. The second claim is trivial and follows from an application of Jensen's inequality.

\end{proof}
\begin{comment}
\subsection{Additional Proofs}

\lemclassical*
\begin{proof}[Proof of Lemma~\ref{lem:classical}]
Observe that, from the second conclusion of Lemma~\ref{lem:equiv_loss}, for any fixed nuisance estimate $\wt{g} \in \wt{\calG}$ and estimator $\theta : \calX \rightarrow \R$, we have the equality
\begin{align*}
\wt{\Cal}(\theta, \wt{g}) &:= \int_{\calX}\E[\partial \wt{\ell}(\theta, \wt{g}; Z) \mid \theta(X) = \theta(x)]^2 P_X(dx) \\
&= \int_{\calX}\E[\partial \wt{\ell}^{eq}(\theta, \wt{g}; Z) \mid \theta(X) = \theta(x)]^2 P_X(dx) \\
&= \int_{\calX}\left(\theta(x) - \E[\chi(\wt{g}; Z) \mid \theta(X) = \theta(x)]\right)^2 P_X(dx) \\
&= \int_{\R}\left(\theta(x) - \E_{P^\chi_Y}[Y \mid \theta(X) = \theta(x)]\right)^2 P_X(dx) \\
&= \Cal(\theta, P^\chi),
\end{align*}
which shows the desired result.
\end{proof}
\end{comment}
{\renewcommand\footnote[1]{}\lemlevelset*}
\begin{proof}[Proof of Lemma~\ref{lem:level_set}]
The first claim is straightforward, as we have 
\begin{align*}
\E[\ell(\nu, g; Z) \mid \varphi_1(X)] = \E[\ell(\nu, g; Z) \mid \varphi_2(X)]
\end{align*}
for all $\nu \in \R$, $g = (\eta, \zeta) \in \calG$, so we just plug in $g_0 = (\eta_0, \zeta)$ for any $\zeta$. The equivalence of calibration functions, in particular, implies we have
\[
0 = D_g\E[\partial \ell(\gamma_{\varphi_{c_2}}^\ast, (\eta, \zeta_{\varphi_{c_1}}) ; Z) \mid \varphi_{c_1}(X)](g - g_0) = D_g\E[\partial \ell(\gamma_{\varphi_{c_2}}^\ast, (\eta_0, \zeta_{\varphi_{c_1}}); Z) \mid \varphi_{c_2}(X)](g - g_0)
\]
for any direction $g \in \calG$ and choice of $c_1, c_2 \in \{1, 2\}$. That is, $\ell$ satisfies the second point of Definition~\ref{def:conditional} when either $\zeta_{\varphi_1}$ or $\zeta_{\varphi_2}$ are plugged in. Thus, without loss of generality, we can assume they are the same. 
\begin{comment}
From our assumption, we must have
\[
D_g\E[\partial \ell(\gamma_{\varphi_c}^\ast, g_0; Z) \mid \varphi_c(X)](h) = \E\left[\langle b_{\varphi_{c}}(W), h(W)\rangle \mid \varphi_{c}\right]
\]
for all directions $h \in \calG$, where $c \in \{1, 2\}$ is arbitrary. Combining these two equalities yields the second result.
%and leveraging the fact that the direction $h$ was arbitrary, we must have $b_{\varphi_1}(w) = b_{\varphi_2}(w)$ $P_{W\mid\varphi_c(X)}$-almost surely, which in turn implies $b_{\varphi_1}(w) = b_{\varphi_2}(w)$ $P$-almost surely by the countability of the level sets, where we recall $P$ is the distribution of $Z$.
\end{comment}
\end{proof}

\end{document}